\theoremstyle{plain}
\newtheorem{theorem}{Theorem}[section]
\newtheorem{proposition}[theorem]{Proposition}
\theoremstyle{definition}
\newtheorem{definition}[theorem]{Definition}
\theoremstyle{remark}
\definecolor{stdblue}{HTML}{D9E0F3}
\definecolor{bestgray}{HTML}{EFEFEF}
\icmltitlerunning{TopoTune: A Framework for Generalized Combinatorial Complex Neural Networks}
\begin{document}

\twocolumn[
\icmltitle{TopoTune: A Framework for Generalized \\
            Combinatorial Complex Neural Networks}



\icmlsetsymbol{equal}{*}

\begin{icmlauthorlist}
\icmlauthor{Mathilde Papillon}{yyy}
\icmlauthor{Guillermo Bernárdez}{yyy}
\icmlauthor{Claudio Battiloro}{equal,comp}
\icmlauthor{Nina Miolane}{equal,yyy}

\end{icmlauthorlist}

\icmlaffiliation{yyy}{University California Santa Barbara, USA}
\icmlaffiliation{comp}{Harvard University, USA}

\icmlcorrespondingauthor{Mathilde Papillon}{papillon@ucsb.edu}

\icmlkeywords{Machine Learning, ICML}

\vskip 0.3in
]



\printAffiliationsAndNotice{\icmlEqualContribution} 

\begin{abstract}
Graph Neural Networks (GNNs) effectively learn from relational data by leveraging graph symmetries. However, many real-world systems---such as biological or social networks---feature multi-way interactions that GNNs fail to capture. Topological Deep Learning (TDL) addresses this by modeling and leveraging higher-order structures, with Combinatorial Complex Neural Networks (CCNNs) offering a general and expressive approach that has been shown to outperform GNNs. However, TDL lacks the principled and standardized frameworks that underpin GNN development, restricting its accessibility and applicability.
To address this issue, we introduce Generalized CCNNs (GCCNs), a simple yet powerful family of TDL models that can be used to systematically transform any (graph) neural network into its TDL counterpart. We prove that GCCNs generalize and subsume CCNNs, while extensive experiments on a diverse class of GCCNs show that these architectures consistently match or outperform CCNNs, often with less model complexity. In an effort to accelerate and democratize TDL, we introduce TopoTune, a lightweight software for defining, building, and training GCCNs with unprecedented flexibility and ease.
\end{abstract}


\section{Introduction}

Graph Neural Networks (GNNs)~\citep{scarselli2008graph,corso2024gnns} have demonstrated remarkable performance in several relational learning tasks by incorporating prior knowledge through graph structures \citep{kipf2017semi,zhang2018link}. However, 
constrained by the pairwise nature of graphs, GNNs are limited in their ability to capture and model higher-order interactions---crucial in complex systems like particle physics, social interactions, or biological networks \citep{lambiotte2019networks}. \textit{Topological Deep Learning} (TDL) \citep{bodnar2023thesis,battilorothesis} precisely emerged as a framework that naturally encompasses multi-way relationships, leveraging beyond-graph combinatorial topological domains such as simplicial and cell complexes, or hypergraphs~\citep{papillon2023architectures}.
\footnote{Simplicial and cell complexes model \textit{specific} higher-order interactions organized \textit{hierarchically}, while hypergraphs model \textit{arbitrary} higher-order interactions but \textit{without any hierarchy}.}

In this context, \citet{hajij2023tdl, hajij2024topologicalbook} have recently introduced \textit{combinatorial complexes}, fairly general objects that are able to model \textit{arbitrary} higher-order interactions along with a \textit{hierarchical} organization among them--hence generalizing (for learning purposes) most of the combinatorial topological domains within TDL, including graphs. The elements of a combinatorial complex are \textit{cells}, being nodes or groups of nodes, which are categorized by \textit{ranks}. The simplest cell, a single node, has rank zero. Cells of higher ranks define relationships between nodes: rank one cells are edges, rank two cells are faces, and so on. \citet{hajij2023tdl} also proposes \textit{Combinatorial Complex Neural Networks} (CCNNs), deep learning architectures that leverage the versatility of combinatorial complexes to naturally model higher-order interactions. For instance, consider the task of predicting the solubility of a molecule from its structure. GNNs model molecules as graphs, thus considering atoms (nodes) and bonds (edges) \citep{gilmer2017}. By contrast, CCNNs model molecules as combinatorial complexes, hence considering atoms (nodes, i.e., cells of rank zero), bonds (edges, i.e., cells of rank one), and also important higher-order structures such as rings or functional groups (i.e., cells of rank two)~\citep{battiloro2024n}.

\begin{figure*}[!ht]
    \centering
    \includegraphics[width=0.87 \textwidth]{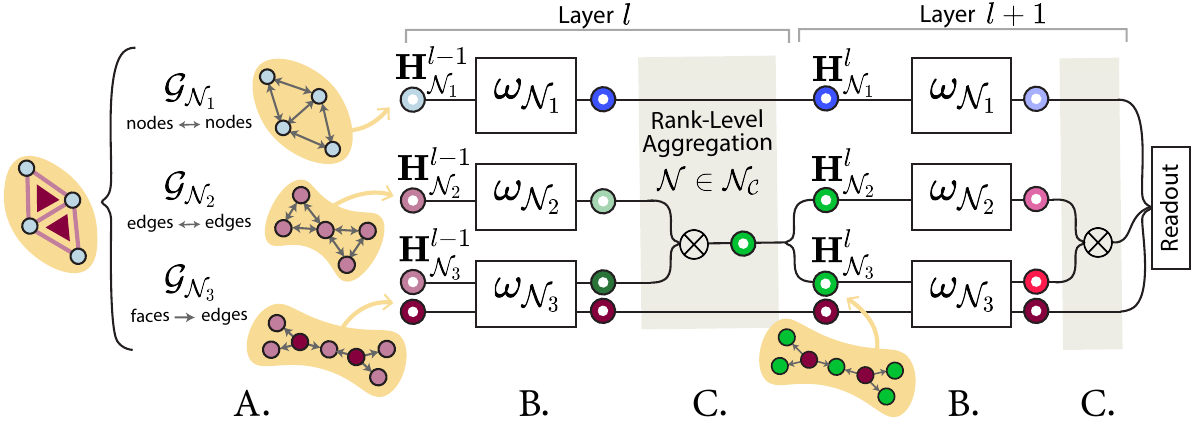}
	\caption{\textbf{Generalized Combinatorial Complex Network (GCCN).} The input complex $\mathcal{C}$ has neighborhoods $\mathcal{N_C} = \{\mathcal{N}_1, \mathcal{N}_2, \mathcal{N}_3$\}. \textbf{A.} The complex is expanded into three augmented Hasse graphs $\mathcal{G}_{\mathcal{N}_i}$, $i=\{1,2,3\}$, each with features $H_{\mathcal{N}_i}$ represented as a colored disc. \textbf{B.} A GCCN layer dedicates one base architecture $\omega_{\mathcal{N}_i}$ (GNN, Transformer, MLP, etc.) to each neighborhood. 
    \textbf{C.} The output of all the architectures $\omega_{\mathcal{N}_i}$ is aggregated rank-wise, then updated. In this example, only the complex's edge features (originally pink) are aggregated across multiple neighborhoods ($\mathcal{N}_2$ and $\mathcal{N}_3$). }
	\label{fig:GCCN}
\end{figure*}

\paragraph{TDL Research Trend.} To date, research in TDL has largely progressed by taking existing GNNs architectures (convolutional, attentional, message-passing, etc.) and generalizing them one-by-one to a specific TDL counterpart, whether that be on hypergraphs  \citep{feng2019hypergraphconv,chen2020hypergraphattention,yadati2020hypergraphmp}, on simplicial complexes  \citep{roddenberryglaze2021principled,yang2023convolutional, ebli2020simplicial, giusti2022simplicial, battiloro2023generalized,bodnar2021topological, maggs2024simplicial, lecha2025higher}, on cell complexes \citep{hajij2020cell,giusti2022cell, bodnar2021weisfeiler}, or on combinatorial complexes \citep{battiloro2024n, eitan2024topologicalblind}. Although overall valuable and insightful, such a fragmented research trend is slowing the development of standardized methodologies and software for TDL, as well as limiting the analysis of its cost-benefits trade-offs~\citep{papamarkou2024position}. We argue these two challenges are hindering the use and application of TDL beyond the community of experts. \textcolor{black}{This is particularly relevant as practitioners are beginning to turn to TDL for tackling application-specific scenarios, such as computer network modelling \citep{bernardez2025ordered}.} 

\paragraph{Current Efforts and Gaps for TDL Standardization.} 
TopoX \citep{hajij2024topox} and TopoBench\citep{telyatnikov2024topobench} have become the reference Python libraries for \textit{developing} and \textit{benchmarking} TDL models, respectively. However, despite their potential in defining and implementing novel standardized methodologies in the field, the current focus of these packages is on replicating and analyzing existing message-passing CCNNs. 
Works like \citet{jogl2022weisfeiler, jogl2022reducing} have instead focused on making TDL accessible by porting models to the graph domain. They do so via principled transformations from combinatorial topological domains to graphs. However, although these architectures 
are as expressive as their TDL counterparts (using the Weisfeiler-Lehman criterion \citep{xu2018powerful}), they are neither formally equivalent to nor a generalization of their TDL counterparts. Due to collapse of rank information during the graph expansion, the GNNs on the resulting graph do not preserve the same topological information. 

\noindent\textbf{Contributions.} This works seeks to accelerate TDL research and increase its accessibility and standardization for outside practitioners. 
To that end, we introduce a novel joint methodological and software framework that easily enables the development of new TDL architectures in a principled way---overcoming the limitations of existing works. We outline our main contributions and specify which of the field's open problems (OPs), as defined in \citet{papamarkou2024position}, they help answer: 
\begin{itemize}[leftmargin=*] 
    \item \textbf{Systematic Generalization.} We propose the first method to systematically generalize \textbf{\textit{any neural network}} to its topological counterpart with minimal adaptation. Specifically, we define a novel expansion mechanism that transforms a combinatorial complex into a collection of graphs, enabling the training of TDL models as an ensemble of synchronized models. To our knowledge, this is the first method designed to accommodate many topological domains. (OPs 6, 11: foundational, cross-domain TDL.)
    \item \textbf{General Architectures.} Our method induces a novel wide class of TDL architectures, \textit{Generalized Combinatorial Complex Networks} (GCCNs), portrayed in  Fig. \ref{fig:GCCN}. GCCNs \textbf{\textit{(i)}} formally generalize CCNNs, \textbf{\textit{(ii)}} are cell permutation equivariant, and \textbf{\textit{(iii)}} are as expressive as CCNNs. (OP 9: consolidating TDL advantages in a unified theory.)
    \item \textbf{Implementation.} We provide TopoTune, a lightweight PyTorch module for developing GCCNs fully integrated into TopoBench \citep{telyatnikov2024topobench}. \textcolor{black}{(OP 4: need for software)}. Using TopoTune, practitioners can, for the first time, easily define and iterate upon TDL models, \textcolor{black}{making TDL a much more practical tool for real-world datasets (OP 1: need for accessible TDL).} 
    \item \textbf{Benchmarking.} Using TopoTune, we create a broad class of GCCNs using four base GNNs and one base Transformer over two combinatorial topological spaces (simplicial and cell complexes). Unlike prior works that compare models under heterogeneous conditions, our systematic benchmarking provides a controlled evaluation of GCCNs across diverse architectures, datasets, and topological domains. A wide range of experiments on graph-level and node-level benchmark datasets shows GCCNs generally outperform existing CCNNs, often with smaller model sizes. Some of these results are obtained with GCCNs that cannot be reduced to standard CCNNs, further underlining our methodological contribution. A guide to the code is available at \href{https://geometric-intelligence.github.io/topotune/}{geometric-intelligence.github.io/topotune}. 
    (OP 3: need for standardized benchmarking.)
\end{itemize}

\paragraph{Outline.} Section \ref{sec:background} provides necessary background. Section \ref{sec:motivation} motivates and positions our work in the current TDL literature. Section \ref{sec:gccn} introduces and discusses GCCNs. Section \ref{sec:topotune} describes TopoTune. Finally, Section \ref{sec:experiments} showcases extensive numerical experiments and comparisons.
\label{sec:intro}

\section{Background}
\label{sec:background}

\label{cell_section}

To properly contextualize our work, we revisit the fundamentals of combinatorial complexes and CCNNs---closely following the works of \citet{hajij2023tdl} and \citet{battiloro2024n}---as well as the notion of augmented Hasse graphs. Appendix \ref{app:topological_domains} briefly introduces all topological domains used in TDL, such as simplicial and cell complexes.

\paragraph{Combinatorial Complex.} A \emph{combinatorial complex} is a triple $(\mathcal{V}, \mathcal{C}, \textrm{rk})$ consisting of a set $\mathcal{V}$, a subset $\mathcal{C}$ of the powerset $\mathcal{P}(\mathcal{V}) \backslash\{\emptyset\}$, and a rank function $\textrm{rk}: \mathcal{C} \rightarrow \mathbb{Z}_{\geq 0}$ with the following properties:
\begin{enumerate}
    \item for all $v \in \mathcal{V},\{v\} \in \mathcal{C}$ and $\textrm{rk}(\{v\})=0$;
    \item the function $\textrm{rk}$ is order-preserving, i.e., if $\sigma, \tau \in \mathcal{C}$ satisfy $\sigma \subseteq \tau$, then $\textrm{rk}(\sigma) \leq$ $\textrm{rk}(\tau)$.
\end{enumerate}
The elements of $\mathcal{V}$ are the nodes, while the elements of $\mathcal{C}$ are called cells (i.e., group of nodes). 
The rank of a cell $\sigma \in \mathcal{C}$ is $k:=\textrm{rk}(\sigma)$, and we call it a $k$-cell. $\mathcal{C}$ simplifies notation for $(\mathcal{V}, \mathcal{C}, \textrm{rk})$, and its dimension is defined as the maximal rank among its cell: $\mathrm{dim}(\mathcal{C}):= \max_{\sigma \in \mathcal{C}} \textrm{rk}(\sigma)$. 

\paragraph{Neighborhoods.} Combinatorial complexes can be equipped with a notion of neighborhood among cells. In particular, a neighborhood $\mathcal{N}: \mathcal{C} \rightarrow \mathcal{P}(\mathcal{C})$ on a combinatorial complex $\mathcal{C}$ is a function that assigns to each cell $\sigma$ in $\mathcal{C}$ a collection of ``neighbor cells'' $\mathcal{N}(\sigma) \subset \mathcal{C}\cup \emptyset$. Examples of neighborhood functions are \emph{adjacencies}, connecting cells with the same rank, and \emph{incidences}, 
connecting cells with different consecutive ranks. Usually, up/down incidences $\mathcal{N}_{I,\uparrow}$ and $\mathcal{N}_{I,\downarrow}$ are defined as
\begin{equation}\label{eq:incidences}
\begin{aligned}
    \mathcal{N}_{I,\uparrow}(\sigma) &= \big\{\tau \in \mathcal{C} \, \big| \, 
    \textrm{rk}(\tau) = \textrm{rk}(\sigma) + 1, \, \sigma \subset \tau \big\}, \\
    \mathcal{N}_{I,\downarrow}(\sigma) &= \big\{\tau \in \mathcal{C} \, \big| \, 
    \textrm{rk}(\tau) = \textrm{rk}(\sigma) - 1, \, \tau \subset \sigma \big\}.
\end{aligned}
\end{equation}
Therefore, a $k+1$-cell $\tau$ is a neighbor of a $k$-cell $\sigma$ w.r.t. to $\mathcal{N}_{I,\uparrow}$ if $\sigma$ is contained in $\tau$; analogously, a $k-1$-cell $\tau$ is a neighbor of a $k$-cell $\sigma$ w.r.t. to $\mathcal{N}_{I,\downarrow}$ if $\tau$ is contained in $\sigma$.
These incidences induce up/down adjacencies $\mathcal{N}_{A,\uparrow}$ and $\mathcal{N}_{A,\downarrow}$ as
\begin{align}\label{eq:adjacencies}
    &\mathcal{N}_{A,\uparrow}(\sigma) = \big\{\tau \in \mathcal{C} \, \big| \, 
    \textrm{rk}(\tau) = \textrm{rk}(\sigma), \,  \\
    &\hspace{2em} \exists \delta \in \mathcal{C}: \nonumber \textrm{rk}(\delta) = \textrm{rk}(\sigma) + 1, \, 
    \tau \subset \delta, \, \sigma \subset \delta \big\}, \nonumber \\
    &\mathcal{N}_{A,\downarrow}(\sigma) = \big\{\tau \in \mathcal{C} \, \big| \, 
    \textrm{rk}(\tau) = \textrm{rk}(\sigma), \,  \\
    &\hspace{2em} \exists \delta \in \mathcal{C}: \nonumber \textrm{rk}(\delta) = \textrm{rk}(\sigma) - 1, \, 
    \delta \subset \tau, \, \delta \subset \sigma \big\}.
\end{align}
Therefore,  a $k$-cell $\tau$ is a neighbor of a $k$-cell $\sigma$ w.r.t. to $\mathcal{N}_{A,\uparrow}$ if they are both contained in a $k+1$-cell $\delta$; analogously, a $k$-cell $\tau$ is a neighbor of a $k$-cell $\sigma$ w.r.t. to $\mathcal{N}_{A,\downarrow}$ if they both contain a $k-1$-cell $\delta$. Other neighborhood functions can be defined for specific applications \citep{battiloro2024n}. 


\paragraph{Combinatorial Complex Message-Passing Neural Networks.}  Let $\mathcal{C}$ be a combinatorial complex, and $\mathcal{N_C}$ a collection of neighborhood functions. The $l$-th layer of a CCNN updates the embedding $\mathbf{h}^l_\sigma \in \mathbb{R}^{F^l}$ of cell $\sigma$ as
\begin{equation}\label{eq:message_passing}
    \mathbf{h}_\sigma^{l+1} = \phi \left(\mathbf{h}^l_\sigma, \bigotimes_{\mathcal{N} \in \mathcal{N_C}}\bigoplus_{\tau \in \mathcal{N}(\sigma)} \psi_{\mathcal{N},\textrm{rk}(\sigma)}\left(\mathbf{h}^l_\sigma,\mathbf{h}^l_\tau\right)\right) \in \mathbb{R}^{F^{l+1}},
\end{equation}
where $\mathbf{h}_\sigma^{0}:=\mathbf{h}_\sigma$ are the initial features, $\bigoplus$ is an intra-neighborhood aggregator, $\bigotimes$ is an inter-neighborhood aggregator. The functions $\psi_{\mathcal{N}, \textrm{rk}(\cdot)}:\mathbb{R}^{ F^l} \rightarrow \mathbb{R}^{F^{l+1}}$ and the update function $\phi$ are learnable functions, which are typically homogeneous across all neighborhoods and ranks. In other words, the embedding of a cell is updated in a learnable fashion by first aggregating messages with neighboring cells per each neighborhood, and then by further aggregating across neighborhoods. We remark that by this definition, all CCNNs are message-passing architectures. Moreover, they can only leverage neighborhood functions that consider all ranks in the complex.

\paragraph{Augmented Hasse Graphs.} In TDL, a Hasse graph is a graph expansion of a combinatorial complex. Specifically, it represents the incidence structure $\mathcal{N}_{I,\downarrow}$ by representing each cell (node, edge, face) as a node and drawing edges between cells that are incident to each other. Going beyond just considering  $\mathcal{N}_{I,\downarrow}$ , given a collection of \textit{multiple} neighborhood functions, a combinatorial complex $\mathcal{C}$ can be expanded into a unique graph representation. We refer to this representation as an augmented Hasse graph (Fig. \ref{fig:hasse_graphs}) \citep{hajij2023tdl}. Formally, let $\mathcal{N_C}$ be a collection of neighborhood functions on $\mathcal{C}$: the augmented Hasse graph $\mathcal{G}_{\mathcal{N}_\mathcal{C}}$ of $\mathcal{C}$ induced by $\mathcal{N_C}$ is a directed graph $\mathcal{G}_\mathcal{N_C}=$ $\left(\mathcal{C},\mathcal{E}_\mathcal{N_C}\right)$ with cells represented as nodes, and edges given by
\begin{equation}\label{eq:singlehasse}
    \mathcal{E}_\mathcal{N_C} =\{(\tau, \sigma)| \sigma,\tau \in \mathcal{C}, \exists \ \mathcal{N} \in \mathcal{N_C}:\tau \in \mathcal{N}(\sigma)\}.
\end{equation}
The augmented Hasse graph of $\mathcal{C}$ is thus obtained by considering the cells as nodes, and inserting directed edges among them if the cells are neighbors in $\mathcal{C}$. Notably, such a representation of a combinatorial complex discards all information about cell rank.

\begin{figure}
\centering
\includegraphics[width=0.85\columnwidth]{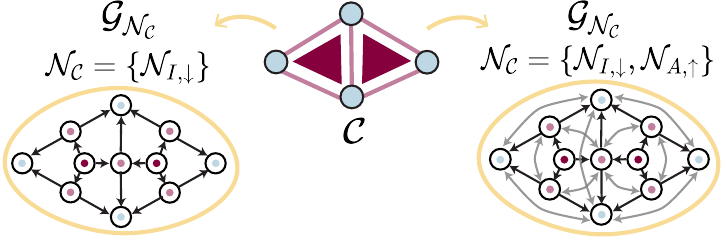}
	\caption{\textbf{Augmented Hasse graphs.} Expansions of a combinatorial complex $\mathcal{C}$ (middle) into two augmented Hasse graphs: (left) the Hasse graph induced by $\mathcal{N}_\mathcal{C}= \{\mathcal{N}_{I,\downarrow}\}$; (right) the augmented Hasse graph induced by $\mathcal{N}_\mathcal{C}= \{\mathcal{N}_{I,\downarrow},\mathcal{N}_{A,\uparrow}\}$. Information on cell rank is discarded (we retain rank color for illustrative purposes).} \label{fig:hasse_graphs}
\end{figure}

\section{Motivation and Related Works}
\label{sec:motivation}

As outlined in the introduction, TDL lacks a comprehensive framework for easily creating and experimenting with novel topological architectures---unlike the more established GNN field. 
This section outlines some previous works that have laid important groundwork in addressing this challenge.



\paragraph{Formalizing CCNNs on graphs.} The position paper \citep{velivckovic2022message} proposed that any function over a higher-order domain can be computed via message passing over a transformed graph, but without specifying how to design GNNs that reproduce CCNNs. \citet{hajij2023tdl} showed that, given a combinatorial complex $\mathcal{C}$ and a collection of neighborhoods $\mathcal{N_C}$, a message-passing GNN that runs over the augmented Hasse graph $\mathcal{G}_\mathcal{N_C}$ is equivalent to a specific CCNN as in (\ref{eq:message_passing}) running over $\mathcal{C}$ using: i) $\mathcal{N_C}$ as collection of neighborhoods; ii) same intra- and inter-aggregations, i.e., $\bigoplus=\bigotimes$; and iii) no rank- and neighborhood-dependent message functions, i.e., $\psi_{\mathcal{N}, \textrm{rk}(\cdot)}=\psi \; \forall \mathcal{N}\in \mathcal{N_C}$.

\paragraph{Retaining expressivity, but not topological symmetry.} 
\citet{jogl2022reducing,jogl2022weisfeiler} demonstrate that GNNs on augmented Hasse graphs $\mathcal{G}_\mathcal{N_C}$ are as expressive as CCNNs on $\mathcal{C}$ (using the WL criterion), suggesting that some CCNNs can be simulated with standard graph libraries. \footnote{The same authors generalize these ideas to non-standard message-passing GNNs~\citep{jogl2024expressivity}.}. However, as the authors state, such GNNs do not structurally distinguish between cells of different ranks or neighborhoods, collapsing topological relationships. For instance, in a molecule (cellular complex), two bonds (edges) may simultaneously share multiple neighborhoods: lower-adjacent through a shared atom (node) and upper-adjacent through a shared ring (face). A GNN on $\mathcal{G}_\mathcal{N_C}$ collapses these distinctions, applying the same weights to all connections and losing the structural symmetries encoded in the domain. While this may suffice for preserving expressivity, it is inherently a very different computation than that of TDL models.


\paragraph{The Particular Case of Hypergraphs.} 
Hypergraph neural networks have long relied on graph expansions \citep{telyatnikov2023hypergraph}, which has allowed the field to leverage advances in the graph domain and, by extension, a much wider breadth of models \citep{antelmi2023survey,papillon2023architectures}. Most hypergraph models are expanded into graphs using the star  \citep{zhou2006learning,sole1996spectra}, the clique \citep{bolla1993spectra,rodri2002laplacian,gibson2000clustering}, or the line expansion \citep{bandyopadhyay2020line}. As noted by \citet{agarwal2006higher}, many hypergraph learning algorithms leverage graph expansions. 

The success story of hypergraph neural networks motivates further research on new graph-based expansions of CCNNs. These expansions could, at the same time, subsume current CCNNs \textit{and} exploit progress in the GNN field. Therefore, returning to our core goal of accelerating and democratizing TDL while preserving its theoretical properties, we propose a two-part approach: a \textbf{novel graph-based methodology} able to generate general architectures (Section \ref{sec:gccn}), and a \textbf{lightweight software framework} to easily and widely implement it (Section \ref{sec:topotune}).



\section{Generalized Combinatorial Complex Neural Networks}
\label{sec:gccn}
We propose Generalized Combinatorial Complex Neural Networks (GCCNs), a novel broad class of TDL architectures. GCCNs overcome the limitations of previous graph-based TDL architectures by leveraging the notions of \textit{strictly augmented Hasse graphs} and \textit{per-rank neighborhoods}.

\begin{figure}
    \centering
    \includegraphics[width=\linewidth]{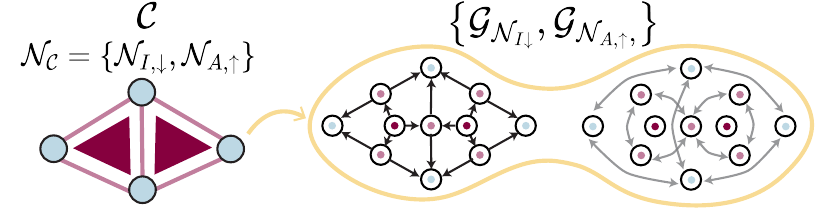}
    \caption{\textbf{Ensemble of strictly augmented Hasse Graphs.} Given a complex $\mathcal{C}$ with neighborhood structure including both incidence and upper adjacency (left), this graph expansion (right) produces one augmented Hasse graph for \textit{each} neighborhood.}
    \label{fig:strictlyaug}
\end{figure}

\paragraph{Ensemble of Strictly Augmented Hasse Graphs.} This graph expansion method (see Fig. \ref{fig:strictlyaug}) extends from the established definition of an augmented Hasse graph (see Fig. \ref{fig:hasse_graphs}). Specifically, given a combinatorial complex $\mathcal{C}$ and a collection of neighborhood functions $\mathcal{N_C}$, we expand it into $|\mathcal{N_C}|$ graphs, each of them representing a neighborhood $\mathcal{N} \in \mathcal{N}_\mathcal{C}$.  In particular, the \textit{strictly augmented Hasse graph} $\mathcal{G}_{\mathcal{N}} = (\mathcal{C}_\mathcal{N}, \mathcal{E}_\mathcal{N})$ of a neighborhood $\mathcal{N} \in \mathcal{N}_\mathcal{C}$ is a directed graph whose nodes $\mathcal{C}_\mathcal{N}$ and edges $\mathcal{E}_\mathcal{N}$ are given by:
\begin{equation}\label{eq:strictlyaug}
    \mathcal{C}_\mathcal{N} =\{\sigma \in \mathcal{C} \, | \, \mathcal{N}(\sigma) \neq \emptyset \},\; \mathcal{E}_\mathcal{N} =\{(\tau, \sigma) \, | \, \tau \in \mathcal{N}(\sigma)\}.
\end{equation}
Following the same arguments from \cite{hajij2023tdl}, a GNN over the strictly augmented Hasse graph $\mathcal{G}_{\mathcal{N}}$ induced by $\mathcal{N}$ is equivalent to a CCNN running over $\mathcal{C}$ and using $\mathcal{N_C} = \{\mathcal{N}\}$ up to the (self-)update of the cells in $\mathcal{C}/\mathcal{C}_\mathcal{N}$. 

\paragraph{Per-rank Neighborhoods.} The standard definition of adjacencies and incidences given in Section \ref{sec:background} implies that they are applied to each cell regardless of its rank. For instance, consider a combinatorial complex of dimension two with nodes (0-cells), edges (1-cells), and faces (2-cells). Employing the down incidence $\mathcal{N}_{I,\downarrow}$ as in (\ref{eq:incidences}) means the edges must exchange messages with their endpoint nodes, and faces must exchange messages with the edges on their sides. It is impossible for edges to communicate while faces do not.

This limitation increases the computational cost of standard CCNNs while not always increasing the learning performance, as experiments will show. For this reason, we introduce \textit{per-rank neighborhoods}, examples of which are depicted in Fig. \ref{fig:per-rank}.  Formally, a per-rank neighborhood function $\mathcal{N}^{r}$ maps a cell $\sigma$ to the empty set if $\sigma$ is a cell of rank $r$. For example, the up/down \emph{$r$-incidences}  $\mathcal{N}^r_{I,\uparrow}$ and $\mathcal{N}^r_{I,\downarrow}$ are defined as
\begin{align}\label{eq:rincidences}
    &\mathcal{N}^r_{I,\uparrow}(\sigma) = 
    \begin{cases} 
        \big\{\tau \in \mathcal{C} \, \big| \, 
        \textrm{rk}(\tau) = \textrm{rk}(\sigma) + 1, \, \sigma \subset \tau \big\}, \\
        \hspace{2em} \text{if } \textrm{rk}(\sigma) = r, \\
        \emptyset, \hspace{1.1em} \text{otherwise},
    \end{cases} \nonumber \\
    &\mathcal{N}^r_{I,\downarrow}(\sigma) = 
    \begin{cases} 
        \big\{\tau \in \mathcal{C} \, \big| \, 
        \textrm{rk}(\tau) = \textrm{rk}(\sigma) - 1, \, \sigma \subset \tau \big\}, \\
        \hspace{2em} \text{if } \textrm{rk}(\sigma) = r, \\
        \emptyset, \hspace{1.1em} \text{otherwise}.
    \end{cases}
\end{align}
and the up/down \textit{$r$-adjacencies} $\mathcal{N}^r_{A,\uparrow}$ and $\mathcal{N}^r_{A,\downarrow}$ can be obtained analogously. So, it is now straightforward to model a setting in which employing only $\mathcal{N}^1_{I,\downarrow}$ (Fig. \ref{fig:per-rank}iii) allows edges to exchange messages with their bounding nodes but not triangles with their bounding edges.


\begin{figure}
    \centering
    \includegraphics[width=\linewidth]{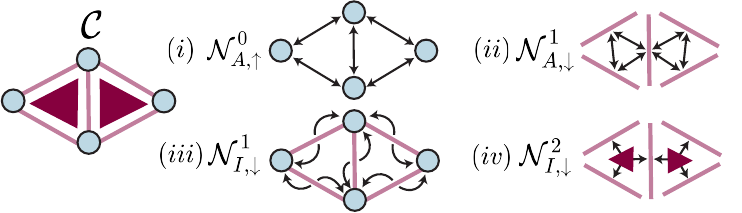}
    \caption{\textbf{Per-rank neighborhoods}. Given a complex $\mathcal{C}$ (left), we illustrate four examples of per-rank neighborhoods (right). In each case, they only include rank-specific cells.}
    \label{fig:per-rank}
\end{figure}

\paragraph{Generating Graph-based TDL Architectures.}
We use these notions to define a novel graph-based methodology for generating principled TDL architectures. Given a combinatorial complex $\mathcal{C}$ and a set $\mathcal{N_C}$ of neighborhoods, the method works as follows (see also Fig. \ref{fig:GCCN}): \textit{(i)} $\mathcal{C}$ is expanded into an \emph{ensemble} of strictly augmented Hasse graphs---one for each $\mathcal{N} \in \mathcal{N}_\mathcal{C}$. \textit{(ii)} Each strictly augmented Hasse graph $\mathcal{G}_\mathcal{N}$ and the features of its cells are independently processed by a base model. \textit{(iii)} An aggregation module $\bigotimes$ synchronizes the cell features across the different strictly augmented Hasse graphs (as the same cells can belong to multiple strictly augmented Hasse graphs). 

This method enables an ensemble of synchronized models per layer--- the $\omega_\mathcal{N}$s---each of them applied to a specific strictly augmented Hasse graph.\footnote{Contrary to past CCNN simulation works that apply a model to the singular, whole augmented Hasse graph.}. The rest of this section formalizes the architectures induced by this methodology. 

\paragraph{Generalized Combinatorial Complex Networks.} 
We formally introduce a broad class of novel TDL architectures called Generalized Combinatorial Complex Networks (GCCNs), depicted in Fig. \ref{fig:GCCN}. Let $\mathcal{C}$ be a combinatorial complex containing $|\mathcal{C}|$ cells and $\mathcal{N_C}$ a collection of neighborhoods on it. Assume an arbitrary labeling of the cells in the complex, and denote the $i$-th cell with $\sigma_i$. Denote by $\mathbf{H} \in \mathbb{R}^{|\mathcal{C}|\times F}$  the feature matrix collecting some embeddings of the cells on its rows, i.e.,  $[\mathbf{H}]_i = \mathbf{h}_{\sigma_i}$, and by $\mathbf{H}_{\mathcal{N}} \in \mathbb{R}^{|\mathcal{C}_\mathcal{N}|\times F}$ the submatrix containing just the embeddings of the cells belonging to the strictly augmented Hasse graph $\mathcal{G}_{\mathcal{N}}$ of $\mathcal{N}$.  The $l$-th layer of a GCCN updates the embeddings of the cells $\mathbf{H}^l\in \mathbb{R}^{|\mathcal{C}|\times F^l}$ as
\begin{equation}\label{eq:smp-tnn}
    \mathbf{H}^{l+1} = \phi \left(\mathbf{H}^l, \bigotimes_{\mathcal{N} \in \mathcal{N_C}}\omega_\mathcal{N}(\mathbf{H}^l_{\mathcal{N}},\mathcal{G}_{\mathcal{N}})\right) \in \mathbb{R}^{|\mathcal{C}| \times F^{l+1}},
\end{equation}
where  $\mathbf{H}^{0}$ collects the initial features, and the update function $\phi$ is a  learnable row-wise update function, i.e., $[\phi(\mathbf{A}, \mathbf{B})]_i = \phi([\mathbf{A}]_i,[\mathbf{B}]_i)$. The neighborhood-dependent sub-module $\omega_\mathcal{N}:\mathbb{R}^{|\mathcal{C}_\mathcal{N}| \times F^l} \rightarrow \mathbb{R}^{|\mathcal{C}_\mathcal{N}| \times F^{l+1}}$, which we refer to as the \textit{neighborhood message function}, is a learnable (matrix) function that takes as input the whole strictly augmented Hasse graph of the neighborhood, $\mathcal{G}_\mathcal{N}$ and the embeddings of the cells that are part of it, and gives as output a processed version of them. Finally, the inter-neighborhood aggregation module $\bigotimes$ synchronizes the possibly multiple neighborhood messages arriving on a single cell across multiple strictly augmented Hasse graphs into a single message. 
GCCNs enjoy increased flexibility over CCNS (eq. \ref{eq:message_passing}) as their neighborhoods are allowed to be rank-dependent and their $\omega_\mathcal{N}$'s are not necessarily message-passing. 

\begin{tcolorbox}[
    colback=white,
    colframe=black,
    boxrule=0.5pt,
    left=17pt,
    right=20pt,
    top=6pt,
    bottom=1pt,
    boxsep=0pt,
    arc=10pt,
    outer arc=10pt,
]
\paragraph{Theoretical properties of GCCNs. \vspace{0.2cm}}
\begin{enumerate}[leftmargin=*,align=left,widest=3]
\item \textbf{Generality.} GCCNs formally generalize CCNNs.
    \begin{proposition}\label{prop:equival}
    Let $\mathcal{C}$ be a combinatorial complex. Let $\mathcal{N_C}$ be a collection of neighborhoods on $\mathcal{C}$. Then, there exists a GCCN that exactly reproduces the computation of a CCNN over $\mathcal{C}$ using $\mathcal{N_C}$. 
\end{proposition}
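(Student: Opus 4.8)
The plan is to exhibit, for an arbitrary CCNN given by a neighborhood collection $\mathcal{N_C}$, message functions $\{\psi_{\mathcal{N},\textrm{rk}(\cdot)}\}$, an intra-neighborhood aggregator $\bigoplus$, an inter-neighborhood aggregator $\bigotimes$, and an update $\phi$, an explicit GCCN whose layer map (\ref{eq:smp-tnn}) coincides with the CCNN layer map (\ref{eq:message_passing}) on every cell. Since (\ref{eq:smp-tnn}) and (\ref{eq:message_passing}) already share the structural slots $\bigotimes$ and $\phi$, I would reuse the CCNN's $\bigotimes$ and $\phi$ unchanged; the entire construction then reduces to choosing the neighborhood collection the GCCN runs on and the ensemble of neighborhood message functions $\{\omega_\mathcal{N}\}$ so that, for each cell $\sigma$, the quantity $[\omega_\mathcal{N}(\mathbf{H}^l_\mathcal{N}, \mathcal{G}_\mathcal{N})]_\sigma$ equals the inner per-neighborhood aggregate $\bigoplus_{\tau \in \mathcal{N}(\sigma)} \psi_{\mathcal{N}, \textrm{rk}(\sigma)}(\mathbf{h}^l_\sigma, \mathbf{h}^l_\tau)$.

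First I would dispose of the rank-independent case, where $\psi_{\mathcal{N},\textrm{rk}(\sigma)} = \psi_\mathcal{N}$ for all ranks. Here I keep $\mathcal{N_C}$ as is and let each $\omega_\mathcal{N}$ be the one-round message-passing GNN on the strictly augmented Hasse graph $\mathcal{G}_\mathcal{N}$ whose edge/message function is $\psi_\mathcal{N}$ and whose permutation-invariant aggregation is $\bigoplus$. By the edge set in (\ref{eq:strictlyaug}), the in-neighbors of $\sigma$ in $\mathcal{G}_\mathcal{N}$ are exactly the $\tau \in \mathcal{N}(\sigma)$, so $\omega_\mathcal{N}$ outputs precisely the desired per-neighborhood aggregate; this is the single-neighborhood equivalence already recalled after (\ref{eq:strictlyaug}). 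Aggregating across $\mathcal{N}$ with $\bigotimes$ and applying $\phi$ then reproduces (\ref{eq:message_passing}) verbatim.

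The hard part is the rank-dependence of $\psi_{\mathcal{N},\textrm{rk}(\sigma)}$: because the strictly augmented Hasse graph discards cell rank, a single GNN on $\mathcal{G}_\mathcal{N}$ cannot apply different message functions to sources of different ranks. I would resolve this with the per-rank neighborhoods of (\ref{eq:rincidences}): replace every $\mathcal{N} \in \mathcal{N_C}$ by its per-rank refinement and run the GCCN on the enlarged collection $\mathcal{N_C}' = \{\mathcal{N}^r : \mathcal{N} \in \mathcal{N_C}, \, 0 \le r \le \dim(\mathcal{C})\}$. Since $\mathcal{N}^r(\sigma) = \mathcal{N}(\sigma)$ when $\textrm{rk}(\sigma) = r$ and $\emptyset$ otherwise, every non-isolated source of $\mathcal{G}_{\mathcal{N}^r}$ has rank exactly $r$, so I may safely set $\omega_{\mathcal{N}^r}$ to the message-passing GNN carrying the rank-$r$ function $\psi_{\mathcal{N},r}$. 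It then remains to check that the enlarged aggregation collapses correctly: fixing $\sigma$ with $\textrm{rk}(\sigma) = r$, among the refinements of a given $\mathcal{N}$ only $\mathcal{N}^r$ has $\sigma$ as a non-isolated source, so $\bigotimes_{\mathcal{M} \in \mathcal{N_C}'}[\omega_\mathcal{M}]_\sigma$ reduces to $\bigotimes_{\mathcal{N} \in \mathcal{N_C}} \bigoplus_{\tau \in \mathcal{N}(\sigma)} \psi_{\mathcal{N},r}(\mathbf{h}^l_\sigma, \mathbf{h}^l_\tau)$, matching the inner term of (\ref{eq:message_passing}) at $\sigma$. Finally I would dispatch the boundary bookkeeping flagged after (\ref{eq:strictlyaug})---cells isolated in all Hasse graphs receive only the self-update $\phi(\mathbf{h}^l_\sigma, \cdot)$ in both models---so row-wise agreement holds for every cell; applying the shared $\phi$ and iterating over layers finishes the proof.
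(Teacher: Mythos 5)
Your proof is correct, and it shares the paper's core identification: reuse the CCNN's $\bigotimes$ and $\phi$ unchanged, and choose each neighborhood message function so that its row at $\sigma$ equals the intra-neighborhood aggregate $\bigoplus_{\tau \in \mathcal{N}(\sigma)} \psi_{\mathcal{N},\mathrm{rk}(\sigma)}(\mathbf{h}^l_\sigma,\mathbf{h}^l_\tau)$. Where you genuinely diverge is in handling the rank-dependence of $\psi$. The paper's proof is a one-liner: it simply \emph{defines} $\omega_\mathcal{N}(\mathbf{H}^l_{\mathcal{N}},\mathcal{G}_{\mathcal{N}})$ to be the map $\{\bigoplus_{\tau \in \mathcal{N}(\sigma)} \psi_{\mathcal{N},\mathrm{rk}(\sigma)}(\mathbf{h}^l_\sigma,\mathbf{h}^l_\tau)\}_{\sigma}$, which is legitimate because the GCCN definition allows $\omega_\mathcal{N}$ to be an \emph{arbitrary} matrix function on the labeled cells of $\mathcal{G}_\mathcal{N}$---the rank-dependence is hard-coded into the row indexing, and the only point verified is that every $\tau \in \mathcal{N}(\sigma)$ lies in the node set $\mathcal{C}_\mathcal{N}$ by (\ref{eq:strictlyaug}), so the aggregate is computable from $\mathbf{H}^l_\mathcal{N}$ alone. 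You instead insist that each $\omega$ be a rank-homogeneous one-round message-passing GNN, and recover rank-dependence structurally by refining $\mathcal{N_C}$ into the per-rank neighborhoods of (\ref{eq:rincidences}) and verifying that $\bigotimes$ collapses correctly over the enlarged collection. Your route is longer but buys something real: it answers the objection---which the paper's one-liner glosses over---that the strictly augmented Hasse graph discards rank, so a single rank-oblivious GNN on $\mathcal{G}_\mathcal{N}$ could not realize $\psi_{\mathcal{N},\mathrm{rk}(\sigma)}$; after your refinement, each graph only delivers messages to cells of a single rank, so a ``simple, single-layer convolution'' suffices in the literal sense claimed in the paper's main text. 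One terminological slip worth fixing: $\mathrm{rk}(\sigma)$ in (\ref{eq:message_passing}) is the rank of the \emph{receiving} cell, and in $\mathcal{G}_{\mathcal{N}^r}$ it is the message \emph{destinations} (the heads of the directed edges $(\tau,\sigma)$), not the ``sources,'' that all have rank $r$; your argument is unaffected, since it is precisely the receivers' ranks that select $\psi_{\mathcal{N},r}$.
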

Proof of Prop. \ref{prop:equival} (Appendix \ref{app:generality}) relies on setting the $\omega_{\mathcal{N}}$ of a GCCN to a simple, single-layer convolution.

\item \textbf{Permutation Equivariance.} Generalizing CCNNs, GCCNs layers are equivariant with respect to the relabeling of cells in the combinatorial complex.
    \begin{proposition}\label{prop:equivar}
    A GCCN layer is cell permutation equivariant if the neighborhood message function is node permutation equivariant and the inter-neighborhood aggregator is cell permutation invariant.
\end{proposition}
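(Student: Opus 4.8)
The plan is to show that the GCCN layer map defined in \eqref{eq:smp-tnn}, which I will denote $\Phi$, commutes with the action of an arbitrary cell relabeling, by decomposing the layer into its three constituent operations and verifying that each is compatible with the permutation. Throughout, I would represent a relabeling $\pi$ of the $|\mathcal{C}|$ cells by a permutation matrix $P$ acting on the rows of $\mathbf{H}^l$, and track how $P$ propagates through the per-neighborhood application of $\omega_\mathcal{N}$, the inter-neighborhood aggregation $\bigotimes$, and the row-wise update $\phi$.

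First I would record the two structural facts that make the decomposition work. Since the neighborhood functions $\mathcal{N}$ are intrinsic to the combinatorial complex, relabeling the cells by $\pi$ transforms each neighborhood equivariantly: $\tau \in \mathcal{N}(\sigma)$ holds in the original complex if and only if $\pi(\tau) \in \mathcal{N}(\pi(\sigma))$ holds in the relabeled one. Two consequences follow. First, the label-independent membership condition $\mathcal{N}(\sigma) \neq \emptyset$ defining the node set $\mathcal{C}_\mathcal{N}$ of the strictly augmented Hasse graph (see \eqref{eq:strictlyaug}) is preserved, so $P$ restricts to a permutation matrix $P_\mathcal{N}$ of the cells of $\mathcal{C}_\mathcal{N}$, and the restriction of $P\mathbf{H}^l$ to the relabeled $\mathcal{C}_\mathcal{N}$ equals $P_\mathcal{N}\mathbf{H}^l_\mathcal{N}$. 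Second, the relabeled strictly augmented Hasse graph coincides with $P_\mathcal{N}\cdot\mathcal{G}_\mathcal{N}$, the graph obtained by applying $P_\mathcal{N}$ to the nodes and edges of $\mathcal{G}_\mathcal{N}$.

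With these facts in hand, I would chain the equivariances. Applying the node permutation equivariance of $\omega_\mathcal{N}$ gives
\[
\omega_\mathcal{N}(P_\mathcal{N}\mathbf{H}^l_\mathcal{N}, P_\mathcal{N}\cdot\mathcal{G}_\mathcal{N}) = P_\mathcal{N}\,\omega_\mathcal{N}(\mathbf{H}^l_\mathcal{N},\mathcal{G}_\mathcal{N}),
\]
so each neighborhood message is permuted consistently on the cells of $\mathcal{C}_\mathcal{N}$. Scattering these messages back into the full $|\mathcal{C}|$-row space and invoking the cell permutation invariance of $\bigotimes$, the aggregated message matrix satisfies $\bigotimes_{\mathcal{N}}(\text{permuted inputs}) = P\bigotimes_{\mathcal{N}}(\text{inputs})$, since the aggregated row of a cell depends only on the multiset of messages arriving at it and not on its label. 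Finally, because $\phi$ is row-wise, $\phi(P\mathbf{A},P\mathbf{B}) = P\phi(\mathbf{A},\mathbf{B})$; composing the three commutation relations yields $\Phi(P\mathbf{H}^l) = P\,\Phi(\mathbf{H}^l)$, i.e., cell permutation equivariance.

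I expect the main obstacle to be the bookkeeping in the second paragraph: carefully justifying that a permutation of all cells restricts to a legitimate node permutation of each individual Hasse graph and that the relabeled graph equals $P_\mathcal{N}\cdot\mathcal{G}_\mathcal{N}$. This is where the distinction between the full index set $\mathcal{C}$ and the per-neighborhood index sets $\mathcal{C}_\mathcal{N}$ must be handled precisely, using that $\mathcal{C}_\mathcal{N}$ is cut out by a label-independent combinatorial condition. The remaining steps---equivariance of $\omega_\mathcal{N}$, invariance of $\bigotimes$, and the row-wise structure of $\phi$---are then direct applications of the hypotheses.
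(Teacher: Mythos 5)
Your proposal is correct and takes essentially the same route as the paper's proof in Appendix B.2: your restricted permutation $P_\mathcal{N}$ is exactly the paper's submatrix $\mathbf{P}_{\mathcal{C}_\mathcal{N}}$, your identity $\omega_\mathcal{N}(P_\mathcal{N}\mathbf{H}^l_\mathcal{N}, P_\mathcal{N}\cdot\mathcal{G}_\mathcal{N}) = P_\mathcal{N}\,\omega_\mathcal{N}(\mathbf{H}^l_\mathcal{N},\mathcal{G}_\mathcal{N})$ is the paper's node-equivariance hypothesis written via the adjacency $\widetilde{\mathbf{N}}_\mathcal{N}$ of the strictly augmented Hasse graph, and the chaining through the invariant inter-neighborhood aggregator and the row-wise $\phi$ mirrors the paper's argument step for step. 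The only cosmetic difference is that you state the conclusion globally as $\Phi(P\mathbf{H}^l) = P\,\Phi(\mathbf{H}^l)$, whereas the paper verifies it cell-by-cell, showing $\mathbf{h}^{l+1}_{\sigma_i} = \mathbf{h}^{l+1}_{\sigma_{\mathbf{P}(i)}}$.
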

Proof of Prop \ref{prop:equivar} (Appendix \ref{app:equivariance}) hinges on the node-wise permutation equivariance of $\omega_{\mathcal{N}}$ and the permutation invariance of the inter-neighborhood aggregation.

\item \textbf{Expressivity}. The expressiveness of TDL models is tied to their ability to distinguish non-isomorphic graphs. Variants of the Weisfeiler-Leman (WL) test, like the cellular WL for cell complexes \citep{bodnar2021weisfeiler}, set upper bounds on their corresponding TDL models' expressiveness, as the WL test does for GNNs \citep{xu2018powerful}.
    \begin{proposition}\label{prop:expr}
    GCCNs are strictly more expressive than CCNNs.
\end{proposition}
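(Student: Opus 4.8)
The plan is to establish the two inequalities implicit in ``strictly more expressive'' separately: first that GCCNs are at least as expressive as CCNNs, and then that this containment is strict by exhibiting a pair of non-isomorphic combinatorial complexes that some GCCN separates but no CCNN can. For the ``at least as expressive'' direction I would invoke Proposition \ref{prop:equival} directly: since for every complex $\mathcal{C}$ and every collection $\mathcal{N_C}$ there is a GCCN reproducing the computation of the corresponding CCNN, the family of functions realizable by CCNNs is contained in that realizable by GCCNs. Consequently, any two complexes assigned distinct outputs by some CCNN are also separated by some GCCN, so the distinguishing power of GCCNs dominates that of CCNNs.

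For strictness, the key observation is where GCCNs gain power: the neighborhood message functions $\omega_\mathcal{N}$ need not be message-passing, whereas every CCNN is message-passing by definition and its expressivity is therefore upper bounded by the combinatorial Weisfeiler--Leman coloring (the analogue of the cellular WL bound of \citet{bodnar2021weisfeiler}). I would thus look for a pair of complexes that is WL-indistinguishable---so that no CCNN, regardless of its neighborhoods or message functions, can separate it---but whose strictly augmented Hasse graphs carry information recoverable by a suitably chosen non-message-passing $\omega_\mathcal{N}$. A concrete and self-contained witness is the pair of $1$-dimensional complexes given by the $6$-cycle $C_6$ and the disjoint union of two triangles $2C_3$. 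Both are $2$-regular on six nodes, so the WL coloring is constant and identical on the two complexes under node adjacency, edge lower-adjacency, and node--edge incidence alike; hence all CCNNs assign them the same output. A GCCN using a single neighborhood and a base model $\omega_\mathcal{N}$ that is itself more expressive than message passing---for example, a graph Transformer equipped with Laplacian positional encodings---does separate them, since $C_6$ and $2C_3$ have different Laplacian spectra, namely $\{0,1,1,3,3,4\}$ versus $\{0,0,3,3,3,3\}$. Taking $\phi$ and $\bigotimes$ to be trivial pass-throughs then yields a bona fide GCCN distinguishing the pair, and combining the two directions gives strict dominance.

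The main obstacle I anticipate is the strictness argument's ``no CCNN can separate'' claim: it must be quantified over all admissible neighborhoods and all learnable message functions, not merely the single neighborhood used by my GCCN witness. I would discharge this by appealing to the WL upper bound on message-passing CCNNs and verifying that the candidate pair is WL-equivalent under every neighborhood one could place on a $1$-complex (adjacencies, incidences, and their per-rank restrictions), which for $C_6$ versus $2C_3$ reduces to checking that all associated augmented Hasse graphs are regular with identical local colorings. A secondary point worth stating carefully is that per-rank neighborhoods alone, being restrictions of full neighborhoods, add flexibility but no distinguishing power; the strict expressivity gain comes solely from permitting $\omega_\mathcal{N}$ to exceed message passing.
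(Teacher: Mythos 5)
Your proposal is correct, but the strictness half takes a genuinely different route from the paper's. For containment you both rely on Proposition \ref{prop:equival}. For strictness, however, the paper never exploits non-message-passing $\omega_\mathcal{N}$'s: it builds a Weisfeiler--Leman hierarchy on combinatorial complexes, defining the CCWL and $k$-CCWL tests and $k$-CCNNs, showing GCCNs subsume $k$-CCNNs (Proposition \ref{prop:GCCN-kccnn}), transferring these tests to WL and $k$-WL on strictly augmented Hasse graphs (Propositions \ref{prop:equiv-wl} and \ref{prop:equiv-kwl}), matching test power to network power via the results of \citet{morris2019weisfeiler} (Propositions \ref{prop:ccnn-wl1}--\ref{prop:kccnn-wl2}), and concluding from $k$-WL $>$ WL; its concrete witness is an icosahedron versus five tetrahedra under face down-adjacency, whose Hasse graphs are $3$-regular on $20$ nodes, with distinguishability delegated to the $n$-WL test. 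You instead locate the expressivity gain in the freedom of $\omega_\mathcal{N}$ itself and separate $C_6$ from $2C_3$ spectrally---a more elementary and self-contained argument, with explicit eigenvalues rather than an appeal to $n$-WL. Your discharge of the ``no CCNN separates'' quantifier is the right one and checks out: under every adjacency, incidence, and per-rank restriction on a $1$-complex, each cell of both complexes has a per-neighborhood degree profile that is uniform within its rank, so the (multi-neighborhood) WL coloring stabilizes at the rank partition with equal histograms, and Proposition \ref{prop:ccnn-wl1} then bounds all CCNNs. The trade-off: the paper's heavier machinery buys a hierarchy statement---GCCNs inherit $k$-WL power for every $k$ even with structured, aggregation-style submodules---whereas your argument yields strictness only and leans on the full generality of $\omega_\mathcal{N}$ as an arbitrary learnable function of $(\mathbf{H}_\mathcal{N}, \mathcal{G}_\mathcal{N})$, which makes the separation almost definitional (such an $\omega_\mathcal{N}$ could simply count triangles). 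One detail to tighten: raw Laplacian eigenvectors carry sign and basis ambiguity, so as stated your $\omega_\mathcal{N}$ is not well-defined as a function; use the eigenvalues or other basis-invariant spectral features---which already differ, $\{0,1,1,3,3,4\}$ versus $\{0,0,3,3,3,3\}$---and the argument goes through.
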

Proof of Prop. \ref{prop:expr} (Appendix \ref{app:expressivity}) shows that GCCNs surpass CCNNs in expressivity by relating CCNNs to Weisfeiler-Leman (WL) and GCCNs to $k$-WL on augmented Hasse graphs.

\end{enumerate}
\vspace{0.2cm}
\end{tcolorbox}

Given Proposition \ref{prop:equival}, GCCNs allow us to define general TDL models using any neighborhood message function $\omega_\mathcal{N}$, such as any GNN. Not only does this framework avoid having to approximate CCNN computations, as is the case in previous works \footnote{These models employ GNNs running on one augmented Hasse graph, i.e. a GCCN that, given a collection of neighborhoods $\mathcal{N_C}$, uses a single neighborhood $\mathcal{N}_{tot}$ defined, for a cell $\sigma$, as $\mathcal{N}_{tot}(\sigma) = \bigcup_{ \mathcal{N} \in \mathcal{N_C}}\mathcal{N}(\sigma)$. } \citep{jogl2022weisfeiler,jogl2022reducing, jogl2023expressivitypreserving}, but it also enjoys the same permutation equivariance as regular CCNNs (Proposition \ref{prop:equivar}). We show in Appendix \ref{app:complexity} that the resulting time complexity of a GCCN is a compromise between a typical GNN and a CCNN. Differently from the work in \citep{hajij2023tdl}, the fact that GCCNs can have arbitrary neighborhood message functions implies that non message-passing TDL models can be readily defined. \textcolor{black}{For example, one could choose $\omega_\mathcal{N}$ to be a spectral convolution neural network such as \citet{defferrard2016convolutional}.}  
To the best of our knowledge, GCCNs are the only objects in the literature that encompass all the above properties. 

\section{TopoTune}
\label{sec:topotune}
Our proposed methodology, together with its resulting GCCNs architectures, addresses the challenge of systematically generating principled, general TDL models. Here, we introduce TopoTune, a software module for defining and benchmarking GCCN architectures on the fly---a vehicle for accelerating TDL research. A quick start guide to the code and tutorial are provided at \href{https://geometric-intelligence.github.io/topotune/}{geometric-intelligence.github.io/topotune}. 
This section details TopoTune's main features. 

\paragraph{Change of Paradigm.} TopoTune introduces a new perspective on TDL through the concept of ``neighborhoods of interest,'' enabling unprecedented flexibility in architectural design. Previously fixed components of CCNNs, such as choice of topological domain, become hyperparameters of our framework. 

\paragraph{Accessible TDL.} 
Using TopoTune, a practitioner can instantiate customized GCCNs simply by modifying a few lines of a configuration file. In fact, it is sufficient to specify $(i)$ a collection of per-rank neighborhoods $\mathcal{N}_\mathcal{C}$, $(ii)$ a neighborhood message function $\omega_\mathcal{N}$, and optionally $(iii)$ some architectural parameters---e.g., the number $l$ of GCCN layers.\footnote{We provide a detailed pseudo-code for TopoTune module in Appendix \ref{app:software}.} For the neighborhood message function $\omega_\mathcal{N}$, the same configuration file enables direct import of models from PyTorch libraries such as PyTorch Geometric \citep{fey2019pyg} and Deep Graph Library \citep{yu2020idgl}. TopoTune's simplicity provides both newcomers and experts with an accessible tool for defining topological architectures.

\paragraph{Accelerating TDL Research.} 

TopoTune is fully integrated into TopoBench \citep{telyatnikov2024topobench}, a comprehensive package offering a wide range of standardized methods and tools for TDL. Practitioners can access ready-to-use models, training pipelines, tasks, and evaluation metrics, including leading open-source models from TopoX \citep{hajij2024topox}. In addition, TopoBench features the largest collection of \textit{topological liftings} currently available—transformations that map graph datasets into higher-order topological domains. Together, TopoBench and TopoTune organize the vast design space of TDL into an accessible framework, providing unparalleled versatility and standardization for practitionners.



\section{Experiments}
\label{sec:experiments}

\begin{table*}[ht]
\caption{Cross-domain, cross-task, cross-expansion, and cross-$\omega_\mathcal{N}$ comparison of GCCN architectures with top-performing CCNNs benchmarked on TopoBench \citep{telyatnikov2024topobench}. Best result is in \colorbox{bestgray}{\textbf{bold}} and results within 1 standard deviation are highlighted \colorbox{stdblue}{blue}. Experiments are run with 5 seeds. We report accuracy for classification tasks and MAE for regression. \vspace{0.3cm} }
\label{tab:sota}
\centering
\renewcommand{\arraystretch}{1.5} 
\setlength{\tabcolsep}{3.9pt}
\resizebox{\textwidth}{!}{
\begin{tabular}{lcccccccc}
\multicolumn{1}{l|}{} &
  \multicolumn{5}{c|}{Graph-Level Tasks} &
  \multicolumn{3}{c}{Node-Level Tasks} \\
\multicolumn{1}{c|}{Model} &
  MUTAG ($\uparrow$) &
  PROTEINS ($\uparrow$) &
  NCI1 ($\uparrow$) &
  NCI109 ($\uparrow$) &
  \multicolumn{1}{c|}{ZINC ($\downarrow$)} &
  Cora ($\uparrow$) &
  Citeseer ($\uparrow$) &
  PubMed ($\uparrow$) \\ \hline
\multicolumn{9}{l}{\cellcolor[HTML]{EFEFEF}Graph} \\ \hline
\multicolumn{1}{l|}{GNN (Best Model on TopoBench)} &
  79.57 ± 6.13 &
  \cellcolor[HTML]{D9E0F3}76.34 ± 1.66 &
  75.00 ± 0.99 &
  74.42 ± 0.70 &
  0.57 ± 0.04 &
  87.21 ± 1.89 &
  \cellcolor[HTML]{D9E0F3}75.53 ± 1.27 &
  \cellcolor[HTML]{D9E0F3}89.44 ± 0.24 \\ \hline
\multicolumn{9}{l}{\cellcolor[HTML]{EFEFEF}Cellular} \\ \hline
\multicolumn{1}{l|}{CCNN (Best Model on TopoBench)} &
  \multicolumn{1}{l}{\cellcolor[HTML]{D9E0F3}80.43 ± 1.78} &
  \cellcolor[HTML]{D9E0F3}76.13 ± 2.70 &
  76.67 ± 1.48 &
  75.35 ± 1.50 &
  \multicolumn{1}{c|}{0.34 ± 0.01} &
  87.44 ± 1.28 &
  \cellcolor[HTML]{D9E0F3}75.63 ± 1.58 &
  88.64 ± 0.36 \\ \hline
\multicolumn{1}{l|}{GCCN $\omega_\mathcal{N}$ = GAT} &
  \cellcolor[HTML]{D9E0F3}83.40 ± 4.85 &
  74.05 ± 2.16 &
  76.11 ± 1.69 &
  75.62 ± 0.76 &
  \multicolumn{1}{c|}{0.38 ± 0.03} &
  88.39 ± 0.65 &
  74.62 ± 1.95 &
  87.68 ± 0.33 \\
\multicolumn{1}{l|}{GCCN $\omega_\mathcal{N}$ = GCN} &
  \cellcolor[HTML]{D9E0F3}85.11 ± 6.73 &
  74.41 ± 1.77 &
  76.42 ± 1.67 &
  75.62 ± 0.94 &
  \multicolumn{1}{c|}{0.36 ± 0.01} &
  \cellcolor[HTML]{D9E0F3}88.51 ± 0.70 &
  \cellcolor[HTML]{D9E0F3}75.41 ± 2.00 &
  88.18 ± 0.26 \\
\multicolumn{1}{l|}{GCCN $\omega_\mathcal{N}$ = GIN} &
  \cellcolor[HTML]{EFEFEF}\textbf{86.38 ± 6.49} &
  72.54 ± 3.07 &
  \cellcolor[HTML]{D9E0F3}77.65 ± 1.11 &
  \cellcolor[HTML]{EFEFEF}\textbf{77.19 ± 0.21} &
  \multicolumn{1}{c|}{\cellcolor[HTML]{EFEFEF}\textbf{0.19 ± 0.00}} &
  87.42 ± 1.85 &
  \cellcolor[HTML]{D9E0F3}75.13 ± 1.17 &
  88.47 ± 0.27 \\
\multicolumn{1}{l|}{GCCN $\omega_\mathcal{N}$ = GraphSAGE} &
  \cellcolor[HTML]{D9E0F3}85.53 ± 6.80 &
  73.62 ± 2.72 &
  \cellcolor[HTML]{EFEFEF}\textbf{78.23 ± 1.47} &
  \cellcolor[HTML]{D9E0F3}77.10 ± 0.83 &
  \multicolumn{1}{c|}{0.24 ± 0.00} &
  \cellcolor[HTML]{D9E0F3}88.57 ± 0.58 &
  \cellcolor[HTML]{D9E0F3}75.89 ± 1.84 &
  \cellcolor[HTML]{D9E0F3}89.40 ± 0.57 \\
\multicolumn{1}{l|}{GCCN $\omega_\mathcal{N}$ = Transformer} &
  \cellcolor[HTML]{D9E0F3}83.83 ± 6.49 &
  70.97 ± 4.06 &
  73.00 ± 1.37 &
  73.20 ± 1.05 &
  \multicolumn{1}{c|}{0.45 ± 0.02} &
  84.61 ± 1.32 &
  \cellcolor[HTML]{D9E0F3}75.05 ± 1.67 &
  88.37 ± 0.22 \\
\multicolumn{1}{l|}{GCCN $\omega_\mathcal{N}$ = Best GNN, 1 Aug. Hasse graph} &
  \cellcolor[HTML]{D9E0F3}85.96 ± 7.15 &
  73.73 ± 2.95 &
  76.75 ± 1.63 &
  76.94 ± 0.82 &
  \multicolumn{1}{c|}{0.31 ± 0.01} &
  87.24 ± 0.58 &
  \cellcolor[HTML]{D9E0F3}74.26 ± 1.47 &
  88.65 ± 0.55 \\ \hline
\multicolumn{9}{l}{\cellcolor[HTML]{EFEFEF}Simplicial} \\ \hline
\multicolumn{1}{l|}{CCNN (Best Model on TopoBench)} &
  76.17 ± 6.63 &
  \cellcolor[HTML]{D9E0F3}75.27 ± 2.14 &
  76.60 ± 1.75 &
  \cellcolor[HTML]{D9E0F3}77.12 ± 1.07 &
  \multicolumn{1}{c|}{0.36 ± 0.02} &
  82.27 ± 1.34 &
  71.24 ± 1.68 &
  88.72 ± 0.50 \\ \hline
\multicolumn{1}{l|}{GCCN $\omega_\mathcal{N}$ = GAT} &
  79.15 ± 4.09 &
  74.62 ± 1.95 &
  74.86 ± 1.42 &
  74.81 ± 1.14 &
  \multicolumn{1}{c|}{0.57 ± 0.03} &
  88.33 ± 0.67 &
  \cellcolor[HTML]{D9E0F3}74.65 ± 1.93 &
  87.72 ± 0.36 \\
\multicolumn{1}{l|}{GCCN $\omega_\mathcal{N}$ = GCN} &
  74.04 ± 8.30 &
  \cellcolor[HTML]{D9E0F3}74.91 ± 2.51 &
  74.20 ± 2.17 &
  74.13 ± 0.53 &
  \multicolumn{1}{c|}{0.53 ± 0.05} &
  \cellcolor[HTML]{D9E0F3}88.51 ± 0.70 &
  \cellcolor[HTML]{D9E0F3}75.41 ± 2.00 &
  88.19 ± 0.24 \\
\multicolumn{1}{l|}{GCCN $\omega_\mathcal{N}$ = GIN} &
  \cellcolor[HTML]{D9E0F3}85.96 ± 4.66 &
  72.83 ± 2.72 &
  76.67 ± 1.62 &
  75.76 ± 1.28 &
  \multicolumn{1}{c|}{0.35 ± 0.01} &
  87.27 ± 1.63 &
  \cellcolor[HTML]{D9E0F3}75.05 ± 1.27 &
  88.54 ± 0.21 \\
\multicolumn{1}{l|}{GCCN $\omega_\mathcal{N}$ = GraphSAGE} &
  75.74 ± 2.43 &
  74.70 ± 3.10 &
  \cellcolor[HTML]{D9E0F3}76.85 ± 1.50 &
  75.64 ± 1.94 &
  \multicolumn{1}{c|}{0.50 ± 0.02} &
  \cellcolor[HTML]{D9E0F3}88.57 ± 0.59 &
  \cellcolor[HTML]{EFEFEF}\textbf{75.92 ± 1.85} &
  89.34 ± 0.39 \\
\multicolumn{1}{l|}{GCCN $\omega_\mathcal{N}$ = Transformer} &
  74.04 ± 4.09 &
  70.97 ± 4.06 &
  70.39 ± 0.96 &
  69.99 ± 1.13 &
  \multicolumn{1}{c|}{0.64 ± 0.01} &
  84.4 ± 1.16 &
  \cellcolor[HTML]{D9E0F3}74.6 ± 1.88 &
  88.55 ± 0.39 \\
\multicolumn{1}{l|}{GCCN $\omega_\mathcal{N}$ = Best GNN, 1 Aug. Hasse graph} &
  74.04 ± 5.51 &
  74.48 ± 1.89 &
  75.02 ± 2.24 &
  73.91 ± 3.9 &
  \multicolumn{1}{c|}{0.56 ± 0.02} &
  87.56 ± 0.66 &
  \cellcolor[HTML]{D9E0F3}74.5 ± 1.61 &
  88.61 ± 0.27 \\ \hline
\multicolumn{9}{l}{\cellcolor[HTML]{EFEFEF}Hypergraph} \\ \hline
\multicolumn{1}{l|}{CCNN (Best Model on TopoBench)} &
  \cellcolor[HTML]{D9E0F3}80.43 ± 4.09 &
  \cellcolor[HTML]{EFEFEF}\textbf{76.63 ± 1.74} &
  75.18 ± 1.24 &
  74.93 ± 2.50 &
  \multicolumn{1}{c|}{0.51 ± 0.01} &
  \cellcolor[HTML]{EFEFEF}\textbf{88.92 ± 0.44} &
  \cellcolor[HTML]{D9E0F3}74.93 ± 1.39 &
  \cellcolor[HTML]{EFEFEF}\textbf{89.62 ± 0.25} \\ \hline
\end{tabular}}
\end{table*}

We present experiments showcasing a broad class of GCCN's constructed with TopoTune. These models consistently match, outperform, or finetune existing CCNNs, often with smaller model sizes. TopoTune's integration into the TopoBench experiment infrastructure ensures a fair comparison with CCNNs from the literature, as data processing, domain lifting, and training are homogeonized.

\subsection{Experimental Setup} 

We generate our class of GCCNs by considering ten possible choices of neighborhood structure $\mathcal{N}_\mathcal{C}$ (including both regular and per-rank, see Appendix \ref{app:neighborhood_structures}) and five possible choices of $\omega_\mathcal{N}$: GCN \citep{kipf2017semi}, GAT \citep{velickovic2017graph}, GIN \citep{xu2018how}, GraphSAGE \citep{hamilton2017graphsage}, and Transformer \citep{vaswani2017attention}. We import these models directly from PyTorch Geometric \citep{fey2019pyg} and PyTorch \citep{pytorch19}. TopoTune enables running GCCNs on both an ensemble of strictly augmented Hasse graphs (eq. \ref{eq:strictlyaug}) and an augmented Hasse graph (eq. \ref{eq:singlehasse}). While CCNN results reflect extensive hyperparameter tuning by \citet{telyatnikov2024topobench} \textcolor{black}{(see that work's Appendix C.2 for details)}, we largely fix GCCN training hyperparameters using the TopoBench default configuration. 

\paragraph{Datasets.} We include a wide range of benchmark tasks (see Appendix \ref{app:description_datasets}). MUTAG, PROTEINS, NCI01, and NCI09 \citep{TUDataset} are graph-level classification tasks about molecules or proteins. ZINC \citep{irwin2012zinc} (subset) is a graph-level regression task about solubility. At the node level, the Cora, CiteSeer, and PubMed tasks \citep{yang2016revisiting} involve classifying publications within citation networks. We consider two topological domains: simplicial and cellular complexes. We use TopoBench's lifting processes to infer higher-order relationships in these data. We use node features to construct edge and face features. 

\subsection{Results and Discussion}

\paragraph{GCCNs outperform CCNNs.} Table \ref{tab:sota} compares top-performing CCNNs with our class of GCCNs. GCCNs outperform CCNNs across all datasets in the simplicial and cellular domains and match hypergraph CCNNs—something CCNNs fail to achieve in node-level tasks. Across 16 domain/dataset combinations, GCCNs exceed the best CCNN by $> 1\sigma$ in 11 cases. \textcolor{black}{In 2 combinations, GCCNs are outperformed by the best GNN included in \citet{telyatnikov2024topobench}.} Representing complexes as ensembles of augmented Hasse graphs, rather than a single graph, improves results.

\paragraph{Generalizing existing CCNNs to GCCNs improves performance.} TopoTune makes it easy to iterate upon and improve preexisting CCNNs by replicating their architecture in a GCCN setting. For example, TopoTune can generate a counterpart GCCN by replicating a CCNN's neighborhood structure, aggregation, and training scheme. We show in Table \ref{tab:tnns} that counterpart GCCNs can achieve comparable or better results than SCCN \citep{yang2022efficientrlsimplicial} and CWN \citep{bodnar2021weisfeiler} just by sweeping over additional choices of $\omega_\mathcal{N}$. In the single augmented Hasse graph regime, GCCN models are consistently more lightweight, up to half their size (see Table \ref{tab:tnns-params}).

\begin{table*}[ht!]
\caption{We compare existing CCNNs with $\omega_\mathcal{N}$-modified GCCN counterparts. We show the result for best choice of $\omega_\mathcal{N}$. Experiments are run with 5 seeds.}
\label{tab:tnns}
\setlength{\extrarowheight}{5pt}
\resizebox{\textwidth}{!}{%
\begin{tabular}{lccccccc}
\multicolumn{1}{l|}{}      & \multicolumn{4}{c|}{Graph-Level Tasks}                & \multicolumn{3}{c}{Node-Level Tasks} \\
\multicolumn{1}{c|}{Model} & MUTAG & PROTEINS & NCI1 & \multicolumn{1}{c|}{NCI109} & Cora     & Citeseer     & PubMed     \\ \hline
\multicolumn{8}{l}{\cellcolor[HTML]{EFEFEF}SCCN \cite{yang2022efficientrlsimplicial}}                                     \\ \hline
\multicolumn{1}{l|}{Benchmark results \cite{telyatnikov2024topobench}} &
  \multicolumn{1}{l}{70.64 ± 5.90} &
  \cellcolor[HTML]{D9E0F3}74.19 ± 2.86 &
  \cellcolor[HTML]{EFEFEF}\textbf{76.60 ± 1.75} &
  \multicolumn{1}{c|}{\cellcolor[HTML]{EFEFEF}\textbf{77.12 ± 1.07}} &
  82.19 ± 1.07 &
  69.60 ± 1.83 &
  \cellcolor[HTML]{EFEFEF}\textbf{88.18 ± 0.32} \\ \hline
\multicolumn{1}{l|}{GCCN, on ensemble of strictly aug. Hasse graphs} &
  \multicolumn{1}{l}{\cellcolor[HTML]{EFEFEF}\textbf{82.13 ± 4.66}} &
  \cellcolor[HTML]{EFEFEF}\textbf{75.56 ± 2.48} &
  \cellcolor[HTML]{D9E0F3}75.6 ± 1.28 &
  \multicolumn{1}{c|}{74.19 ± 1.44} &
  \cellcolor[HTML]{EFEFEF}\textbf{88.06 ± 0.93} &
  \cellcolor[HTML]{D9E0F3}74.67 ± 1.24 &
  \multicolumn{1}{l}{87.70 ± 0.19} \\ \hline
\multicolumn{1}{l|}{GCCN, on 1 aug. Hasse graph} &
  69.79 ± 4.85 &
  \cellcolor[HTML]{D9E0F3}74.48 ± 2.67 &
  74.63 ± 1.76 &
  \multicolumn{1}{c|}{70.71 ± 5.50} &
  \cellcolor[HTML]{D9E0F3}87.62 ± 1.62 &
  \cellcolor[HTML]{EFEFEF}\textbf{74.86 ± 1.7} &
  \multicolumn{1}{l}{87.80 ± 0.28} \\ \hline
\multicolumn{8}{l}{\cellcolor[HTML]{EFEFEF}CWN \cite{bodnar2021weisfeiler}}                                               \\ \hline
\multicolumn{1}{l|}{Benchmark results \cite{telyatnikov2024topobench}} &
  \cellcolor[HTML]{D9E0F3}80.43 ± 1.78 &
  \cellcolor[HTML]{EFEFEF}\textbf{76.13 ± 2.70} &
  73.93 ± 1.87 &
  \multicolumn{1}{c|}{73.80 ± 2.06} &
  86.32 ± 1.38 &
  \cellcolor[HTML]{EFEFEF}\textbf{75.20 ± 1.82} &
  \cellcolor[HTML]{EFEFEF}\textbf{88.64 ± 0.36} \\ \hline
\multicolumn{1}{l|}{GCCN, on ensemble of strictly aug. Hasse graphs} &
  \cellcolor[HTML]{EFEFEF}\textbf{84.26 ± 8.19} &
  \cellcolor[HTML]{D9E0F3}75.91 ± 2.75 &
  73.87 ± 1.10 &
  \multicolumn{1}{c|}{73.75 ± 0.49} &
  85.64 ± 1.38 &
  \cellcolor[HTML]{D9E0F3}74.89 ± 1.45 &
  \cellcolor[HTML]{D9E0F3}88.40 ± 0.46 \\ \hline
\multicolumn{1}{l|}{GCCN, on 1 aug. Hasse graph} &
  \multicolumn{1}{l}{\cellcolor[HTML]{D9E0F3}81.70 ± 5.34} &
  \cellcolor[HTML]{D9E0F3}75.05 ± 2.39 &
  \cellcolor[HTML]{EFEFEF}\textbf{75.14 ± 0.76} &
  \multicolumn{1}{c|}{\cellcolor[HTML]{EFEFEF}\textbf{75.39 ± 1.01}} &
  \cellcolor[HTML]{EFEFEF}\textbf{86.44 ± 1.33} &
  \cellcolor[HTML]{D9E0F3}74.45 ± 1.59 &
  \cellcolor[HTML]{D9E0F3}88.56 ± 0.55 \\ \hline
\end{tabular}}
\end{table*}

\paragraph{GCCNs perform competitively to CCNNs with fewer parameters.} GCCNs are often more parameter efficient than existing CCNNs in simplicial and cellular domains, and in some instances (MUTAG, NCI1, NCI09), even in the hypergraph domain. We refer to Table \ref{tab:sota-params}. Even as GCCNs become more parameter-intensive for large graphs with high-dimensional embeddings---as seen in node-level tasks---they remain competitive. (We refer to Appendix \ref{app:larger_datasets} for additional results on larger node-level datasets.) For instance, on the Citeseer dataset, a GCCN ($\omega_\mathcal{N}$ = GraphSAGE) outperforms the best existing CCNN while being 28\% smaller. Training times in Appendix \ref{app:model-time} show GCCNs train comparably on smaller datasets but slow down on larger ones, likely due to TopoTune's on-the-fly graph expansion. Preprocessing this expansion in future work could mitigate the lag.

\paragraph{TopoTune finds parameter-efficient GCCNs.} By easily exploring a wide landscape of possible GCCNs for a given task, TopoTune helps identify models that maximize performance while minimizing model size. Fig. \ref{fig:perform-param} illustrates this trade-off by comparing the performance and size of selected GCCNs (see Appendix \ref{app:allparams} for more).
On the PROTEINS dataset, two GCCNs using per-rank neighborhood structures (orange and black) achieve performance within 2\% of the best result while requiring as little as 48\% of the parameters. This reduction is due to fewer neighborhoods $\mathcal{N}$, resulting in fewer $\omega_\mathcal{N}$ blocks per GCCN layer. Similarly, on ZINC, lightweight neighborhood structures (orange and dark green) are competitive with reduced parameter costs. Node-level tasks, see less benefit, likely due to the larger graph sizes and higher-dimensional input features.

\begin{figure*}[ht!]
    \centering
    \includegraphics[width=1.0\linewidth]{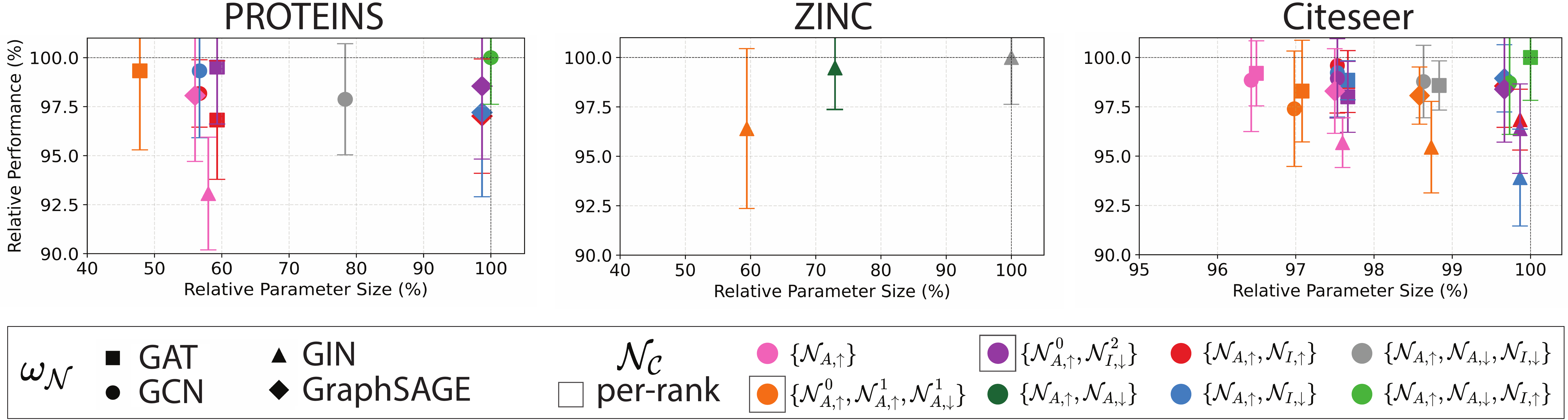}
    \caption{\textbf{GCCN performance versus size.} We compare various GCCNs across three datasets on the cellular domain, two graph-level (left, middle) and one node-level (right). Each GCCN (point) has a different neighborhood structure $\mathcal{N}_\mathcal{C}$, some of which can only be represented as per-rank structures ($\square$ in legend), and message function $\omega_\mathcal{N}$. The amount of layers is kept constant according to the best performing model. The axes are scaled relative to this model.}
    \label{fig:perform-param}
\end{figure*}

\paragraph{Impactfulness of GNN choice is dataset specific.} Fig. \ref{fig:perform-param} also provides insights into the impact of neighborhood message functions. On ZINC, GIN clearly outperforms all other models, which do not even appear in the plot's range. In the less clear-cut cases of PROTEINS and Citeseer, we observe a trade-off between neighborhood structure and message function complexity. We find that larger base models (GIN, GraphSAGE) on lightweight neighborhood structures perform comparably to simpler base models (GAT, GCN) on larger neighborhood structures. This tradeoff warrants further research on the dataset-specific importance of neighborhood choice, or lack thereof. We refer to Appendix \ref{app:fancy_gnns} for additional experiments with more advanced GNNs, GATv2 \citep{brody2021attentive} and PNA \citep{corso2020pna}, as choices of $\omega_\mathcal{N}$.

\section{Conclusion}
\label{sec:conclusion}
This work introduces a simple yet powerful graph-based methodology for constructing Generalized Combinatorial Complex Neural Networks (GCCNs), TDL architectures that generalize and subsume standard CCNNs. Additionally, we introduce TopoTune, the first lightweight software module for systematically and easily implementing new TDL architectures across  topological domains. In doing so, we have addressed, either in part or in full, 7 of the 11 open problems of the field \citep{papamarkou2024position}. Future work includes customizing GCCNs for application-specific and potentially sparse or multimodal datasets, and leveraging software from state-of-the-art GNNs.  TopoTune will also help bridge the gap with other fields such as attentional learning and $k$-hop higher-order GNNs \citep{morris2019weisfeiler,maron2019provably}.



\section*{Acknowledgements}
\vspace{-.2cm}
M.P. acknowledges the support of National Science Foundation (NSF) CAREER 2240158 and NSF Grant 2134241, as well as from the National Science and Engineering Research Council of Canada. G.B. acknowledges support from NSF Grant 2134241. C.B. acknowledges support from the National Institutes of Health Grant 1R01ES037156-01. N.M. acknowledges support from NSF Grant 2313150.

\section*{Impact Statement}
\vspace{-.2cm}
This paper presents work whose goal is to advance the field of 
Machine Learning. There are many potential societal consequences 
of our work, none which we feel must be specifically highlighted here.

\bibliography{bibliography}
\bibliographystyle{icml2025}

\newpage
\appendix
\onecolumn
\section{Domains of Topological Deep Learning} \label{app:topological_domains}

\begin{figure*}
\centering
\includegraphics[width=0.7\textwidth]{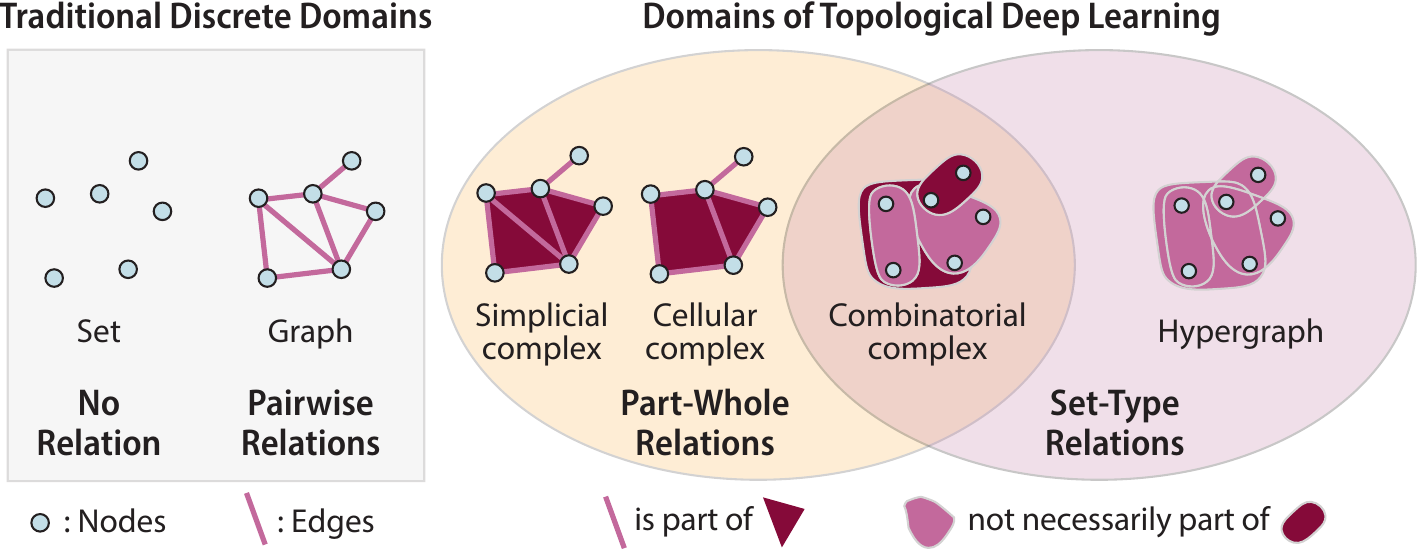}
	\caption{\textbf{Topological Deep Learning Domains.} Nodes in blue, (hyper)edges in pink, and faces in dark red. Figure adopted from \citet{papillon2023architectures}.} \label{fig:domains}
\end{figure*}

We summarize the different discrete domains leveraged within TDL and, in doing so, contextualize how combinatorial complexes generalize all of them. To that end, we will closely follow the description of \cite{papillon2023architectures}, using as well its very clarifying Figure \ref{fig:domains}. We recommend this survey for a high-level overview of TDL literature, and the more extensive work of \cite{hajij2023tdl} for a detailed formulation of the field. We also refer to Appendix C of \citet{battiloro2024n} for a concise mathematical description of each domain. From left to right in Figure \ref{fig:domains}, the different domains in TDL are:

\subsection*{Traditional Discrete Domains}

\paragraph{Set / Pointcloud.} A collection of points called \textit{nodes} without any additional structure.
\paragraph{Graph.} A set of points (nodes) connected with edges that denote pairwise relationships.

\subsection*{Set + Part-Whole Relations}

\paragraph{Simplicial Complex.} A generalization of a graph that incorporates hierarchical part-whole relations through the multi-scale construction of cells. Nodes are rank 0-cells that can be combined to form edges (rank 1 cells). Edges are, in turn, combined to form faces (rank 2 cells), which are combined to form volumes (rank 3 cells), and so on. In particular, each cell $\sigma$ in a simplicial complex must contain all lower dimensional cells $\tau$ such that $\tau \subseteq \sigma$. Therefore, faces must be triangles, volumes must be tetrahedrons, and so forth.

\paragraph{Cellular Complex.} A generalization of an simplicial complex in which cells are not limited to simplexes, but may instead take any shape: faces can involve more than three nodes, volumes more than four faces,
and so on. This flexibility endows CCs with greater expressivity than simplicial complexes \citep{bodnar2021weisfeiler}, but still edges only connect pairs of nodes.

\subsection*{Set + Set-Type Relations}

\paragraph{Hypergraph:} A generalization of a graph, in which higher-order edges called hyperedges can connect arbitrary sets of two or more nodes. Connections in HGs represent set-type relationships, in which participation in an interaction is not implied by any other relation in the system. This makes HGs an ideal choice for data with abstract and arbitrarily large interactions of equal importance, such as semantic text and citation networks.

\subsection*{Set + Part-Whole and Set-Type Relations}

\paragraph{Combinatorial Complex:} A structure that combines features of hypergraphs and cellular complexes. Like a hypergraph, edges may connect any number of nodes. Like a cellular complex, cells can be combined to form higher-ranked structures. Hence, combinatorial complexes generalize all other topological domains.

\section{Proofs} \label{app:proofs}
\subsection{Proof of Generality} \label{app:generality}
The proof is straightforward. It is sufficient to set $\omega_\mathcal{N}(\mathbf{H}^l_{\mathcal{N}},\mathcal{G}_{\mathcal{N}})$ to $\{\bigoplus_{y \in \mathcal{N}(\sigma)} \psi_{\mathcal{N},\textrm{rk}(\sigma)}\left(\mathbf{h}^l_\sigma,\mathbf{h}^l_\tau\right)\}_{\sigma \in \mathcal{C}}$ in (\ref{eq:smp-tnn}) as all $y \in \mathcal{N}(\sigma)$ are part of the node set $\mathcal{C}_\mathcal{N}$ of the strictly augmented Hasse graph of $\mathcal{N}$ by definition.

\subsection{Proof of Equivariance}\label{app:equivariance}
As for GNNs, an amenable property for GCCNNs is the awareness w.r.t. relabeling of the cells. In other words, given that the order in which the cells are presented to the networks is arbitrary -because CCs, like (undirected) graphs, are purely combinatorial objects-, one would expect that if the order changes, the output changes accordingly. To formalize this concept, we need the following notions.

\noindent\textbf{Matrix Representation of a Neighborhood.} Assume again to have a combinatorial complex $\mathcal{C}$ containing $C:=|\mathcal{C}|$ cells and a neighborhood function $\mathcal{N}$ on it. Assume again to give an arbitrary labeling to the cells in the complex, and denote the $i$-th cell with $\sigma_i$. The matrix representation of the neighborhood function is a matrix $\mathbf{N}_\mathcal{N} \in \mathbb{R}^{C \times C}$ such that $\mathbf{N}_{i,j}=1$ if the $\sigma_j \in \mathcal{N}(\sigma_i)$ or zero otherwise. We notice that the submatrix $\widetilde{\mathbf{N}}_\mathcal{N} \in \mathbb{R}^{|\mathcal{C}_\mathcal{N}| \times |\mathcal{C}_\mathcal{N}|}$ obtained by removing all the zero rows and columns is the adjacency matrix of the strictly augmented Hasse graph $\mathcal{G}_{\mathcal{C}_\mathcal{N}}$ induced by $\mathcal{N}$.

\noindent\textbf{Permutation Equivariance.} Let $\mathcal{C}$ be combinatorial complex, $\mathcal{N}_\mathcal{C}$ a collection of neighborhoods on it, and $\mathbf{N} =\{\mathbf{N}_\mathcal{N}\}_{\mathcal{N} \in \mathcal{N}_\mathcal{C}}$ the set collecting the corresponding neighborhood matrices. Let $\mathbf{P}\in \mathbb{R}^{C \times C}$ be a permutation matrix. 
Finally, denote by $\mathbf{P H}$ the permuted embeddings and by $\{\mathbf{P N}_\mathcal{N}\mathbf{P}^T\}_{\mathcal{N} \in \mathcal{N}_\mathcal{C}}$, the permuted neighborhood matrices. We say that a function $f$ : $\left(\mathbf{H}^l, \mathbf{B}\right) \mapsto \mathbf{H}^{l+1}$ is cell permutation equivariant if $f\left(\mathbf{P H}^l, \{\mathbf{P N}_\mathcal{N}\mathbf{P}^T\}_{\mathcal{N} \in \mathcal{N}_\mathcal{C}}\right)=\mathbf{P} f\left(\mathbf{H}^l, \{\mathbf{ N}_\mathcal{N}\}_{\mathcal{N} \in \mathcal{N}_\mathcal{C}}\right)$ for any permutation matrix $\mathbf{P}$. Intuitively, the permutation matrix changes the arbitrary labeling of the cells, and a permutation equivariant function is a function that reflects the change in its output.

\textit{Proof of Proposition \ref{prop:equivar}.} We follow the approach from \citep{bodnar2021weisfeiler}. Given any permutation matrix $\mathbf{P}$, for a cell $\sigma_i$, let us denote its permutation as $\sigma_{\mathbf{P}(i)}$ with an abuse of notation.  Let $\mathbf{h}^{l+1}_{\sigma_i}$ be the output embedding of cell $\sigma_i$ for the $l$-th layer of a GCCN  taking $(\mathbf{H}^l, \{\mathbf{N}_\mathcal{N}\}_{\mathcal{N}\in \mathcal{N}_\mathcal{C}})$ as input, and $\mathbf{h}^{l+1}_{\sigma_{\mathbf{P}(i)}}$ be the output embedding of cell $\sigma_{\mathbf{P}(i)}$ for the same GCCN layer taking $\left(\mathbf{P} \mathbf{H}^l, \{\mathbf{PN}_\mathcal{N}\mathbf{P}^T\}_{\mathcal{N}\in \mathcal{N}_\mathcal{C}}\right)$ as input. To prove the permutation equivariance, it is sufficient to show that $\mathbf{h}^{l+1}_{\sigma_i} = \mathbf{h}^{l+1}_{\sigma_{\mathbf{P}(i)}}$ as the update function $\phi$ is row-wise, i.e., it independently acts on each cell.
To do so, we show that the (multi-)set of embeddings being passed to the neighborhood message function, aggregation, and update functions are the same for the two cells $\sigma_i$ and $\sigma_{\mathbf{P}(i)}$. The neighborhood message functions act on the strictly augmented Hasse graph of $\mathcal{G}_{\mathcal{C}_\mathcal{N}}$ of $\mathcal{N}$, thus we work with the submatrix $\widetilde{\mathbf{N}}_\mathcal{N}$.  The neighborhood message function is assumed to be  \textit{node} permutation equivariant, i.e., denoting again the embeddings of the cells in $\mathcal{G}_{\mathcal{C}_\mathcal{N}}$ with $\mathbf{H}^l_{\mathcal{C}_\mathcal{N}} \in \mathbb{R}^{|\mathcal{C}_\mathcal{N}|\times F^l}$ and identifying $\mathcal{G}_{\mathcal{C}_\mathcal{N}}$ with $\widetilde{\mathbf{N}}_\mathcal{N}$, it holds that $\omega_\mathcal{N}(\mathbf{P}_{\mathcal{C}_\mathcal{N}}\mathbf{H}^l_{\mathcal{C}_\mathcal{N}}, \mathbf{P}_{\mathcal{C}_\mathcal{N}}\widetilde{\mathbf{N}}_\mathcal{N}\mathbf{P}_{\mathcal{C}_\mathcal{N}}^T) = \mathbf{P}_{\mathcal{C}_\mathcal{N}}\omega_\mathcal{N}(\mathbf{H}^l_{\mathcal{C}_\mathcal{N}}, \widetilde{\mathbf{N}}_\mathcal{N})$, where $\mathbf{P}_{\mathcal{C}_\mathcal{N}}$ is the submatrix of $\mathbf{P}$ given by the rows and the columns corresponding to the cells in $\mathcal{G}_{\mathcal{C}_\mathcal{N}}$. This assumption, together with the assumption that the inter-neighborhood aggregation is assumed to be \textit{cell} permutation invariant, i.e.  $\bigotimes_{\mathcal{N}\in \mathcal{N}_\mathcal{C}}\mathbf{P}_{\mathcal{C}_\mathcal{N}}\omega_\mathcal{N}(\mathbf{H}^l_{\mathcal{C}_\mathcal{N}}, \widetilde{\mathbf{N}}_\mathcal{N}) = \bigotimes_{\mathcal{N}\in \mathcal{N}_\mathcal{C}}\omega_\mathcal{N}(\mathbf{H}^l_{\mathcal{C}_\mathcal{N}}, \widetilde{\mathbf{N}}_\mathcal{N})$, trivially makes the overall composition of the neighborhood message function with the inter-neighborhood aggregation \textit{cell} permutation invariant.  This fact, together with the fact that the (labels of) the neighbors of the cell $\sigma_i$ in $\mathcal{N}$ are given by the nonzero elements of the $i$-th row of $\mathbf{N}_\mathcal{N}$, or the corresponding row of $\widetilde{\mathbf{N}}_\mathcal{N}$, and that the columns and rows of $\widetilde{\mathbf{N}}_\mathcal{N}$ are permuted in the same way the rows of the feature matrix $\mathbf{H}^l_{\mathcal{C}_\mathcal{N}}$ are permuted, implies 
\begin{equation}
    [\widetilde{\mathbf{N}}_\mathcal{N}]_{i,j}=[\mathbf{P}_{\mathcal{C}_\mathcal{N}}\widetilde{\mathbf{N}}_\mathcal{N}\mathbf{P}_{\mathcal{C}_\mathcal{N}}^T]_{\mathbf{P}_{\mathcal{C}_\mathcal{N}}(i),\mathbf{P}_{\mathcal{C}_\mathcal{N}}(j)},  
\end{equation}
thus that $\sigma_i$ and $\sigma_{\mathbf{P}(i)}$ receive the same neighborhood message from the neighboring  cells in $\mathcal{N}$, for all $\mathcal{N}\in\mathcal{N}_\mathcal{C}$.

\clearpage
\subsection{Proof of Expressivity}\label{app:expressivity}

We provide the theory required to prove Proposition~\ref{prop:expr}, i.e., to prove that GCCNs are strictly more expressive than CCNNs. The definitions and propositions from this subsection are summarized in Figure~\ref{fig:topotune_expressivity}. This figure serves as a graphical reading guide for the subsection.

\begin{figure}
    \centering
    \includegraphics[width=1\linewidth]{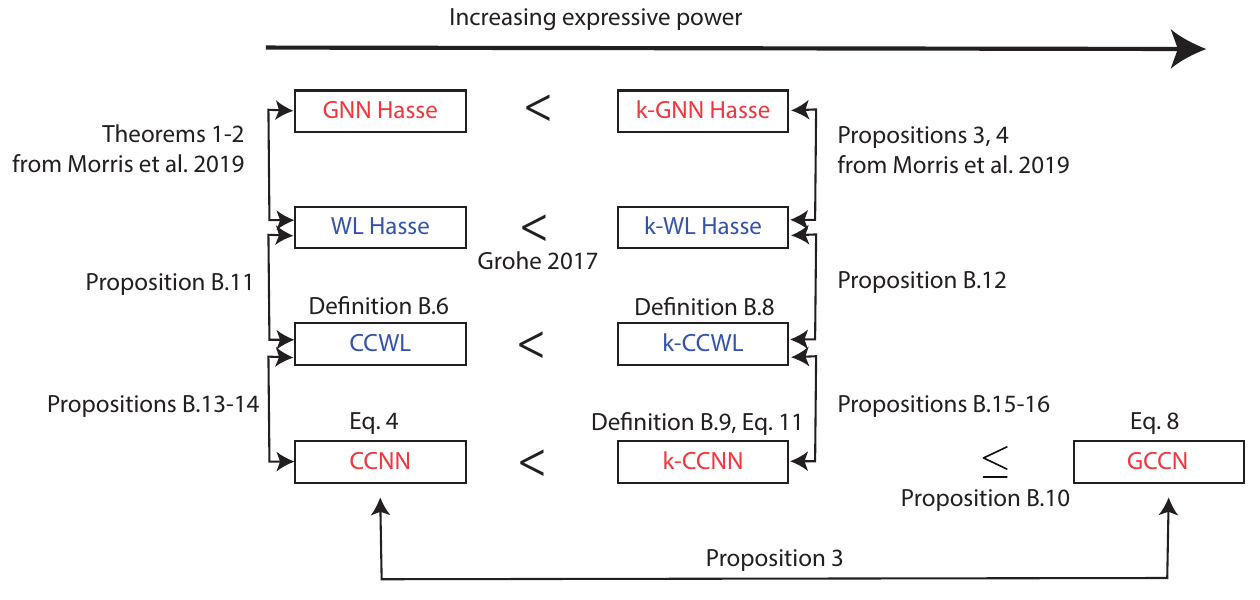}
    \caption{Graphical summary of the definitions and propositions related to the expressivity of CCNNs and GCCNs and of the different WL tests. Neural networks expressivity is in red, and WL test expressivity is in blue.}
    \label{fig:topotune_expressivity}
\end{figure}



\subsubsection{Homomorphism and Isomorphism Induced by Neighborhoods}

We first recall the notion of homomorphism of a combinatorial complex (CC) from \citep{hajij2023tdl} and generalize it to the notions of homomorphism and isomorphism of CCs induced by a neighborhood $\mathcal{N}$.

\begin{definition}[CC-Homomorphism~\citep{hajij2023tdl}]\label{def:cc-hom}
    A homomorphism from a CC $\left(\mathcal{V}_1, \mathcal{C}_1, \mathrm{rk}_1\right)$ to a CC $\left(\mathcal{V}_2, \mathcal{C}_2, \mathrm{rk}_2\right)$, also called a CC-homomorphism, is a function $f: \mathcal{C}_1 \rightarrow \mathcal{C}_2$ that satisfies the following conditions:
    \begin{enumerate}
        \item If $\sigma, \tau \in \mathcal{C}_1$ satisfy $\sigma \subseteq \tau$, then $f(\sigma) \subseteq f(\tau)$.
        \item If $\sigma \in \mathcal{C}_1$, then $\mathrm{rk}_1(\sigma) \geq \operatorname{rk}_2(f(\sigma))$.
    \end{enumerate}
\end{definition}

Definition~\ref{def:cc-hom} proposes a CC-homomorphism that respects the incidence structures of the CCs, denoted by the symbol $\subseteq$ in the definition above. We generalize Definition~\ref{def:cc-hom} by allowing CC-homomorphisms to take into account a labeling of the cells and to be defined in terms of general neighborhood structures beyond incidence. We first define a labeled combinatorial complex.

\begin{definition}[Labeled Combinatorial Complex]
    A labeled combinatorial complex $(\mathcal{C}, \ell)$ is a CC $\mathcal{C}$ equipped with a cell coloring $\ell : \mathcal{C} \mapsto \Sigma$ with arbitrary codomain $\Sigma$. We say that $\ell(\sigma)$ is a label or color of cell $\sigma \in \mathcal{C}$.
\end{definition}

Next, we provide our definitions of homomorphisms.

\begin{definition}[CC-Homomorphism induced by $(\mathcal{N}_1, \mathcal{N}_2)$]\label{def:cc-n-hom}
    A homomorphism from a CC $\left(\mathcal{V}_1, \mathcal{C}_1, \mathrm{rk}_1\right)$ with neighborhood $\mathcal{N}_1$ to a CC $\left(\mathcal{V}_2, \mathcal{C}_2, \mathrm{rk}_2\right)$ with neighborhood $\mathcal{N}_2$, also called a CC-homomorphism induced by $(\mathcal{N}_1, \mathcal{N}_2)$, is a function $f: \mathcal{C}_1 \rightarrow \mathcal{C}_2$ that satisfies: If $\sigma, \tau \in \mathcal{C}_1$ are such that $\tau \in \mathcal{N}_1(\sigma)$, then $f(\tau)\in \mathcal{N}_2(f(\sigma))$. A labeled CC-homomorphism induced by $(\mathcal{N}_1, \mathcal{N}_2)$ is a CC-homomorphism induced by $(\mathcal{N}_1, \mathcal{N}_2)$ that additionally respects labeling of the cells, that is: if $\sigma, \tau \in \mathcal{C}_1$ have the same label, then $f(\sigma), f(\tau) \in \mathcal{C}_2$ also have the same label.
\end{definition}

We prove that a CC-homomorphism induced by $(\mathcal{N}_1, \mathcal{N}_2)$ is equivalent to a homomorphism of the respective strictly augmented Hasse graphs $\mathcal{G}_{\mathcal{N}_1}$, $\mathcal{G}_{\mathcal{N}_2}$.

\begin{proposition}\label{prop:equiv-homo}
    For every CC-homomorphism $f$ from $\mathcal{C}_1$ to $\mathcal{C}_2$ induced by $(\mathcal{N}_1, \mathcal{N}_2)$, there exists a unique graph homomorphism between their respective strictly augmented Hasse graphs $\mathcal{G}_{\mathcal{N}_1}$ and $\mathcal{G}_{\mathcal{N}_2}$.
\end{proposition}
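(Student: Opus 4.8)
The plan is to construct the graph homomorphism explicitly from the CC-homomorphism and then verify the graph-homomorphism property, exploiting the fact that the strictly augmented Hasse graph $\mathcal{G}_{\mathcal{N}_i}$ has a node set that is literally a subset of the cells of $\mathcal{C}_i$ (those $\sigma$ with $\mathcal{N}_i(\sigma)\neq\emptyset$) and edges given by $(\tau,\sigma)$ with $\tau\in\mathcal{N}_i(\sigma)$, as defined in~(\ref{eq:strictlyaug}).

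\textbf{Construction.} Let $f:\mathcal{C}_1\to\mathcal{C}_2$ be a CC-homomorphism induced by $(\mathcal{N}_1,\mathcal{N}_2)$. I would define the candidate graph homomorphism $\tilde f:\mathcal{C}_{\mathcal{N}_1}\to\mathcal{C}_{\mathcal{N}_2}$ simply as the restriction of $f$ to the node set $\mathcal{C}_{\mathcal{N}_1}$ of $\mathcal{G}_{\mathcal{N}_1}$. The first step is to check that this restriction is well-defined, i.e., that $f$ maps $\mathcal{C}_{\mathcal{N}_1}$ into $\mathcal{C}_{\mathcal{N}_2}$: if $\sigma\in\mathcal{C}_{\mathcal{N}_1}$ then $\mathcal{N}_1(\sigma)\neq\emptyset$, so there exists $\tau\in\mathcal{N}_1(\sigma)$; the defining property of $f$ gives $f(\tau)\in\mathcal{N}_2(f(\sigma))$, hence $\mathcal{N}_2(f(\sigma))\neq\emptyset$ and $f(\sigma)\in\mathcal{C}_{\mathcal{N}_2}$.

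\textbf{Homomorphism property.} Next I would verify that $\tilde f$ sends edges to edges. Take an edge $(\tau,\sigma)\in\mathcal{E}_{\mathcal{N}_1}$, which by~(\ref{eq:strictlyaug}) means $\tau\in\mathcal{N}_1(\sigma)$. The inducing property of $f$ immediately yields $f(\tau)\in\mathcal{N}_2(f(\sigma))$, i.e., $(\tilde f(\tau),\tilde f(\sigma))\in\mathcal{E}_{\mathcal{N}_2}$, so $\tilde f$ is a graph homomorphism. This direction is essentially a translation between two equivalent encodings of the same neighborhood relation, so I expect it to be routine.

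\textbf{Uniqueness and the main obstacle.} For uniqueness I would argue that any graph homomorphism realizing $f$ must agree with $f$ on every node of $\mathcal{G}_{\mathcal{N}_1}$, since the nodes of the strictly augmented Hasse graph \emph{are} the cells of $\mathcal{C}_1$ and the correspondence between cells and Hasse-graph nodes is the identity; thus the restriction of $f$ is forced. The subtle point---and the step I expect to require the most care---is pinning down precisely what ``unique graph homomorphism between $\mathcal{G}_{\mathcal{N}_1}$ and $\mathcal{G}_{\mathcal{N}_2}$'' means here, since uniqueness holds only relative to a fixed identification of Hasse-graph nodes with cells (otherwise one could relabel and destroy uniqueness). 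I would therefore state at the outset the convention that both $\mathcal{G}_{\mathcal{N}_1}$ and $\mathcal{G}_{\mathcal{N}_2}$ carry the canonical node labeling inherited from $\mathcal{C}_1,\mathcal{C}_2$, under which ``the graph homomorphism corresponding to $f$'' is unambiguous. With that convention, the bijective correspondence $f\leftrightarrow\tilde f$ is immediate, and the proposition follows. If the labeled version is also intended, I would additionally note that $\tilde f$ respects the induced node coloring exactly when $f$ is a labeled CC-homomorphism, so the same restriction argument transfers verbatim.
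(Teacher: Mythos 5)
Your proof is correct and takes essentially the same route as the paper's: define $\tilde f$ via the canonical identification of Hasse-graph nodes with cells, and observe that edge preservation follows immediately from the inducing property of $f$ together with the edge definition in~(\ref{eq:strictlyaug}). Your additional checks---that $f$ actually maps $\mathcal{C}_{\mathcal{N}_1}$ into $\mathcal{C}_{\mathcal{N}_2}$, and that uniqueness is only meaningful relative to the fixed cell-to-node identification---are refinements of the same argument that the paper leaves implicit (the paper's proof also briefly sketches a converse construction, which the proposition as stated does not require).
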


\begin{proof}
 Consider $f$ a CC-homomorphism from $\mathcal{C}_1$ to $\mathcal{C}_2$ induced by $(\mathcal{N}_1, \mathcal{N}_2)$ as in Definition~\ref{def:cc-n-hom}. Define the function $\tilde{f}$ from nodes of $\mathcal{G}_{\mathcal{N}_1}$ to the nodes of $\mathcal{G}_{\mathcal{N}_2}$ corresponding to $f$, i.e., $\tilde f: \mathcal{C}_{\mathcal{N}_1} \mapsto \mathcal{C}_{\mathcal{N}_2}$ defined as $\tilde f(\tilde \sigma) = \tilde{f(\sigma)}$ where $\tilde \sigma$ is the node in $\mathcal{G}_{\mathcal{N}_1}$ corresponding to the cell $\sigma$ in $\mathcal{C}_1$, and $\tilde{f(\sigma)}$ is the node in $\mathcal{G}_{\mathcal{N}_2}$ corresponding to the cell $f(\sigma)$ in $\mathcal{C}_2$. We show that $\tilde f$ is a graph homomorphism from $\mathcal{G}_{\mathcal{N}_1}$ to $\mathcal{G}_{\mathcal{N}_2}$, i.e., a function from the nodes of $\mathcal{G}_{\mathcal{N}_1}$ to the nodes of $\mathcal{G}_{\mathcal{N}_2}$ that preserves edges.
 
 By definition of the CC-homomorphism induced by $(\mathcal{N}_1, \mathcal{N}_2)$, we have: if $\tau \in \mathcal{N}_1(\sigma)$ then $f(\tau) \in \mathcal{N}_2(f(\sigma))$. Recognizing that $\mathcal{N}_1$ defines edges of $\mathcal{G}_{\mathcal{N}_1}$, and $\mathcal{N}_2$ defines edgess of $\mathcal{G}_{\mathcal{N}_2}$, we have: if $(\tilde \sigma, \tilde \tau)$ is an edge in $\mathcal{G}_{\mathcal{N}_1}$, then $(\tilde{f}(\tilde \sigma), \tilde f(\tilde \tau))$ is an edge in $\mathcal{G}_{\mathcal{N}_2}$. Thus, a CC-homomorphism induced by $(\mathcal{N}_1, \mathcal{N}_2)$ gives a homomorphism of the strictly augmented Hasse graphs.

 Conversely, if $\tilde f$ is a graph homomorphism from $\mathcal{G}_{\mathcal{N}_1}$ to $\mathcal{G}_{\mathcal{N}_2}$, then we similarly construct a CC-homomorphism $f$ between $\mathcal{C}_1$ and $\mathcal{C}_2$. This concludes the proof.
\end{proof}

Lastly, we can define a notion of CC-isomorphism induced by neighborhood structures.

\begin{definition}[CC-Isomorphism induced by $(\mathcal{N}_1, \mathcal{N}_2)$]
    A isomorphism from a CC $\left(\mathcal{V}_1, \mathcal{C}_1, \mathrm{rk}_1\right)$ with neighborhood $\mathcal{N}_1$ to a CC $\left(\mathcal{V}_2, \mathcal{C}_2, \mathrm{rk}_2\right)$ with neighborhood $\mathcal{N}_2$, also called a CC-isomorphism induced by $(\mathcal{N}_1, \mathcal{N}_2)$, is an invertible CC-homomorphism induced by $(\mathcal{N}_1, \mathcal{N}_2)$ whose inverse is a CC-isomorphism induced by $(\mathcal{N}_2, \mathcal{N}_1)$. A labeled CC-isomorphism induced by $(\mathcal{N}_1, \mathcal{N}_2)$ is a CC-isomorphism that additionally respects labels.
\end{definition}

\subsubsection{Weisfeiler-Leman (WL) tests on Combinatorial Complexes}

We propose two WL tests, called CCWL and (set-based) $k$-CCWL that generalize the WL and the (set-based) $k$-WL tests to labeled combinatorial complexes. We start with the generalization of the WL test to labeled combinatorial complexes.

\begin{definition}[The CC Weisfeiler-Leman (CCWL) test on labeled combinatorial complexes]\label{def:ccwl}
        Let $(\mathcal{C}, \ell)$ be a labeled combinatorial complex. Let $\mathcal{N}$ be a neighborhood on $\mathcal{C}$. The scheme proceeds as follows:
    \begin{itemize}[leftmargin=*]
        \item Initialization: Cells $\sigma$ are initialized with the labels given by $\ell$, i.e.: for all $\sigma \in \mathcal{C}$, we set: $c_{\sigma, \ell}^0 = \ell(\sigma)$.
        \item Refinement: Given colors of cells at iteration $t$, the refinement step computes the color of cell $\sigma$ at the next iteration $c_{\sigma, \ell}^{t+1}$ using a perfect HASH function as follows: 
            \begin{align*}
            c_\mathcal{N}^t(\sigma) &= \left\{ \left\{ c_{\sigma', \ell}^t ~|~ \forall \sigma' \in \mathcal{N}(\sigma) \right\} \right\},\\
            c_{\sigma, \ell}^{t+1}&=\operatorname{HASH}\left(c_{\sigma, \ell}^t, c_\mathcal{N}^t(\sigma)\right).
            \end{align*}
        \item Termination: The algorithm stops when an iteration leaves the coloring unchanged.
    \end{itemize}
\end{definition}

Next, we generalize the set-based $k$-WL test to labeled combinatorial complexes, called the $k$-CCWL test. The set-based $k$-WL test is employed in \citep{morris2019weisfeiler} where colors are defined on $k$-sets of nodes, as opposed to $k$-tuples of nodes in the standard $k$-WL test. 
Specifically, we denote
$[\mathcal{C}]^k$ the set of $k$-sets formed with cells of $\mathcal{C}$.
We generalize the definition of neighborhood of $k$-sets of vertices from \citep{morris2019weisfeiler}  to neighborhood of $k$-sets of cells.

 \begin{definition}[Neighborhood of $k$-sets of cells]\label{def:k-neighborhood}
    Given a $k$-set of cells $s=\left\{\sigma_1, \ldots, \sigma_k\right\}$ in $[\mathcal{C}]^k$, we define its neighborhood as the function $\mathcal{N}_k: [\mathcal{C}]^k\mapsto \mathcal{P}([\mathcal{C}]^k)$ defined as:
\begin{equation}
    \mathcal{N}_k(s)=\left\{t \in [\mathcal{C}]^k ~|~ | s \cap t |=k-1\right\}.
\end{equation}
\end{definition}

\begin{definition}[The CC $k$-Weisfeiler-Leman ($k$-CCWL) test on combinatorial complexes]\label{def:k-ccwl}
        Let $(\mathcal{C}, \ell)$ be a labeled combinatorial complex. Let $\mathcal{N}$ be a neighborhood on $\mathcal{C}$. The scheme proceeds as follows:
    \begin{itemize}[leftmargin=*]
        \item Initialization: Every $k$-set $s$ in $[\mathcal{C}]^k$ is initialized with a color that corresponds to the CC-isomorphism type of the sub-CC defined by $s=\{\sigma_1, \ldots, \sigma_k\}$ induced by $\mathcal{N}|_s$ where $\mathcal{N}|_s$ is the neighborhood $\mathcal{N}$ restricted to $s$. This means that two $k$-sets $s$ and $s'$ get the same color if and only if there is a labeled CC-isomorphism (for labeling function $\ell$) between the sub-CCs corresponding to the cells in $s$ and $s'$, respectively.
        \item Refinement: Given colors of $k$-sets at iteration $t$, the refinement step computes the color of the $k$-set $s$ at the next iteration $c_{s, \ell}^{t+1}$ using a perfect HASH function, as follows: 
            \begin{align*}
            c_{\mathcal{N}_k(s), \ell}^t & = \left\{ \left\{ c_{s', \ell}^t ~|~ \forall s' \in \mathcal{N}_k(s)\right\} \right\},\\
            c_{s, \ell}^{t+1}&=\operatorname{HASH}\left(c_{s, \ell}^t, c_{\mathcal{N}_k(s), \ell}^t \right).
            \end{align*}
        \item Termination: The algorithm stops when an iteration leaves the coloring unchanged.
    \end{itemize}
\end{definition}

Two combinatorial complexes are deemed non-isomorphic according to the CCWL and $k$-CCWL respectively, if their color histograms differ upon termination of the scheme. If the histograms are the same, we cannot conclude.

\subsubsection{Definitions of $k$-GNNs and $k$-CCNNs}

We generalize the definition of $k$-GNNs by \citep{morris2019weisfeiler} into a definition of $k$-CCNNs. 

\begin{definition}[$k$-CCNNs]\label{def:kccnn}
Let $(\mathcal{C}, \ell)$ be a labeled CC. In each $k$-CCNN layer $t$, the feature vector $h^{(t)}_k (s) \in \mathbb{R}^d$ for each $k$-set $s$ in $[\mathcal{C}]^k$ is updated into $h^{(t+1)}_k (s)$ as follows:
\begin{equation}\label{eq:k-ccnn}
    h_k^{(t+1)}(s)=U\left(h_k^{(t)}(s) \cdot W_1^{(t)}+\sum_{u \in \mathcal{N}_k(s)} h_k^{(t)}(u) \cdot W_2^{(t)}\right)~ \in \mathbb{R}^d,
\end{equation}
where $W_1^{(t)}, W_2^{(t)}$ are matrices of parameters for layer $t$, $\mathcal{N}_k$ the neighborhood structure on $k$-sets, and $U$ is an update function.
\end{definition}

Then, we show that $k$-CCNNs of Definition~\ref{def:kccnn} form a subclass of GCCNs.

\begin{proposition}\label{prop:GCCN-kccnn}
    GCCNs generalize and subsume $k$-CCNNs.
\end{proposition}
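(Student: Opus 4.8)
The plan is to prove the two halves of the statement separately, with \emph{subsumption} --- that every $k$-CCNN is a special GCCN --- carrying the essential content, after which \emph{generalization} is immediate. The strategy mirrors the proof of Proposition~\ref{prop:equival}: I would exhibit a combinatorial complex, a single neighborhood, and specific choices of the neighborhood message function $\omega_\mathcal{N}$, the inter-neighborhood aggregator $\bigotimes$, and the update $\phi$ in the GCCN layer~(\ref{eq:smp-tnn}) that reproduce the $k$-CCNN update~(\ref{eq:k-ccnn}) exactly.

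First I would realize the $k$-sets as the cells of a combinatorial complex. Given the labeled CC $(\mathcal{C},\ell)$, set the node set to $\mathcal{C}$ itself and take as cells the singletons $\{\sigma\}$ (rank $0$) together with every $k$-set $s\in[\mathcal{C}]^k$ (assigned a common higher rank). Since $\{\sigma\}\subseteq s$ whenever $\sigma\in s$, the rank function is order-preserving and every node is a rank-$0$ cell, so this is a valid combinatorial complex $\mathcal{C}'$. On $\mathcal{C}'$ I would use the single \emph{per-rank} neighborhood $\mathcal{N}_k$ of Definition~\ref{def:k-neighborhood}, extended to be empty on the (inert) singletons. By~(\ref{eq:strictlyaug}), the strictly augmented Hasse graph $\mathcal{G}_{\mathcal{N}_k}$ then has the $k$-sets as nodes and a directed edge $(t,s)$ precisely when $t\in\mathcal{N}_k(s)$, i.e. when $|s\cap t|=k-1$ --- exactly the graph over which the $k$-CCNN message-passes.

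Next I would specialize the GCCN components. With only one neighborhood, the inter-neighborhood aggregation $\bigotimes$ is the identity. I would take $\omega_{\mathcal{N}_k}$ to be a single linear graph-convolution layer with sum aggregation and weight $W_2^{(t)}$, so that its $s$-th output row equals $\sum_{u\in\mathcal{N}_k(s)}h_k^{(t)}(u)\,W_2^{(t)}$, and the row-wise update $\phi$ to be $[\phi(\mathbf{A},\mathbf{B})]_s=U\!\left([\mathbf{A}]_s\,W_1^{(t)}+[\mathbf{B}]_s\right)$. Substituting these into~(\ref{eq:smp-tnn}) then produces, on each $k$-set cell $s$, the value $U\!\big(h_k^{(t)}(s)\,W_1^{(t)}+\sum_{u\in\mathcal{N}_k(s)}h_k^{(t)}(u)\,W_2^{(t)}\big)$, which coincides with~(\ref{eq:k-ccnn}); the inert singletons carry no features and never influence the $k$-sets. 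This establishes subsumption, and generalization follows because GCCNs additionally admit non-message-passing $\omega_\mathcal{N}$, several simultaneous (per-rank) neighborhoods, and general aggregators $\bigotimes,\phi$ that no single $k$-CCNN layer expresses.

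The main obstacle I anticipate is not the arithmetic matching but justifying that the composite objects $[\mathcal{C}]^k$ together with the sharing relation $\mathcal{N}_k$ genuinely instantiate the GCCN data --- a combinatorial complex plus an admissible neighborhood --- since $\mathcal{N}_k$ is not one of the standard incidence/adjacency neighborhoods of Section~\ref{sec:background}. Here I would lean on the definition permitting arbitrary neighborhood functions and on per-rank neighborhoods to confine the action to the $k$-set cells. A secondary care point is the treatment of $k$-sets with empty $\mathcal{N}_k(s)$, which lie outside $\mathcal{C}_{\mathcal{N}_k}$ in~(\ref{eq:strictlyaug}); as in the remark following Proposition~\ref{prop:equival}, these are handled by letting $\phi$ perform the pure self-update $U(h_k^{(t)}(s)\,W_1^{(t)})$, so the reproduction is exact rather than only up to isolated cells.
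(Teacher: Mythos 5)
Your proof is correct, but it takes a genuinely different route from the paper's. The paper keeps the GCCN on the \emph{original} complex $\mathcal{C}$ and absorbs the $k$-set structure into the features: it sets $F=\binom{|\mathcal{C}|-1}{k-1}d$, packs the $d$-dimensional features of all $k$-sets containing $\sigma$ into the row of $\mathbf{H}^l$ indexed by $\sigma$ (Figure~\ref{fig:topotune_kccnn_as_GCCNs}), and defines $\omega$ and $\phi$ blockwise on $(\sigma,s)$-blocks so that \eqref{eq:smp-tnn} reproduces \eqref{eq:k-ccnn}; the weight of the argument is carried by $\omega$ being an arbitrary learnable matrix function (note its blocks $\omega_{(\sigma,s)}$ must read features of neighbor $k$-sets $u\in\mathcal{N}_k(s)$ that need not contain $\sigma$, hence stored in other rows---a bookkeeping subtlety the paper's figure caption itself acknowledges). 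You instead change the \emph{complex} rather than the features: you lift $(\mathcal{C},\ell)$ to a derived combinatorial complex whose rank-$0$ cells are singletons and whose higher-rank cells are the $k$-sets, equip it with the single per-rank neighborhood $\mathcal{N}_k$ of Definition~\ref{def:k-neighborhood}, and then a plain one-layer sum-aggregation convolution as $\omega_{\mathcal{N}_k}$ on the strictly augmented Hasse graph, with row-wise $\phi(\mathbf{A},\mathbf{B})=U(\mathbf{A}W_1^{(t)}+\mathbf{B})$, recovers \eqref{eq:k-ccnn} exactly---much closer in spirit to the proof of Proposition~\ref{prop:equival}. Your version buys constant feature width $d$, a strictly row-wise $\phi$ as the GCCN definition demands, a careful treatment of cells outside $\mathcal{C}_{\mathcal{N}_k}$, and a message graph that is literally the object appearing in Proposition~\ref{prop:equiv-kwl}; what it costs is that the simulating GCCN runs on a derived complex $[\mathcal{C}]^k$ rather than on $\mathcal{C}$ itself, so subsumption holds only up to the canonical lifting $\mathcal{C}\mapsto\mathcal{C}'$. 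Since that lifting is isomorphism-invariant, the downstream use in the expressivity proof (distinguishability of $\mathcal{C}_1,\mathcal{C}_2$ transfers through the lift) still goes through, but you should state this invariance explicitly, as it is the one point where your statement is formally weaker than the paper's same-complex construction.
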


\begin{figure}
    \centering
    \includegraphics[width=1.\linewidth]{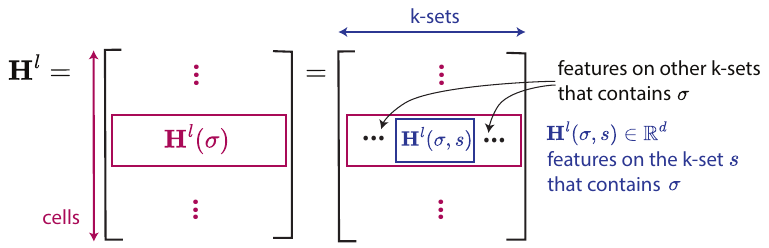}
    \caption{Notations for Proposition~\ref{prop:GCCN-kccnn}. Denote $|\mathcal{C}|$ the number of cells. The number of $k$-sets that contain a given cell $\sigma$ is equal to $\binom{|\mathcal{C}|-1}{k-1}$. The feature on one $k$-set that contains a given cell has dimension $d$. Thus, $\mathbf{H}^{l} \in \mathbb{R}^{|\mathcal{C}|\times F}$ for $F =\binom{|\mathcal{C}|-1}{k-1}d$. We note that the $k$-sets for one row do not correspond to the $k$-sets of another row. However, for every row, there is the same number of $k$-sets that contain the cell $\sigma$ characteristic of that row.}
    \label{fig:topotune_kccnn_as_GCCNs}
\end{figure}

\begin{proof}
    Let be given a $k$-CCNN defined by \eqref{eq:k-ccnn}. We show that we can recover \eqref{eq:k-ccnn} by an appropriate choice of feature dimensionality $F$, update function $\phi$ and sub-module $\omega_\mathcal{N}$ in \eqref{eq:smp-tnn} defining GCCNs, and thus that any $k$-CCNN can be expressed as a GCCN.
    
    For simplicity of notations, we assume that the layers of the $k$-CCNN have same feature dimensionality, denoted $d$. Given that there are $\binom {|\mathcal{C}|-1}{k-1}$ $k$-sets containing a given cell $\sigma$, we define $F = \binom {|\mathcal{C}|-1}{k-1}d$ to be the feature dimensionality of the layers of a GCCN. Denote $\mathbf{H}^l(\sigma)$ the row of $\mathbf{H}^l$ containing $F$-dimensional feature corresponding to cell $\sigma$, as well as $\mathbf{H}^l(\sigma, s)$ the subrow containing the $d$-dimensional feature corresponding to one $k$-set $s$ to which $\sigma$ belongs. Figure~\ref{fig:topotune_kccnn_as_GCCNs} illustrates these notations. We then define:
    \begin{align*}
            \mathbf{H}^{l+1} 
                &= \phi \left(\mathbf{H}^l, \omega(\mathbf{H}^l)\right)
    \end{align*}
    by defining $\phi$ and $\omega$ on $(\sigma, s)$-blocks of the matrix $\mathbf{H}^l$. Specifically, we have:
    \begin{align*}
    \mathbf{H}^{l+1}(\sigma, s)
    &= \phi_{(\sigma, s)} \left(
    \mathbf{H}^l(\sigma, s), 
    \omega_{(\sigma, s)}(\mathbf{H}^l)
    \right)\\
    &= U\left( \mathbf{H}^{l}(\sigma, s) \cdot W_1^{(t)}+\sum_{u \in \mathcal{N}_k(s)} \mathbf{H}^{l}(\sigma, u) \cdot W_2^{(t)}\right),
    \end{align*}
    where:
    \begin{equation}
         \omega_{(\sigma, s)}(\mathbf{H}^l) = \sum_{u \in \mathcal{N}_k(s)} \mathbf{H}^{l}(\sigma, u) \cdot W_2^{(t)}, \qquad~\phi_{(\sigma, s)} (A, B) = U(A \cdot W_1^{(1)} + B).
    \end{equation}
    In other words, we first use $\omega$ defined as a sequence of $\omega_{(\sigma, s)}$ to update each $(\sigma, s)$-block of $\mathbf{H}^l$ into an auxiliary feature $B=\tilde{\mathbf{H}}^l$. Then, we use $\phi$ as a sequence of $\phi_{(\sigma, s)}$ to perform a block-wise operations. Thus, we have built a GCCN that reproduces the computations of the $k$-CCNN. Therefore, GCCNs generalize and subsume $k$-CCNNs.
\end{proof}

\subsubsection{Relationships between CCWL/GCWL tests and CCNNs/GCCNs}

We prove relationships between the expressivity of the WL tests and the expressivity of the corresponding neural networks. We first recall results on WL tests on graphs and GNNs~\citep{morris2019weisfeiler}. In what follows, $(G, \ell)$ is a labeled graph, and $W^{(t)}$ denote the parameters of a GNN up to layer $t$. We encode the initial labels $\ell(v)$, for a vertex $v$, by vectors $h^{(0)}(v) \in \mathbb{R}^{1 \times d}$.

\paragraph{WL/GNNs and $k$-WL/$k$-GNNs}
Theorem 1 in \citep{morris2019weisfeiler} states that, for every encoding of the graph labels $\ell(v)$ as $d$-vectors $h^{(0)}(v)$, and for every choice of parameters $W^{(t)}$, the coloring $c(t)_\ell$ of the WL test always refines the coloring $h(t)$ induced by the GNN parameterized by $W^{(t)}$. 
Theorem 2 in \citep{morris2019weisfeiler} states that there exists parameter matrices $W^{(t)}$ such that GNNs have exactly the same power as the WL test. Consequently, we say that GNNs have the same expressivity as the WL test. Similarly, Propositions 3 and 4 from \citep{morris2019weisfeiler} show that $k$-GNNs have the same expressivity as the k-WL test.

\paragraph{CCWL/CCNNs and $k$-CCWL/GCCNs} We generalize the equivalence between WL tests and GNNs to the framework of CCs. First, we prove two propositions establishing equivalence of WL tests between CCs and Hasse graphs.

\begin{proposition}[CCWL and WL on the Hasse graph]\label{prop:equiv-wl}
    Let $(\mathcal{C}, \ell)$ be a labeled CC. Let $\mathcal{N}$ be one neighborhood on this $CC$ and $\mathcal{G}_\mathcal{N}$ the associated strictly augmented Hasse graph. The CCWL test defined in Def.~\ref{def:ccwl} is equivalent to the WL test defined on $\mathcal{G}_\mathcal{N}$.
\end{proposition}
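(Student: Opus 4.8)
The plan is to prove the equivalence by exhibiting a step-by-step identification of the two coloring schemes and arguing by induction on the refinement iteration $t$. The key structural observation is that, by the construction in \eqref{eq:strictlyaug}, the nodes of $\mathcal{G}_\mathcal{N}$ are the cells $\sigma$ of $\mathcal{C}$ and a directed edge $(\tau,\sigma)$ exists precisely when $\tau \in \mathcal{N}(\sigma)$; hence the in-neighborhood of the node $\sigma$ in $\mathcal{G}_\mathcal{N}$ coincides, as a set of cells, with $\mathcal{N}(\sigma)$. I would first make this correspondence explicit and fix the convention that the WL test on the directed graph $\mathcal{G}_\mathcal{N}$ aggregates over in-neighbors, so that the neighbor multiset used by WL at node $\sigma$ is literally $\{\!\{ c^t(\tau) : \tau \in \mathcal{N}(\sigma)\}\!\}$, matching the multiset $c^t_\mathcal{N}(\sigma)$ of Def.~\ref{def:ccwl}.

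Then I would run the induction. For the base case, both schemes initialize cell/node $\sigma$ with the same label $\ell(\sigma)$, so the colorings coincide at $t=0$. For the inductive step, assuming the colorings agree at iteration $t$, both refinement rules apply a perfect HASH to the pair consisting of the cell's own color and the multiset of its neighbors' colors; since the inductive hypothesis makes these inputs identical and the neighbor sets coincide by the structural observation above, the refined colorings agree at $t+1$ as well. To avoid dependence on the particular HASH, I would phrase the statement at the level of the partition induced by each coloring: both schemes produce the same partition of the cells at every iteration, and therefore stabilize at the same iteration with the same color histogram. Equivalence of the tests---distinguishing exactly the same pairs---follows immediately.

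The main obstacle I anticipate is the bookkeeping around cells outside $\mathcal{C}_\mathcal{N}$, i.e.\ cells $\sigma$ with $\mathcal{N}(\sigma)=\emptyset$. Such cells receive an empty neighbor multiset in CCWL and thus only ever re-hash their own color, whereas \eqref{eq:strictlyaug} drops them from the node set of $\mathcal{G}_\mathcal{N}$; moreover a cell may appear as the source of an edge while having empty neighborhood itself. I would resolve this exactly as in the remark following \eqref{eq:strictlyaug}: treat such cells as isolated nodes (equivalently, take the node set to be every cell appearing as an endpoint of an edge), so that the equivalence holds up to the self-update of cells in $\mathcal{C}\setminus\mathcal{C}_\mathcal{N}$. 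A second, minor point is the directedness convention; once in-neighbor aggregation is fixed, the two refinement rules become syntactically identical, which is really the crux of why the tests coincide.
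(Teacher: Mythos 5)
Your proposal is correct and follows essentially the same route as the paper's proof: identify cells of $\mathcal{C}_\mathcal{N}$ with nodes of $\mathcal{G}_\mathcal{N}$ and the neighborhood relation with the edge set, then observe that the initializations (via $\ell$) and the refinement equations coincide, so the two colorings agree at every iteration. Your explicit induction, the fixed in-neighbor aggregation convention, and the bookkeeping for cells with $\mathcal{N}(\sigma)=\emptyset$ (treated as isolated nodes, so equivalence holds up to the self-update of cells in $\mathcal{C}\setminus\mathcal{C}_\mathcal{N}$) make precise points the paper's proof leaves implicit, but do not change the argument.
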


\begin{proof}
We prove the equivalence between the CCWL and the WL on $\mathcal{G}_\mathcal{N}$.

    \textit{Equivalence of initializations}. The CCWL test initializes cell colors using the labels given by $\ell$. The labeling function $\ell$ labels cells of $\mathcal{C}$ and therefore its restriction to $\mathcal{C}_\mathcal{N}$ labels nodes of the associated Hasse graph $\mathcal{G}_\mathcal{N}$. This turns $\mathcal{G}_\mathcal{N}$ into a labeled graph $(\mathcal{G}_\mathcal{N}, \ell_{\mathcal{C}_\mathcal{N}})$. We initialize the WL test on $\mathcal{G}_\mathcal{N}$ with colors from $\ell_{C_\mathcal{N}}$.
    
    \textit{Equivalence of refinements}. By construction of the strictly augmented Hasse graph $\mathcal{G}_\mathcal{N}$, nodes in $\mathcal{G}_\mathcal{N}$ are cells in $\mathcal{C}_\mathcal{N}$ and edges in $\mathcal{G}_\mathcal{N}$ are neighbors in $\mathcal{C}_\mathcal{N}$ for the neighborhood $\mathcal{N}$. Thus, the refinement equation of the CCWL test is equal to the refinement equation of the WL test on $\mathcal{G}_\mathcal{N}$.
    This proves that CCWL and the WL on $\mathcal{G}_\mathcal{N}$ are equivalent.
    \end{proof}

\begin{proposition}[$k$-CCWL and $k$-WL on the Hasse graph]\label{prop:equiv-kwl}
      Let $(\mathcal{C}, \ell)$ be a labeled CC. Let $\mathcal{N}$ be one neighborhood on this $CC$ and $\mathcal{G}_\mathcal{N}$ the associated strictly augmented Hasse graph. The $k$-CCWL defined in Def.~\ref{def:k-ccwl} is equivalent to the $k$-WL test on $\mathcal{G}_\mathcal{N}$.
\end{proposition}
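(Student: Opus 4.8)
The plan is to mirror, at the level of $k$-sets, the two-part argument used for Proposition~\ref{prop:equiv-wl}, replacing the correspondence ``cells $\leftrightarrow$ nodes of $\mathcal{G}_\mathcal{N}$'' by ``$k$-sets of cells $\leftrightarrow$ $k$-sets of nodes of $\mathcal{G}_\mathcal{N}$.'' First I would fix the bijection between $[\mathcal{C}_\mathcal{N}]^k$ and the $k$-sets of vertices of $\mathcal{G}_\mathcal{N}$ induced by the identification $\sigma \mapsto \tilde\sigma$ of cells in $\mathcal{C}_\mathcal{N}$ with nodes of $\mathcal{G}_\mathcal{N}$, and observe that the set-based neighborhood $\mathcal{N}_k(s) = \{t : |s\cap t| = k-1\}$ of Definition~\ref{def:k-neighborhood} is purely combinatorial---it depends only on the underlying sets, not on $\mathcal{C}$ or $\mathcal{G}_\mathcal{N}$---so it coincides with the set-based $k$-WL neighborhood used on $\mathcal{G}_\mathcal{N}$ by \citep{morris2019weisfeiler}. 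As in the proof of Proposition~\ref{prop:equiv-wl}, cells of $\mathcal{C}\setminus\mathcal{C}_\mathcal{N}$ are handled by the same caveat.

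Then I would prove equivalence of initializations, which is the crux of the argument. In $k$-CCWL a $k$-set $s$ is colored by the labeled CC-isomorphism type of the sub-CC it induces under $\mathcal{N}|_s$; in $k$-WL on $\mathcal{G}_\mathcal{N}$ a $k$-set is colored by the labeled isomorphism type of the subgraph of $\mathcal{G}_\mathcal{N}$ it induces. I would show these two initial colorings agree by applying Proposition~\ref{prop:equiv-homo} at the level of isomorphisms: restricting $\mathcal{N}$ to $s$ produces exactly the subgraph of $\mathcal{G}_\mathcal{N}$ induced on the vertices $\{\tilde\sigma : \sigma \in s\}$ (a pair $(\tilde\sigma_i, \tilde\sigma_j)$ is an edge iff $\sigma_j \in \mathcal{N}(\sigma_i)$ with both $\sigma_i,\sigma_j \in s$), so a labeled CC-isomorphism induced by $(\mathcal{N}|_s, \mathcal{N}|_{s'})$ between the two sub-CCs exists iff the corresponding induced subgraphs are isomorphic as labeled graphs. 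Hence $s$ and $s'$ receive the same initial color under $k$-CCWL iff they do under $k$-WL on $\mathcal{G}_\mathcal{N}$, and after identifying color classes through the perfect HASH the two initializations coincide.

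Finally I would prove equivalence of refinements by induction on the iteration $t$. Assuming the colorings agree at step $t$, the multiset $c^t_{\mathcal{N}_k(s)}$ aggregated over $\mathcal{N}_k(s)$ in $k$-CCWL is identical, term by term, to the multiset aggregated over the $k$-WL neighborhood of the corresponding vertex-$k$-set in $\mathcal{G}_\mathcal{N}$ (the neighborhoods being combinatorially identical by the first step, the colors being equal by the inductive hypothesis); applying the same perfect HASH therefore yields equal colors at step $t+1$. Termination coincides, since both schemes stop precisely when the coloring stabilizes, and equality of the colorings at every iteration gives equality of the final color histograms, establishing the claimed equivalence. The main obstacle is the initialization step: unlike the one-dimensional CCWL, where the initial color is simply the cell label, here one must carefully verify that ``CC-isomorphism type under the restricted neighborhood $\mathcal{N}|_s$'' and ``graph-isomorphism type of the induced subgraph of $\mathcal{G}_\mathcal{N}$'' define the same equivalence classes on $k$-sets, which is exactly the isomorphism-level strengthening of Proposition~\ref{prop:equiv-homo}.
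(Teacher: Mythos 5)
Your proposal is correct and follows essentially the same two-part argument as the paper's proof: equivalence of initializations via Proposition~\ref{prop:equiv-homo} (upgraded from homomorphisms to isomorphisms of the induced sub-CCs/subgraphs, a strengthening the paper invokes implicitly and you rightly flag as the crux), and equivalence of refinements from the observation that the purely combinatorial $k$-set neighborhood of Definition~\ref{def:k-neighborhood} coincides with the set-based $k$-WL neighborhood of \citet{morris2019weisfeiler} on $\mathcal{G}_\mathcal{N}$. Your explicit induction on the iteration $t$ and careful handling of labels only add rigor to the same route; there is no substantive difference in approach.
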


\begin{proof}
We prove the equivalence between the $k$-CCWL and the $k$-WL on $\mathcal{G}_\mathcal{N}$.

 \textit{Equivalence of initializations}. The $k$-CCWL test initializes colors of $k$-sets based on the CC-isomorphism class of every sub-CC defined by every $k$-set. Using Proposition~\ref{prop:equiv-homo}, the CC-isomorphism class of a sub-CC $s$  corresponds to the graph isomorphism class on the associated subgraph in the strictly augmented Hasse graph. We initialize the $k$-WL test on $\mathcal{G}_\mathcal{N}$ with colors on $k$-sets associated with this isomorphism class.
 
 \textit{Equivalence of refinements.} By construction of the strictly augmented Hasse graph $\mathcal{G}_\mathcal{N}$, $k$-sets of nodes in $\mathcal{G}_\mathcal{N}$ are $k$-sets of cells in $\mathcal{C}_\mathcal{N}$, and the neighborhoods of $k$-sets of nodes defined in \citep{morris2019weisfeiler} are the neighborhoods of $k$-sets of cells defined in Definition~\ref{def:k-neighborhood}. Thus, the refinement equation of the $k$-CCWL test is equal to the refinement equation of the $k$-WL test on $\mathcal{G}_\mathcal{N}$. 
 
 This proves that $k$-CCWL and the $k$-WL on $\mathcal{G}_\mathcal{N}$ are equivalent.
\end{proof}

Given the equivalence between the computations in $\mathcal{C}$ and in $\mathcal{G}_\mathcal{N}$ provided by Proposition~\ref{prop:equiv-wl}, we can pull the results from Theorems 1 and 2 from \citep{morris2019weisfeiler} and provide the following propositions.

\begin{proposition}\label{prop:ccnn-wl1}
    Let $(\mathcal{C}, \ell)$ be a labeled CC. Then for all $t \geq 0$ and for all choices of initial colorings $h^{(0)}$ consistent with $\ell$, and weights $\mathbf{W}^{(t)}$,
    $
    c_\ell^{(t)} \sqsubseteq h^{(t)},
    $ i.e., the coloring $c_l^{(t)}$ induced by the CCWL test refines the coloring induced by the CCNN $h^{(t)}$.
\end{proposition}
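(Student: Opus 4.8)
The plan is to \emph{transfer} the known graph-level result of \citet{morris2019weisfeiler}---their Theorem 1, which states that on any labeled graph the WL coloring refines the coloring produced by a GNN, for every label-consistent initialization and every choice of weights---to combinatorial complexes, using the two equivalences already set up in this subsection. The whole argument is a reduction rather than a fresh induction: the inductive heavy lifting lives in \citet{morris2019weisfeiler}, and all I need to do is line up the objects on the complex side with their graph-side counterparts on the strictly augmented Hasse graph.

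First I would fix the single neighborhood $\mathcal{N}$ underlying the CCWL test (Definition~\ref{def:ccwl}) and pass to its strictly augmented Hasse graph $\mathcal{G}_\mathcal{N}$. By Proposition~\ref{prop:equiv-wl}, the CCWL refinement on $(\mathcal{C},\ell)$ is \emph{identical} to the WL refinement on $(\mathcal{G}_\mathcal{N}, \ell_{\mathcal{C}_\mathcal{N}})$: cells become nodes, neighbors in $\mathcal{N}$ become in-neighbors in $\mathcal{G}_\mathcal{N}$, and the initial colors coincide, so $c_\ell^{(t)}$ equals the WL coloring on $\mathcal{G}_\mathcal{N}$ cell by cell. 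Next I would match the CCNN to a GNN on the same $\mathcal{G}_\mathcal{N}$. Following the equivalence of \citet{hajij2023tdl} recalled in Section~\ref{sec:gccn} (a GNN over $\mathcal{G}_\mathcal{N}$ is exactly a CCNN over $\mathcal{C}$ using $\{\mathcal{N}\}$, up to the self-update of cells outside $\mathcal{C}_\mathcal{N}$), the CCNN layer~\eqref{eq:message_passing} restricted to $\mathcal{N}$ is literally a message-passing GNN layer on $\mathcal{G}_\mathcal{N}$: the neighborhood aggregation $\bigoplus_{\tau\in\mathcal{N}(\sigma)}$ is the graph neighbor aggregation, the self-term inside $\phi(\mathbf{h}_\sigma^l,\cdot)$ plays the role of the GNN's self-contribution, and an initialization $h^{(0)}$ consistent with $\ell$ becomes verbatim a label-consistent node featurization on $\mathcal{G}_\mathcal{N}$.

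With both the WL/CCWL colorings and the GNN/CCNN features identified across the two sides, Theorem~1 of \citet{morris2019weisfeiler} gives $c^{(t)}\sqsubseteq h^{(t)}$ on $\mathcal{G}_\mathcal{N}$ for all $t\ge 0$, all consistent $h^{(0)}$, and all weights $\mathbf{W}^{(t)}$; transporting back through the identifications yields exactly $c_\ell^{(t)}\sqsubseteq h^{(t)}$ on $\mathcal{C}$, the claim. The main obstacle I anticipate is not the induction (which is inherited) but the bookkeeping of the correspondence: one must verify that the label-consistency hypothesis of \citet{morris2019weisfeiler} transfers precisely to ``initial colorings $h^{(0)}$ consistent with $\ell$,'' that the CCNN self-update is absorbed into the GNN self-weight so no extra expressivity is introduced on either side, that cells outside $\mathcal{C}_\mathcal{N}$ (which merely self-update) do not affect the refinement relation, and that the directedness of $\mathcal{G}_\mathcal{N}$ (edges $(\tau,\sigma)$ with $\tau\in\mathcal{N}(\sigma)$) is handled consistently by both WL and the GNN, since both aggregate over the same in-neighbor set $\mathcal{N}(\sigma)$. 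Once these definitional checks are in place, the refinement statement follows immediately from the graph case.
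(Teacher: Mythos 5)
Your proposal is correct and follows essentially the same route as the paper: the paper derives Proposition~\ref{prop:ccnn-wl1} in exactly this way, by invoking the CCWL/WL equivalence on the strictly augmented Hasse graph (Proposition~\ref{prop:equiv-wl}) and then pulling Theorem~1 of \citet{morris2019weisfeiler} back to the complex. If anything, your explicit bookkeeping—label consistency, absorbing the CCNN self-update into the GNN self-weight, cells outside $\mathcal{C}_\mathcal{N}$, and the directedness of $\mathcal{G}_\mathcal{N}$—is more detailed than the paper's one-line justification.
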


\begin{proposition}\label{prop:ccnn-wl2}
    Let $(\mathcal{C}, \ell)$ be a labeled CC. Then for all $t \geq 0$ there exists a sequence of weights $\mathbf{W}^{(t)}$, and a CCNN architecture such that
    $
    c_\ell^{(t)} \equiv h^{(t)} .
    $, i.e., the coloring of the CCWL and the CCNN are equivalent.
\end{proposition}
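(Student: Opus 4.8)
The plan is to reduce the claim to the corresponding statement for ordinary graphs and then invoke the existing equivalence between GNNs and the WL test. Concretely, I would proceed through the same chain of equivalences that underlies Proposition~\ref{prop:ccnn-wl1}, but now exploiting the ``sufficiency'' direction (Theorem~2 of \citet{morris2019weisfeiler}) rather than the ``refinement'' direction. Since Proposition~\ref{prop:ccnn-wl1} already gives $c_\ell^{(t)} \sqsubseteq h^{(t)}$ for every admissible choice of weights, it suffices to exhibit a single sequence of weights $\mathbf{W}^{(t)}$ realizing the reverse refinement $h^{(t)} \sqsubseteq c_\ell^{(t)}$, which combined yields the equivalence $c_\ell^{(t)} \equiv h^{(t)}$.

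First I would transport the problem onto the strictly augmented Hasse graph $\mathcal{G}_\mathcal{N}$. By Proposition~\ref{prop:equiv-wl}, the CCWL test on $(\mathcal{C}, \ell)$ run with neighborhood $\mathcal{N}$ coincides with the ordinary WL test on the labeled graph $(\mathcal{G}_\mathcal{N}, \ell_{\mathcal{C}_\mathcal{N}})$. In parallel, the correspondence of \citet{hajij2023tdl} (recalled in Section~\ref{sec:motivation}) identifies a CCNN layer of the form~\eqref{eq:message_passing} using the single neighborhood $\mathcal{N}$ with a message-passing GNN layer acting on $\mathcal{G}_\mathcal{N}$, provided the intra- and inter-neighborhood aggregations coincide and the message functions are neither rank- nor neighborhood-dependent; this identification is exact up to the self-update of the cells in $\mathcal{C}\setminus\mathcal{C}_\mathcal{N}$, which carry no $\mathcal{N}$-neighbors and hence can be handled by a trivial identity update. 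Under these two identifications, the CCNN coloring $h^{(t)}$ on $\mathcal{C}$ is exactly the GNN coloring on $\mathcal{G}_\mathcal{N}$, while the target coloring $c_\ell^{(t)}$ is exactly the WL coloring on $\mathcal{G}_\mathcal{N}$.

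I would then apply Theorem~2 of \citet{morris2019weisfeiler} to the finite labeled graph $\mathcal{G}_\mathcal{N}$: there exists a sequence of weight matrices for which the GNN coloring equals the WL coloring after any fixed number of iterations $t$. Pulling this choice of weights back through the CCNN/GNN correspondence furnishes the CCNN weights $\mathbf{W}^{(t)}$ that achieve $h^{(t)} \equiv c_\ell^{(t)}$, completing the argument.

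The main obstacle I anticipate lies in making the reduction airtight rather than in the graph-level result itself. The delicate points are \emph{(i)} verifying that the CCNN's parametric update family is rich enough to implement the injective (perfect-\textsc{hash}) separation that Morris's construction requires, so that the finitely many configurations (self-color, multiset of neighbor colors) arising at each layer are distinguished by the chosen weights; and \emph{(ii)} confirming that the initial feature encoding $h^{(0)}$ of the cell labels $\ell$ is consistent with the node-label encoding used on $\mathcal{G}_\mathcal{N}$, and that the cells in $\mathcal{C}\setminus\mathcal{C}_\mathcal{N}$ neither spuriously merge nor split colors under the identity self-update. Both are essentially bookkeeping once the two identifications above are stated precisely, but they are where the proof must exercise care.
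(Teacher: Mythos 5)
Your proposal is correct and takes essentially the same route as the paper, which likewise obtains Proposition~\ref{prop:ccnn-wl2} by transporting the statement onto the strictly augmented Hasse graph via Proposition~\ref{prop:equiv-wl} and pulling back Theorem~2 of \citet{morris2019weisfeiler} through the CCNN/GNN correspondence of \citet{hajij2023tdl}. If anything, you are more explicit than the paper, which leaves your bookkeeping points \emph{(i)} and \emph{(ii)} (injective update realizability and the treatment of cells in $\mathcal{C}\setminus\mathcal{C}_\mathcal{N}$) implicit in its one-line appeal to ``pulling the results from Theorems 1 and 2.''
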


Consequently, CCNNs have the same power as the CCWL. Next, we measure the power of $k$-CCNNs using the $k$-CCWL.

\begin{proposition}\label{prop:kccnn-wl1}
     Let $(\mathcal{C}, \ell)$ be a labeled CC and let $k \geq 2$. Then, for all $t \geq 0$, for all choices of initial colorings $h_k^{(0)}$ consistent with $\ell$ and for all weights $\mathbf{W}^{(t)}$,
    $
    c_{\mathrm{s}, k, \ell}^{(t)} \sqsubseteq h_k^{(t)} $ i.e., the coloring $c_{s,k,l}^{(t)}$ induced by the $k$-CCWL test refines the coloring induced by the $k$-CCNN $h_k^{(t)}$. .
    
\end{proposition}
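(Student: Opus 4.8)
The plan is to reduce the statement to the corresponding graph-level result of \citet{morris2019weisfeiler}, mirroring exactly the strategy used for Proposition~\ref{prop:ccnn-wl1} but now at the level of $k$-sets. Concretely, I would show that the $k$-CCNN of Definition~\ref{def:kccnn} running on $(\mathcal{C}, \ell)$ is \emph{computationally identical} to a $k$-GNN running on the strictly augmented Hasse graph $\mathcal{G}_\mathcal{N}$, and then invoke the fact (already established as Proposition~\ref{prop:equiv-kwl}) that the $k$-CCWL coloring on $\mathcal{C}$ equals the $k$-WL coloring on $\mathcal{G}_\mathcal{N}$. Once both the network and the test have been transported to the single graph $\mathcal{G}_\mathcal{N}$, the refinement $c_{\mathrm{s}, k, \ell}^{(t)} \sqsubseteq h_k^{(t)}$ follows from the refinement direction of the $k$-WL/$k$-GNN correspondence, i.e., Proposition~3 of \citet{morris2019weisfeiler}.

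The first key step is a neural-network analogue of Proposition~\ref{prop:equiv-kwl}. I would argue that, by construction of $\mathcal{G}_\mathcal{N}$, the $k$-sets of cells in $\mathcal{C}_\mathcal{N}$ are in bijection with the $k$-sets of nodes of $\mathcal{G}_\mathcal{N}$, and that under this bijection the neighborhood $\mathcal{N}_k$ on $k$-sets of cells (Definition~\ref{def:k-neighborhood}) coincides with the Morris neighborhood on $k$-sets of nodes used to define $k$-GNNs. Therefore the update rule~\eqref{eq:k-ccnn}, which sums $h_k^{(t)}(u) \cdot W_2^{(t)}$ over $u \in \mathcal{N}_k(s)$ and applies $U$, is term-for-term the $k$-GNN update on $\mathcal{G}_\mathcal{N}$ with the same weight matrices $W_1^{(t)}, W_2^{(t)}$. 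For the base case I would match the two initializations: an initial coloring $h_k^{(0)}$ consistent with $\ell$ corresponds, via Proposition~\ref{prop:equiv-homo} (CC-isomorphism type $\leftrightarrow$ induced-subgraph isomorphism type), to an $\ell$-consistent initialization of the $k$-GNN on $\mathcal{G}_\mathcal{N}$, so the hypotheses of the graph result are met verbatim.

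With this equivalence in hand, the conclusion follows by a short chain. Proposition~\ref{prop:equiv-kwl} identifies $c_{\mathrm{s}, k, \ell}^{(t)}$ on $\mathcal{C}$ with the $k$-WL coloring on $\mathcal{G}_\mathcal{N}$; the graph-level result states that this $k$-WL coloring refines the $k$-GNN coloring on $\mathcal{G}_\mathcal{N}$ for every $t$, every label-consistent initialization, and every choice of weights; and the network equivalence identifies that $k$-GNN coloring with $h_k^{(t)}$ back on $\mathcal{C}$. Composing the refinement with these two equalities yields $c_{\mathrm{s}, k, \ell}^{(t)} \sqsubseteq h_k^{(t)}$. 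If one prefers to avoid quoting \citet{morris2019weisfeiler} directly, the same conclusion follows by a direct induction on $t$ showing that equality of $k$-CCWL colors at step $t$ forces equality of the $k$-CCNN features at step $t$, using that $U$ and the summation in~\eqref{eq:k-ccnn} are functions only of the (refined) multiset of neighboring colors.

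I expect the main obstacle to be making the first step genuinely precise rather than a mere restatement of definitions: one must verify that the bijection of $k$-sets carries the restricted neighborhood $\mathcal{N}|_s$ and the induced sub-CC to the correct induced subgraph of $\mathcal{G}_\mathcal{N}$, so that the initialization by isomorphism type in $k$-CCWL (and, correspondingly, any $\ell$-consistent feature initialization of the $k$-CCNN) truly matches the $k$-WL/$k$-GNN initialization. This requires invoking Proposition~\ref{prop:equiv-homo} not only for single cells but for the sub-CCs spanned by $k$-sets, and checking that ``consistency with $\ell$'' is preserved under the bijection; in particular, the isomorphism-type initialization must be seen to refine any $\ell$-consistent one, which is what anchors the base case of the induction. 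The refinement step itself is then routine, since~\eqref{eq:k-ccnn} and the $k$-GNN update are syntactically the same equation.
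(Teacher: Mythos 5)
Your proposal takes essentially the same route as the paper: the paper derives this proposition precisely by combining Proposition~\ref{prop:equiv-kwl} (identifying the $k$-CCWL on $\mathcal{C}$ with the $k$-WL on the strictly augmented Hasse graph $\mathcal{G}_\mathcal{N}$) with Proposition~3 of \citet{morris2019weisfeiler}, exactly as you do. Your additional steps---transporting the $k$-CCNN update of \eqref{eq:k-ccnn} term-for-term to a $k$-GNN on $\mathcal{G}_\mathcal{N}$ and matching initializations via Proposition~\ref{prop:equiv-homo}---correctly fill in details the paper leaves implicit, but do not change the argument.
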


\begin{proposition}\label{prop:kccnn-wl2}
    Let $(\mathcal{C}, \ell)$ be a labeled CC and let $k \geq 2$. Then, for all $t \geq 0$ there exists a sequence of weights $\mathbf{W}^{(t)}$, and a $k$-CCNN architecture such that
    $
    c_{\mathrm{s}, k, \ell}^{(t)} \equiv h_k^{(t)}.
    $
\end{proposition}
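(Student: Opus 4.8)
The plan is to reduce the statement to the corresponding existence result for $k$-GNNs on graphs, in complete analogy with how Proposition \ref{prop:ccnn-wl2} is the equivalence counterpart of Proposition \ref{prop:ccnn-wl1}. The backbone is the chain of equivalences already established in the excerpt. By Proposition \ref{prop:equiv-kwl}, the $k$-CCWL test on $(\mathcal{C}, \ell)$ is equivalent to the set-based $k$-WL test on the strictly augmented Hasse graph $\mathcal{G}_\mathcal{N}$; and by Propositions 3 and 4 of \citep{morris2019weisfeiler}, the $k$-WL test and $k$-GNNs have the same expressive power on graphs, with Proposition 4 providing exactly the existence-of-weights direction we need here. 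Hence it suffices to identify the $k$-CCNN of Definition \ref{def:kccnn} with a set-based $k$-GNN running on $\mathcal{G}_\mathcal{N}$, and then import the existence of weights from \citep{morris2019weisfeiler}.

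First I would make the identification between a $k$-CCNN on $\mathcal{C}$ and a set-based $k$-GNN on $\mathcal{G}_\mathcal{N}$ precise. By construction of the strictly augmented Hasse graph, the cells in $\mathcal{C}_\mathcal{N}$ are in bijection with the nodes of $\mathcal{G}_\mathcal{N}$, so the $k$-sets in $[\mathcal{C}_\mathcal{N}]^k$ are in bijection with the $k$-sets of nodes of $\mathcal{G}_\mathcal{N}$. As already observed in the proof of Proposition \ref{prop:equiv-kwl}, the neighborhood $\mathcal{N}_k$ of Definition \ref{def:k-neighborhood} coincides, under this bijection, with the neighborhood of $k$-sets of nodes used in \citep{morris2019weisfeiler}. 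Comparing the update rule \eqref{eq:k-ccnn} with the $k$-GNN update, the two are syntactically identical once $\mathcal{N}_k$ is matched; thus a $k$-CCNN on $\mathcal{C}$ computes exactly the same features $h_k^{(t)}$ as a set-based $k$-GNN on $\mathcal{G}_\mathcal{N}$ with the same weights and a consistent initialization.

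Next I would reconcile the initializations. The $k$-CCWL initializes each $k$-set by the CC-isomorphism type of the sub-CC it induces; by Proposition \ref{prop:equiv-homo} this is precisely the graph-isomorphism type of the corresponding subgraph of $\mathcal{G}_\mathcal{N}$, which is exactly the initialization used by $k$-WL on $\mathcal{G}_\mathcal{N}$. Choosing an initial feature encoding $h_k^{(0)}$ consistent with $\ell$, i.e., injectively encoding these isomorphism-type colors as vectors, makes the $k$-CCNN initialization agree with the $k$-GNN initialization. With this in place, I would invoke Proposition 4 of \citep{morris2019weisfeiler}: there exists a sequence of weights $\mathbf{W}^{(t)}$ for which the $k$-GNN coloring $h_k^{(t)}$ on $\mathcal{G}_\mathcal{N}$ is equivalent to the $k$-WL coloring. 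Pulling this equivalence back through Proposition \ref{prop:equiv-kwl} yields $c_{\mathrm{s}, k, \ell}^{(t)} \equiv h_k^{(t)}$ for the $k$-CCNN with those same weights, which is the claim.

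The main obstacle I expect is the careful verification in the second step that the $k$-CCNN update \eqref{eq:k-ccnn} is genuinely the \emph{set-based} $k$-GNN update rather than the tuple-based one, so that Proposition 4 of \citep{morris2019weisfeiler} applies verbatim; this requires confirming that $\mathcal{N}_k$ corresponds to $N_L(s)\cup N_G(s)$ under the Hasse-graph bijection, and that no rank- or domain-specific information survives in the features beyond what the isomorphism-type initialization already encodes. Everything else is bookkeeping transported across the bijections established in Proposition \ref{prop:equiv-homo} and Proposition \ref{prop:equiv-kwl}.
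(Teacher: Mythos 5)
Your proposal is correct and follows essentially the same route as the paper, which proves Proposition~\ref{prop:kccnn-wl2} in one line by combining Proposition~\ref{prop:equiv-kwl} with Propositions 3 and 4 of \citet{morris2019weisfeiler}. You in fact supply more detail than the paper does---in particular the explicit matching of $\mathcal{N}_k$ with the set-based neighborhood $N_L(s)\cup N_G(s)$ and the reconciliation of initializations via Proposition~\ref{prop:equiv-homo}---both of which the paper leaves implicit but which are exactly the verifications its citation-based argument relies on.
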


Propositions \ref{prop:kccnn-wl1} and \ref{prop:kccnn-wl2} are given by  Proposition~\ref{prop:equiv-kwl} and Propositions 3 and 4 from \citet{morris2019weisfeiler}. They show that $k$-CCNNs have the same power as the $k$-CCWL.

\subsubsection{Proof}

We now provide the proof for Proposition~\ref{prop:expr} that states that GCCNs are strictly more expressive than CCNNs.

\begin{proof}

We prove that GCCNs are strictly more powerful than CCNNs in distinguishing non-isomorphic combinatorial complexes. We leverage the propositions of this subsection summarized on Figure~\ref{fig:topotune_expressivity}.

By Proposition~\ref{prop:GCCN-kccnn}, GCCN have at least the same expressive power as $k$-CCNNs. By Propositions~\ref{prop:kccnn-wl1}-\ref{prop:kccnn-wl2}, $k$-CCNNs have the same expressive power as the $k$-CCWL. By Proposition~\ref{prop:equiv-kwl}, the $k$-CCWL test is equivalent to the $k$-WL test on the associated strictly augmented Hasse graph. It is known (e.g., \citep{grohe2017descriptive}) that the $k$-WL test on graph is strictly more powerful than the WL test. Thus, the $k$-WL test on the strictly augmented Hasse graph is strictly more powerful than the WL test on that same graph. By Proposition~\ref{prop:equiv-wl}, the WL test on the strictly augmented Hasse graph is equivalent to the CCWL test on the corresponding CC. By Propositions~\ref{prop:ccnn-wl1}-\ref{prop:ccnn-wl2}, the CCWL test on the CC has the same expressive power as CCNNs.

Consequently, we have shown that GCCN are strictly more powerful than CCNNs in distinguishing nonisomorphic CCs.
\end{proof}

\begin{figure}
    \centering
    \includegraphics[width=0.85\linewidth]{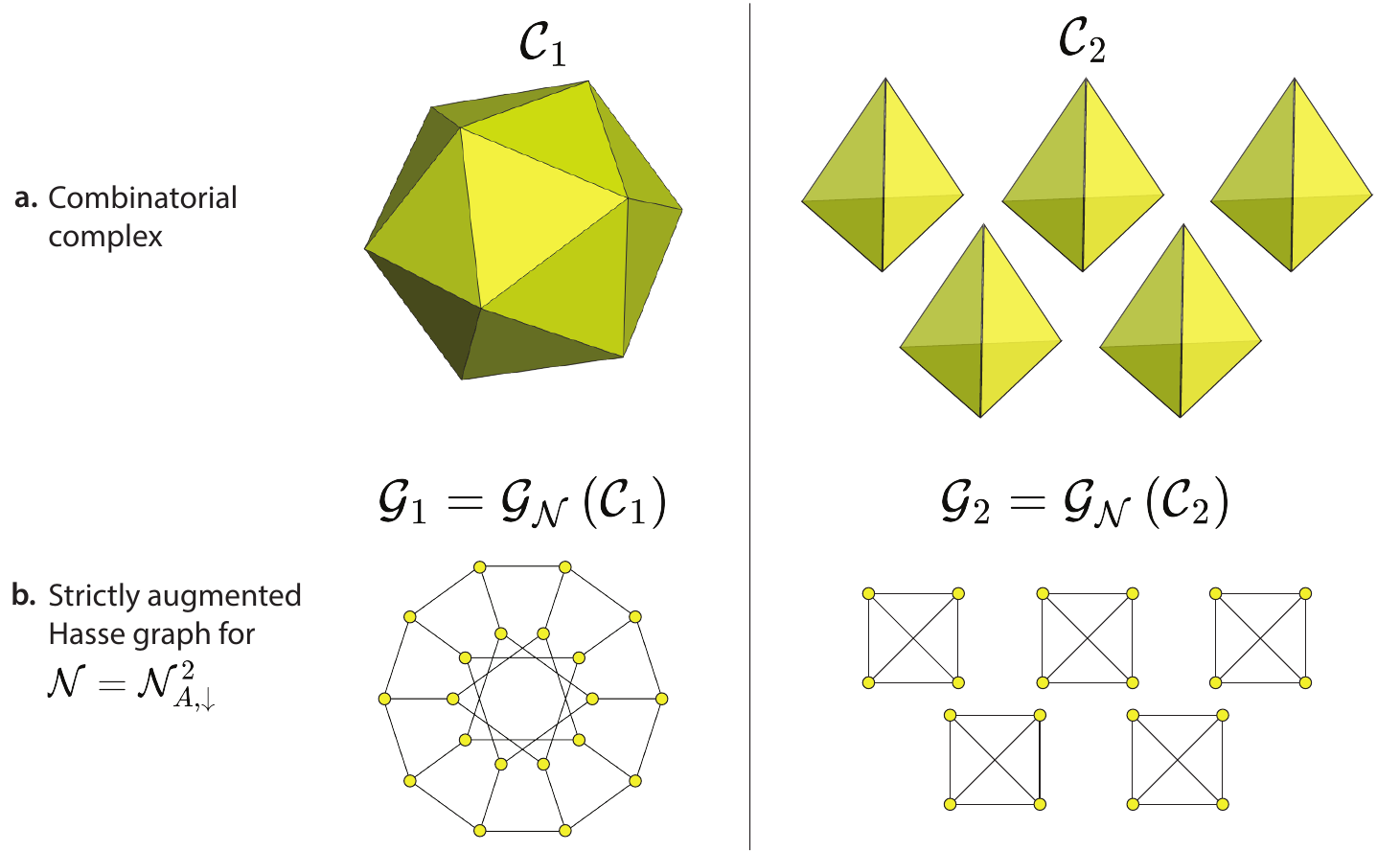}
    \caption{
    {\textbf{a.} Pair of combinatorial complexes: $\mathcal{C}_1$ is an icosahedron polygon, and $\mathcal{C}_2$ is five tetrahedrons. \textbf{b.} Strictly augmented Hasse graphs corresponding to each combinatorial complex, given a choice of neighborhood $\mathcal{N}_{A,\uparrow}^2$.}}
    \label{fig:two_graphs_and_complexes}
\end{figure}

Additionally, we construct two combinatorial complexes $\mathcal{C}_1$ and $\mathcal{C}_2$ that are indistinguishable by CCNNs but distinguishable by GCCNs. 

Let $\mathcal{C}_1$ and $\mathcal{C}_2$ be two combinatorial complexes with a neighborhood structure $\mathcal{N}_\mathcal{C} = \mathcal{N}_{A, \downarrow}^2$ (down-adjacency of faces). These complexes are illustrated in Figure~\ref{fig:two_graphs_and_complexes}a.

The corresponding strictly augmented Hasse graphs $\mathcal{G}_1$ and $\mathcal{G}_2$ (Fig.~\ref{fig:two_graphs_and_complexes}b) represent the 20 faces of each complex as nodes, where each node has degree 3. Thus:
\begin{itemize}
    \item Both $\mathcal{G}_1$ and $\mathcal{G}_2$ are 3-regular graphs.
    \item It is known that regular graphs of the same order are indistinguishable by the WL test (see, e.g., \citep{kiefer2020power,morris2023weisfeiler}).
    \item Every pair of graphs with $n$ nodes are distinguishable by the $n$-WL test~\citep{morris2023weisfeiler}. 
\end{itemize}

Since CCWL is equivalent to WL on $\mathcal{G}_\mathcal{N}$ (Proposition~\ref{prop:equiv-wl}), the two complexes $\mathcal{C}_1$ and $\mathcal{C}_2$ are indistinguishable by CCWL. Since CCWL has the same expressive power as CCNNs (Propositions~\ref{prop:ccnn-wl1}-\ref{prop:ccnn-wl2}), the two complexes $\mathcal{C}_1$ and $\mathcal{C}_2$ are indistinguishable by CCNNs.

Since $k$-CCWL is equivalent to $k$-WL on $\mathcal{G}_\mathcal{N}$ (Proposition~\ref{prop:equiv-kwl}), the two complexes $\mathcal{C}_1$ and $\mathcal{C}_2$ are distinguishable by $k$-CCWL. Since $k$-CCWL has the same expressive power as $k$-CCNNs (Propositions~\ref{prop:kccnn-wl1}-\ref{prop:kccnn-wl2}), the two complexes $\mathcal{C}_1$ and $\mathcal{C}_2$ are distinguishable by $k$-CCNNs. Since GCCNs generalize and subsume $k$-CCNNs (Proposition~\ref{prop:GCCN-kccnn}), $\mathcal{C}_1$ and $\mathcal{C}_2$ are distinguishable by GCCNs.

Thus, we have constructed two combinatorial complexes $\mathcal{C}_1$ and $\mathcal{C}_2$ that are indistinguishable by CCNNs, but are distinguishable by GCCNs.

\color{black}
\clearpage
\section{Time Complexity}\label{app:complexity}
To analyze the time complexity (in terms of FLOPs) of the Generalized Combinatorial Complex Neural Network (GCCN), we derive the complexity of its submodule $\omega_\mathcal{N}$ and then compute the complexity of a GCCN layer. We then compare it with GNN and CCNN complexity.

\subsection{Key Definitions}

\begin{itemize}
    \item \textbf{Message Complexity ($M$):} The complexity of a single message computation along a route (e.g., node $\rightarrow$ node). For example, in a Graph Convolutional Network (GCN), a single message is defined as:
    \[
    m_{x \to y} = a_{xy} \mathbf{h}_y \Theta,
    \]
    where $\mathbf{h}_y$ is a 1×$F$ vector, $\Theta$ is an $F$×$F$ weight matrix, and $a_{xy}$ is a scalar. This involves a matrix-vector multiplication, contributing a complexity of $O(F^2)$ per message.

    \item \textbf{Update Complexity ($U$):} The complexity of the update function in the reference GNN. For simplicity, we assume the update is an element-wise function, giving $U = O(|N|)$, where $|N|$ is the number of nodes.
\end{itemize}

\subsection{Complexity of $\omega_\mathcal{N}$}

Assuming each $\omega_\mathcal{N}$ submodule is a single-layer GNN, the complexity of $\omega_\mathcal{N}$ can be decomposed into three components: \textbf{message computation, aggregation, and update.}

\[
C_{\omega_\mathcal{N}} = C_{\text{message}} + C_{\text{aggregation}} + C_{\text{update}}
\]

This breaks down as:
\[
C_{\omega_\mathcal{N}} = 2|E|M + \sum_{n \in N} \text{deg}(n)A + |N|U,
\]
where:
\begin{itemize}
    \item $|E|$: Number of edges in the graph,
    \item $M$: Complexity per message ($O(F^2)$),
    \item $\text{deg}(n)$: Degree of node $n$,
    \item $A$: Complexity of aggregation (e.g., assuming sum/average, $O(F)$),
    \item $U$: Complexity of the update function ($O(1)$ per node).
\end{itemize}

Substituting assumptions for convolutional message passing, summation aggregation, and constant node degree $d$:
\[
C_{\omega_\mathcal{N}} = 2|E|F^2 + \sum_{n \in N} \text{deg}(n)F + O(|N|),
\]
\[
C_{\omega_\mathcal{N}} = 2|E|F^2 + |N|dF + O(|N|),
\]
\[
C_{\omega_\mathcal{N}} = O(|E|F^2 + |N|dF + |N|).
\]

\subsection{Complexity Using Combinatorial Complex Notations}

Up until now, we have expressed $C_{\omega_\mathcal{N}}$ in terms of the nodes and edges making up the strictly expanded Hassse graph it receives as input. To be able to write the complexity of a whole GCCN layer, we must express $C_{\omega_\mathcal{N}}$ in terms of the original cells represented as nodes in the graph. Specifically, we will denote the source cells (cells sending messages) as cells of rank $r$ and the destination cells (cells receiving messages) as cells of rank $r'$. The relationships governing adjacency between the nodes representing these cells will come from the neighborhood $\mathcal{N}$ to which the submodule $\omega_\mathcal{N}$ is assigned. 

Rewriting in terms of combinatorial complex notations, where:
\begin{itemize}
    \item $\|\mathcal{N}\|_0$: Total number of relationships in $\mathcal{N}$ (i.e. number of nonzero entries in matrix corresponding to $\mathcal{N}$),
    \item $n_{r'}$: Number of $r'$-cells.
    \item $d_{r'}$: Assumed constant degree of $r'$-cells,
\end{itemize}

The complexity becomes:
\[
C_{\omega_\mathcal{N}} = O(\|\mathcal{N}\|_0 F^2 + \text{nrows}(\mathcal{N}) d_{r'} F + n_{r'}),
\]
\[
C_{\omega_\mathcal{N}} = O(\|\mathcal{N}\|_0 F^2 + \text{nrows}(\mathcal{N}) d_{r'} F + \text{nrows}(\mathcal{N})).
\]

\subsection{Complexity of a GCCN Layer}

A GCCN layer is composed of a set of $\omega_{\mathcal{N}}$'s, one for each $\mathcal{N} \in \mathcal{N}_\mathcal{C}$. The complexity of a GCCN layer is the sum of all the complexities of its submodules, plus the complexity of the module responsible for aggregating the outputs of each neighborhood, i.e. the inter-neighborhood aggregation. We assume this inter-aggregation to be a sum. The layer complexity is:
\[
C_{\text{GCCN}} = \sum_{\mathcal{N} \in \mathcal{N}_\mathcal{C}} C_{\omega_\mathcal{N}} + C_{\text{inter-agg}},
\]
where:
\[
C_{\text{inter-agg}} = \sum_{r' \in [0, R']} n_{r'} n_{\mathcal{N}_{r'}} F,
\]
and $n_{\mathcal{N}_{r'}}$ is the number of neighborhoods sending messages to $r'$-cells.

\subsection{Takeaways}

\begin{itemize}
    \item \textbf{GNN Comparison:} GCCNs increase complexity compared to traditional GNNs due to :
    \begin{itemize}
        \item the introduction of multiple neighborhoods. A GCCN considers many $\mathcal{N} \in \mathcal{N}_\mathcal{C}$, going beyond the simple node-level adjacency $\mathcal{N}_\mathcal{C} = A_0$ of a GNN. This is what allows TDL models (GCCNs and CCNNs) to operate on a richer topological space than GNNs.
        \item inter-neighborhood aggregation.
    \end{itemize}

    \item \textbf{CCNN Comparison:} Unlike traditional CCNNs, GCCNs allow per-rank neighborhoods, enabling many smaller possible sets of neighborhoods $\mathcal{N}_\mathcal{C}$. This more selective inclusion of neighborhoods reduces redundancy. Concretely, this means the sum $ \sum_{\mathcal{N} \in \mathcal{N}_\mathcal{C}} C_{\omega_\mathcal{N}}$ can be smaller.

    \item \textbf{Tradeoff:} GCCNs' time complexity are a compromise between GNNs and CCNNs. While they do introduce $C_{\text{inter-agg}}$ (like CCNNs) and additional elements to the sum $ \sum_{\mathcal{N} \in \mathcal{N}_\mathcal{C}} C_{\omega_\mathcal{N}}$, they can introduce less elements to this sum than CCNNs. 
\end{itemize}

\clearpage

\section{Software}\label{app:software}
Algorithm 1  shows how the TopoTune module instantiates a GCCN by taking a choice of model $\omega_\mathcal{N}$ and neighborhoods $\mathcal{N}_\mathcal{C}$ as input.  Given an input complex $x$, TopoTune first expands it into an ensemble of strictly augmented Hasse graphs that are then passed to their respective $\omega_\mathcal{N}$ models within each GCCN layer. 

\textit{Remark.} We decided to design the software module of TopoTune, i.e., how to implement GCCNs, as we did for mainly two reasons: (i) the full compatibility with TopoBench (implying consistency of the combinatorial complex instantiations and the benchmarking pipeline), and (ii) the possibility of using GNNs as neighborhood message functions that are not necessarily implemented with a specific library. However, if the practitioner is interested in entirely wrapping the GCCN implementation into Pytorch Geometric or DGL, they can do it by noticing that a GCCN is equivalent to a \textit{heterogeneous} GNN where the heterogeneous graph the whole augmented Hasse graph, with node types given by the rank of the cell (e.g. 0-cells, 1-cells, and 2-cells) while the edge type is given by the per-rank neighborhood function (e.g. "0-cells to 1-cells" or "2-cells to 1-cells" for $\mathcal{N}^0_{I, \uparrow}$ and $\mathcal{N}^2_{I, \downarrow}$, respectively).

\begin{figure}[hbt!]
    \centering
    \includegraphics[width=1.0\linewidth]{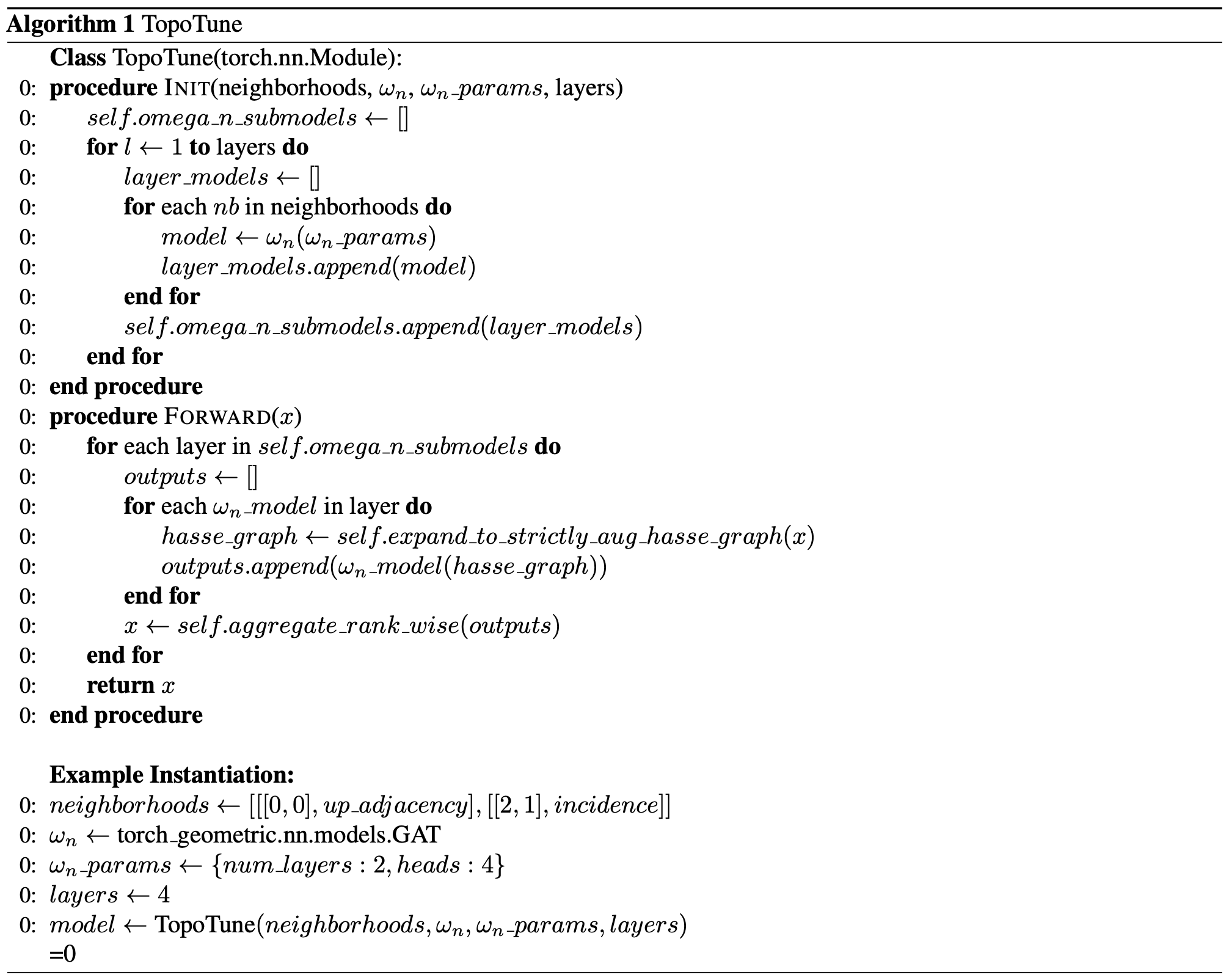}
    \\ 
    \label{fig:algo}
\end{figure}







\clearpage
\section{Additional details on experiments} \label{app:experiments}

In this section, we delve into the details of the datasets, hyperparameter search methodology, and computational resources utilized for conducting the experiments.

\subsection{Neighborhood Structures} \label{app:neighborhood_structures}

In order to build a broad class of GCCNs, we consider X different neighborhood structures on which we perform graph expansion. Importantly, three of these structures are lightweight, per-rank neighborhood structures, as proposed in Section \ref{sec:gccn}. The neighborhood structures are:

\begin{equation*}
    \begin{array}{rrrrl}
\left\{\mathcal{N}_{A, \uparrow}^0, \mathcal{N}_{A, \uparrow}^1\right\} & \left\{\mathcal{N}_{A, \uparrow}^0, \mathcal{N}_{I, \downarrow}^2\right\} & \left\{\mathcal{N}_{A, \uparrow}, \mathcal{N}_{I, \uparrow}\right\} & \left\{\mathcal{N}_{A, \uparrow}, \mathcal{N}_{A, \downarrow}, \mathcal{N}_{I, \downarrow}\right\} & \left\{\mathcal{N}_{A, \uparrow}\right\} \\
\left\{\mathcal{N}_{A, \uparrow}, \mathcal{N}_{A, \downarrow}^1\right\} & \left\{\mathcal{N}_{A, \uparrow}, \mathcal{N}_{A, \downarrow}\right\} & \left\{\mathcal{N}_{A, \uparrow}, \mathcal{N}_{I, \downarrow}\right\} & \left\{\mathcal{N}_{A, \uparrow}, \mathcal{N}_{A, \downarrow}, \mathcal{N}_{I, \uparrow}\right\} & \left\{\mathcal{N}_{A, \uparrow}, \mathcal{N}_{A, \downarrow}, \mathcal{N}_{I, \downarrow}, \mathcal{N}_{I, \uparrow}\right\}
\end{array}
\end{equation*}

\subsection{Datasets}
\label{app:description_datasets}

\subsection*{Dataset statistics} 

Table \ref{tab:dataset-statistics} provides the statistics for each dataset lifted to three topological domains: simplicial complex, cellular complex, and hypergraph. The table shows the number of $0$-cells (nodes), $1$-cells (edges), and $2$-cells (faces) of each dataset after the topology lifting procedure. We recall that:
\begin{itemize}
    \item the simplicial clique complex lifting is applied to lift the graph to a simplicial domain, with a maximum complex dimension equal to 2; 
    \item the cellular cycle-based lifting is employed to lift the graph into the cellular domain, with maximum complex dimension set to 2 as well.
\end{itemize} 

\begin{table}[htbp]
\centering
\caption{Descriptive summaries of the datasets used in the experiments.}
  \label{tab:dataset-statistics}
  \begin{tabular}{llrrrrr}
    \toprule
     \textbf{Dataset} & \textbf{Domain} & \textbf{\# $0$-cell} & \textbf{\# $1$-cell} & \textbf{\# $2$-cell}  \\
    \midrule
    \multirow{3}{*}{Cora \vspace{0.4cm}} & Cellular & 2,708 & 5,278 & 2,648  \\
    & Simplicial & 2,708 & 5,278 & 1,630  \\
     \midrule
    \multirow{3}{*}{Citeseer \vspace{0.4cm}} & Cellular & 3,327 & 4,552 & 1,663  \\
    & Simplicial & 3,327 & 4,552 & 1,167  \\
     \midrule
    \multirow{3}{*}{PubMed \vspace{0.4cm}} & Cellular & 19,717 & 44,324 & 23,605  \\
    & Simplicial & 19,717 & 44,324 & 12,520  \\
     \midrule
    \multirow{3}{*}{MUTAG \vspace{0.4cm}} & Cellular & 3,371 & 3,721 & 538  \\
    & Simplicial & 3,371 & 3,721 & 0  \\
     \midrule
    \multirow{3}{*}{NCI1 \vspace{0.4cm}} & Cellular & 122,747 & 132,753 & 14,885  \\
    & Simplicial & 122,747 & 132,753 & 186  \\
     \midrule
    \multirow{3}{*}{NCI109 \vspace{0.4cm}} & Cellular & 122,494 & 132,604 & 15,042  \\
    & Simplicial & 122,494 & 132,604 & 183  \\
     \midrule
    \multirow{3}{*}{PROTEINS \vspace{0.4cm}} & Cellular & 43,471 & 81,044 & 38,773  \\
    & Simplicial & 43,471 & 81,044 & 30,501  \\
     \midrule
    \multirow{3}{*}{ZINC (subset) \vspace{0.4cm}} & Cellular & 277,864 & 298,985 & 33,121  \\
    & Simplicial & 277,864 & 298,985 & 769  \\

    \bottomrule
  \end{tabular}
\end{table}

\subsection*{Dataset selection and limitations} 
The datasets employed in this work and other TDL studies are predominantly adapted from the GNN literature. Among these, molecular datasets stand out due to the inherent importance of cycles and hyperedges, which effectively capture chemical rings and functional groups. These are structures that are naturally represented in topological domains.

While TDL methods are not intrinsically constrained to these datasets, the lifting procedures used to construct higher-order cells introduce computational bottlenecks, particularly in memory usage. For instance, operations such as cycle detection and clique enumeration, required for constructing cellular complexes or simplicial complexes, respectively, become computationally prohibitive for large or densely connected graphs. 

To address these limitations, ongoing research is focused on developing scalable lifting procedures that can extend TDL methods to broader datasets, including those with more complex structures or larger scales. For example, \citet{challenge} propose innovative topological liftings, paving the way for more scalable and applicable datasets in TDL.

\subsection{Hyperparameter search}\label{app:hyperparams}

Five splits are generated for each dataset to ensure a fair evaluation of the models across domains. Each split comprises 50\% training data, 25\% validation data, and 25\% test data. An exception is made for the ZINC dataset, where predefined splits are used~\citep{irwin2012zinc}.

To avoid the combinatorial explosion of possible hyperparameter sets, we fix the values of all hyperparameters beyond GCCNs: hence, to name a few relevant parameters, we set the learning rate to $0.01$, the batch size to the default value of TopoBench for each dataset, and the cell hidden state dimension to $32$. Regarding the internal GCCN hyperparameters, a grid-search strategy is employed to find the optimal set for each model and dataset. Specifically, we consider 10 different neighborhood structures (see Section \ref{app:neighborhood_structures}), and the number of GCCN layers is varied over $\{2, 4, 8\}$. For GNN-based neighborhood message functions, we vary over $\{$GCN,GAT,GIN,GraphSage$\}$ models from PyTorch Geometric, and for each of them consider either 1 or 2 number of layers. For the Transformer-based neighborhood message function (Transformer Encoder model from PyTorch), we vary the number of heads over $\{2, 4\}$, and the feed-forward neural network dimension over $\{64,128\}$.

For node-level task datasets, validation is conducted after each training epoch, continuing until either the maximum number of epochs is reached or the optimization metric fails to improve for 50 consecutive validation epochs. The minimum number of epochs is set to 50. Conversely, for graph-level tasks, validation is performed every 5 training epochs, with training halting if the performance metric does not improve on the validation set for the last 10 validation epochs. To optimize the models, \texttt{torch.optim.Adam} is combined with \texttt{torch.optim.lr\_scheduler.StepLR} wherein the step size was set to 50 and the gamma value to 0.5. 
The optimal hyperparameter set is generally selected based on the best average performance over five validation splits. For the ZINC dataset, five different initialization seeds are used to obtain the average performance.


\subsection{Hardware}
The hyperparameter search is executed on a Linux machine with 256 cores, 1TB of system memory, and 8 NVIDIA A100 GPUs, each with 80GB of GPU memory.


\clearpage

\section{Model Size}\label{app:model-size}
We provide details on model size for reported results in Section \ref{sec:experiments}.
\begin{table}[ht]
\caption{Model size corresponding to results reported in Table \ref{tab:sota}.}
\label{tab:sota-params}
\centering
\renewcommand{\arraystretch}{1.5} 
 \setlength{\tabcolsep}{11pt}
\resizebox{\linewidth}{!}{%
\begin{tabular}{lcccccccc}
\multicolumn{1}{l|}{} &
  \multicolumn{5}{c|}{Graph-Level Tasks} &
  \multicolumn{3}{c}{Node-Level Tasks} \\
\multicolumn{1}{l|}{Model} &
  MUTAG &
  PROTEINS &
  NCI1 &
  NCI109 &
  \multicolumn{1}{c|}{ZINC} &
  Cora &
  Citeseer &
  PubMed \\ \hline
\multicolumn{9}{l}{\cellcolor[HTML]{EFEFEF}Cellular} \\ \hline
\multicolumn{1}{l|}{CCNN (Best Model on TopoBench)} &
  \cellcolor[HTML]{D9E0F3}334.72K &
  \cellcolor[HTML]{D9E0F3}101.12K &
  63.87K &
  17.67K &
  88.06K &
  451.85K &
  \cellcolor[HTML]{D9E0F3}1032.84K &
  163.72K \\ \hline
\multicolumn{1}{l|}{GCCN $\omega_\mathcal{N}$ = GAT} &
  \cellcolor[HTML]{D9E0F3}15.11K &
  46.27K &
  68.99K &
  49.63K &
  \multicolumn{1}{c|}{39.78K} &
  341.54K &
  1677.32K &
  344.83K \\
\multicolumn{1}{l|}{GCCN $\omega_\mathcal{N}$ = GCN} &
  \cellcolor[HTML]{D9E0F3}45.44K &
  45.25K &
  65.92K &
  30.69K &
  \multicolumn{1}{c|}{29.54K} &
  \cellcolor[HTML]{D9E0F3}801.16K &
  \cellcolor[HTML]{D9E0F3}1507.59K &
  443.91K \\
\multicolumn{1}{l|}{GCCN $\omega_\mathcal{N}$ = GIN} &
  \cellcolor[HTML]{EFEFEF}\textbf{63.62K} &
  23.49K &
  \cellcolor[HTML]{D9E0F3}49.03K &
  \cellcolor[HTML]{EFEFEF}\textbf{66.79K} &
  \multicolumn{1}{c|}{\cellcolor[HTML]{EFEFEF}\textbf{64.35K}} &
  669.58K &
  \cellcolor[HTML]{D9E0F3}1674.25K &
  211.97K \\
\multicolumn{1}{l|}{GCCN $\omega_\mathcal{N}$ = GraphSAGE} &
  \cellcolor[HTML]{D9E0F3}44.42K &
  76.99K &
  \cellcolor[HTML]{EFEFEF}\textbf{47.49K} &
  \cellcolor[HTML]{D9E0F3}115.17K &
  \multicolumn{1}{c|}{79.71K} &
  \cellcolor[HTML]{D9E0F3}1195.14K &
  \cellcolor[HTML]{D9E0F3}741.5K &
  \cellcolor[HTML]{D9E0F3}640.51K \\
\multicolumn{1}{l|}{GCCN $\omega_\mathcal{N}$ = Transformer} &
  \cellcolor[HTML]{D9E0F3}112.26K &
  78.79K &
  82.05K &
  115.43K &
  \multicolumn{1}{c|}{317.02K} &
  249.51K &
  \cellcolor[HTML]{D9E0F3}468.29K &
  331.59K \\
\multicolumn{1}{l|}{GCCN $\omega_\mathcal{N}$ = Best GNN, 1 Hasse graph} &
  \cellcolor[HTML]{D9E0F3}14.98K &
  18.88K &
  18.05K &
  15.91K &
  \multicolumn{1}{c|}{20.83K} &
  150.12K &
  \cellcolor[HTML]{D9E0F3}367.88K &
  66.50K \\ \hline
\multicolumn{9}{l}{\cellcolor[HTML]{EFEFEF}Simplicial} \\ \hline
\multicolumn{1}{l|}{CCNN (Best Model on TopoBench)} &
  398.85K &
  \cellcolor[HTML]{D9E0F3}10.24K &
  131.84K &
  \cellcolor[HTML]{D9E0F3}135.75K &
  617.86K &
  144.62K &
  737.29K &
  134.40K \\ \hline
\multicolumn{1}{l|}{GCCN $\omega_\mathcal{N}$ = GAT} &
  15.11K &
  46.27K &
  68.99K &
  49.63K &
  \multicolumn{1}{c|}{67.42K} &
  341.45K &
  \cellcolor[HTML]{D9E0F3}1677.32K &
  344.83K \\
\multicolumn{1}{l|}{GCCN $\omega_\mathcal{N}$ = GCN} &
  45.44K &
  \cellcolor[HTML]{D9E0F3}45.25K &
  65.92K &
  30.69K &
  \multicolumn{1}{c|}{64.35K} &
  \cellcolor[HTML]{D9E0F3}801.16K &
  \cellcolor[HTML]{D9E0F3}1507.59K &
  443.91K \\
\multicolumn{1}{l|}{GCCN $\omega_\mathcal{N}$ = GIN} &
  \cellcolor[HTML]{D9E0F3}63.62K &
  23.49K &
  49.03K &
  66.79K &
  \multicolumn{1}{c|}{118.11K} &
  669.58K &
  \cellcolor[HTML]{D9E0F3}1674.25K &
  211.97K \\
\multicolumn{1}{l|}{GCCN $\omega_\mathcal{N}$ = GraphSAGE} &
  44.42K &
  76.99K &
  \cellcolor[HTML]{D9E0F3}47.49K &
  115.17K &
  \multicolumn{1}{c|}{147.30K} &
  \cellcolor[HTML]{D9E0F3}1195.14K &
  \cellcolor[HTML]{EFEFEF}\textbf{741.51K} &
  640.51K \\
\multicolumn{1}{l|}{GCCN $\omega_\mathcal{N}$ = Transformer} &
  113.15K &
  213.70K &
  82.05K &
  166.24K &
  \multicolumn{1}{c|}{148.83K} &
  284.58K &
  \cellcolor[HTML]{D9E0F3}468.29K &
  331.59K \\
\multicolumn{1}{l|}{GCCN $\omega_\mathcal{N}$ = Best GNN, 1 Hasse graph} &
  19.07K &
  14.66K &
  31.11K &
  15.91K &
  \multicolumn{1}{c|}{29.54K} &
  150.12K &
  \cellcolor[HTML]{D9E0F3}367.88K &
  66.50K \\ \hline
\multicolumn{9}{l}{\cellcolor[HTML]{EFEFEF}Hypergraph} \\ \hline
\multicolumn{1}{l|}{CCNN (Best Model on TopoBench)} &
  \cellcolor[HTML]{D9E0F3}84.10K &
  \cellcolor[HTML]{EFEFEF}\textbf{14.34K} &
  88.19K &
  88.32K &
  \multicolumn{1}{c|}{22.53K} &
  \cellcolor[HTML]{EFEFEF}\textbf{60.26K} &
  \cellcolor[HTML]{D9E0F3}258.50K &
  \cellcolor[HTML]{EFEFEF}\textbf{280.83K}
  \\ \hline
\end{tabular}}
\end{table}
\begin{table}[ht]
\caption{Model sizes corresponding to results in Table \ref{tab:tnns}.}
\label{tab:tnns-params}
 \setlength{\tabcolsep}{9pt}
\setlength{\extrarowheight}{5pt}
\resizebox{\textwidth}{!}{%
\begin{tabular}{lccccccc}
\multicolumn{1}{l|}{} &
  \multicolumn{4}{c|}{Graph-Level Tasks} &
  \multicolumn{3}{c}{Node-Level Tasks} \\
\multicolumn{1}{l|}{Model} &
  MUTAG &
  \cellcolor[HTML]{FFFFFF}PROTEINS &
  \cellcolor[HTML]{FFFFFF}NCI1 &
  \multicolumn{1}{c|}{\cellcolor[HTML]{FFFFFF}NCI109} &
  \cellcolor[HTML]{FFFFFF}Cora &
  Citeseer &
  \cellcolor[HTML]{FFFFFF}PubMed \\ \hline
\multicolumn{8}{l}{\cellcolor[HTML]{EFEFEF}SCCN} \\ \hline
\multicolumn{1}{l|}{TopoBench} &
  398.85K &
  \cellcolor[HTML]{D9E0F3}397.31K &
  \cellcolor[HTML]{EFEFEF}\textbf{131.84K} &
  \multicolumn{1}{c|}{\cellcolor[HTML]{EFEFEF}\textbf{135.75K}} &
  155.88K &
  782.34K &
  \cellcolor[HTML]{EFEFEF}\textbf{457.99K} \\ \hline
\multicolumn{1}{l|}{1 Hasse graph / $\mathcal{N}$, $\omega_\mathcal{N}$ = Best(GNN)} &
  \textbf{852.74K} &
  \cellcolor[HTML]{EFEFEF}\textbf{851.97K} &
  \cellcolor[HTML]{D9E0F3}248.58K &
  \multicolumn{1}{c|}{291.39K} &
  \cellcolor[HTML]{EFEFEF}\textbf{159.46K} &
  \cellcolor[HTML]{D9E0F3}791.56K &
  510.47K \\ \hline
\multicolumn{1}{l|}{1 Hasse graph for $\{\mathcal{N}\}$, $\omega_\mathcal{N}$ = Best(GNN)} &
  104.32K &
  \cellcolor[HTML]{D9E0F3}153.09K &
  71.17K &
  \multicolumn{1}{c|}{54.85K} &
  \cellcolor[HTML]{D9E0F3}143.66K &
  \cellcolor[HTML]{EFEFEF}\textbf{741.51K} &
  376.58K \\ \hline
\multicolumn{8}{l}{\cellcolor[HTML]{EFEFEF}CWN} \\ \hline
\multicolumn{1}{l|}{TopoBench} &
  \cellcolor[HTML]{D9E0F3}334.72K &
  \cellcolor[HTML]{EFEFEF}\textbf{101.12K} &
  124.10K &
  \multicolumn{1}{c|}{412.29K} &
  343.11K &
  \cellcolor[HTML]{EFEFEF}\textbf{1754.50K} &
  \cellcolor[HTML]{EFEFEF}\textbf{163.72K} \\ \hline
\multicolumn{1}{l|}{1 Hasse graph / $\mathcal{N}$, $\omega_\mathcal{N}$ = Best(GNN)} &
  \cellcolor[HTML]{EFEFEF}\textbf{350.46K} &
  \cellcolor[HTML]{D9E0F3}353.54K &
  95.75K &
  \multicolumn{1}{c|}{465.28K} &
  900.23K &
  \cellcolor[HTML]{D9E0F3}177.10K &
  \cellcolor[HTML]{D9E0F3}159.56K \\ \hline
\multicolumn{1}{l|}{1 Hasse graph for $\{\mathcal{N}\}$, $\omega_\mathcal{N}$ = Best(GNN)} &
  \cellcolor[HTML]{D9E0F3}219.65K &
  \cellcolor[HTML]{D9E0F3}283.91K &
  \cellcolor[HTML]{EFEFEF}\textbf{78.85K} &
  \cellcolor[HTML]{EFEFEF}\textbf{264.45K} &
  \cellcolor[HTML]{EFEFEF}\textbf{138.95K} &
  \cellcolor[HTML]{D9E0F3}163.94K &
  \cellcolor[HTML]{D9E0F3}138.95K \\ \hline
\end{tabular}}
\end{table}

\clearpage
\section{Model Training Time}\label{app:model-time}
We provide training times for all experiments reported on in Section \ref{sec:experiments}. We measure these training times by running each experiment on a single A30 NVIDIA GPU. We note that these times include the on-the-fly graph expansion method, which slows down the model forward proportionally to dataset size. We plan on moving this process into data preprocessing in the future.
\begin{table}[ht]
\caption{Model training time (seconds) corresponding to results reported in Table \ref{tab:sota}.}
\label{tab:sota-time}
\centering
\renewcommand{\arraystretch}{1.5} 
 \setlength{\tabcolsep}{3pt}
\resizebox{\linewidth}{!}{%
\begin{tabular}{lcccccccc}
\multicolumn{1}{l|}{} &
  \multicolumn{5}{c|}{Graph-Level Tasks} &
  \multicolumn{3}{c}{Node-Level Tasks} \\
\multicolumn{1}{l|}{Model} &
  MUTAG ($\uparrow$) &
  PROTEINS ($\uparrow$) &
  NCI1 ($\uparrow$) &
  NCI109 ($\uparrow$) &
  \multicolumn{1}{c|}{{\color[HTML]{333333} ZINC ($\downarrow$)}} &
  Cora ($\uparrow$) &
  Citeseer ($\uparrow$) &
  PubMed ($\uparrow$) \\ \hline
\multicolumn{9}{l}{\cellcolor[HTML]{EFEFEF}Cellular} \\ \hline
\multicolumn{1}{l|}{CCNN (Best Model on TopoBench)} &
  \cellcolor[HTML]{D9E0F3}19.8 ± 6 & 
  \cellcolor[HTML]{D9E0F3}53 ± 5 &
  293 ± 95 & 
  318 ± 47 &  
  \multicolumn{1}{c|}{1206 ± 206} & 
  52 ± 9 & 
  \cellcolor[HTML]{D9E0F3}63 ± 5 & 
  174 ± 50 \\ \hline 
\multicolumn{1}{l|}{GCCN $\omega_\mathcal{N}$ = GAT} &
  \cellcolor[HTML]{D9E0F3}80 ± 11 &
  64 ± 10 &
  778 ± 118 &
  486 ± 75 &
  \multicolumn{1}{c|}{3173 ± 954} &
  46 ± 3 &
  63 ± 1 &
  202 ± 22 \\
\multicolumn{1}{l|}{GCCN $\omega_\mathcal{N}$ = GCN} &
  \cellcolor[HTML]{D9E0F3}43 ± 7 &
  67 ± 16 &
  544 ± 40 &
  495 ± 108 &
  \multicolumn{1}{c|}{4013 ± 620} &
  \cellcolor[HTML]{D9E0F3}46 ± 4 &
  \cellcolor[HTML]{D9E0F3}65 ± 3 &
  149 ± 12 \\
\multicolumn{1}{l|}{GCCN $\omega_\mathcal{N}$ = GIN} &
  \cellcolor[HTML]{EFEFEF}\textbf{61 ± 18} &
  59 ± 18 &
  \cellcolor[HTML]{D9E0F3}523 ± 119 &
  \cellcolor[HTML]{EFEFEF}\textbf{386 ± 76} &
  \multicolumn{1}{c|}{\cellcolor[HTML]{EFEFEF}\textbf{3301 ± 440}} &
  64 ± 8 &
  \cellcolor[HTML]{D9E0F3}77 ± 2 &
  207 ± 33 \\
\multicolumn{1}{l|}{GCCN $\omega_\mathcal{N}$ = GraphSAGE} &
  \cellcolor[HTML]{D9E0F3}43 ± 12 &
  43 ± 3 &
  \cellcolor[HTML]{EFEFEF}\textbf{691 ± 80} &
  \cellcolor[HTML]{D9E0F3}364 ± 102 &
  \multicolumn{1}{c|}{2863 ± 262} &
  \cellcolor[HTML]{D9E0F3}49 ± 2 &
  \cellcolor[HTML]{D9E0F3}60 ± 3 &
  \cellcolor[HTML]{D9E0F3}211 ± 25 \\
\multicolumn{1}{l|}{GCCN $\omega_\mathcal{N}$ = Transformer} &
  \cellcolor[HTML]{D9E0F3}50 ± 19 &
  786 ± 147 &
  1005 ± 27 &
  1484 ± 181 &
  \multicolumn{1}{c|}{15320 ± 5386} &
  121 ± 20 &
  94 ± 20 &
  5459 ± 1374 \\
\multicolumn{1}{l|}{GCCN $\omega_\mathcal{N}$ = Best GNN, 1 Aug. Hasse graph} &
  \cellcolor[HTML]{D9E0F3}33 ± 7 &
  70 ± 24 &
  451 ± 123 &
  441 ± 130 &
  \multicolumn{1}{c|}{3162 ± 340} &
  47 ± 5 &
  72 ± 6 &
  194 ± 35 \\ \hline
\multicolumn{9}{l}{\cellcolor[HTML]{EFEFEF}Simplicial} \\ \hline
\multicolumn{1}{l|}{CCNN (Best Model on TopoBench)} &
  21 ± 5 & 
  \cellcolor[HTML]{D9E0F3}49 ± 10 & 
  263 ± 63 & 
  \cellcolor[HTML]{D9E0F3}291 ± 81 & 
  \multicolumn{1}{c|}{1403 ± 605} & 
  124 ± 15 & 
  99 ± 26 & 
  155 ± 21 \\ \hline 
\multicolumn{1}{l|}{GCCN $\omega_\mathcal{N}$ = GAT} &
  25 ± 5 &
  70 ± 17 &
  755 ± 158 &
  794 ± 151 &
  \multicolumn{1}{c|}{2242 ± 275} &
  49 ± 3 &
  \cellcolor[HTML]{D9E0F3}68 ± 2 &
  192 ± 38 \\
\multicolumn{1}{l|}{GCCN $\omega_\mathcal{N}$ = GCN} &
  40 ± 7 &
  \cellcolor[HTML]{D9E0F3}138 ± 26 &
  548 ± 185 &
  603 ± 181 &
  \multicolumn{1}{c|}{2428 ± 833} &
  \cellcolor[HTML]{D9E0F3}49 ± 5 &
  \cellcolor[HTML]{D9E0F3}67 ± 2 &
  167 ± 22 \\
\multicolumn{1}{l|}{GCCN $\omega_\mathcal{N}$ = GIN} &
  \cellcolor[HTML]{D9E0F3}61 ± 7 &
  66 ± 21 &
  904 ± 180 &
  538 ± 39 &
  \multicolumn{1}{c|}{3603 ± 475} &
  71 ± 6 &
  \cellcolor[HTML]{D9E0F3}77 ± 8 &
  210 ± 42 \\
\multicolumn{1}{l|}{GCCN $\omega_\mathcal{N}$ = GraphSAGE} &
  31 ± 3 &
  61 ± 27 &
  \cellcolor[HTML]{D9E0F3}572 ± 124 &
  511 ± 74 &
  \multicolumn{1}{c|}{1721 ± 201} &
  \cellcolor[HTML]{D9E0F3}51 ± 3 &
  \cellcolor[HTML]{EFEFEF}74 ± 8 &
  221 ± 37 \\
\multicolumn{1}{l|}{GCCN $\omega_\mathcal{N}$ = Transformer} &
  35 ± 5 &
  947 ± 333 &
  1386 ± 404 &
  1360 ± 410 &
  \multicolumn{1}{c|}{7979 ± 1373} &
  146 ± 58 &
  \cellcolor[HTML]{D9E0F3}77 ± 2 &
  5281 ± 827 \\
\multicolumn{1}{l|}{GCCN $\omega_\mathcal{N}$ = Best GNN, 1 Aug. Hasse graph} &
  25 ± 2 &
  78 ± 27 &
  598 ± 31 &
  312 ± 7 &
  \multicolumn{1}{c|}{2681 ± 910} &
  52 ± 4 &
  72 ± 8 &
  156 ± 16 \\ \hline
\multicolumn{9}{l}{\cellcolor[HTML]{EFEFEF} Hypergraph} \\ \hline
\multicolumn{1}{l|}{CCNN (Best Model on TopoBench)} &
  \cellcolor[HTML]{D9E0F3}14 ± 2 & 
  \textbf{\cellcolor[HTML]{EFEFEF}24 ± 2} & 
  130 ± 36 & 
  137 ± 38 & 
  \multicolumn{1}{c|}{396 ± 95} & 
  \cellcolor[HTML]{EFEFEF}70 ± 18 & 
  \cellcolor[HTML]{D9E0F3}54 ± 9 & 
  \cellcolor[HTML]{EFEFEF}138 ± 43 \\ \hline 
\end{tabular}}
\end{table}
\begin{table}[ht]
\caption{Model training times (seconds) corresponding to results in Table \ref{tab:tnns}.}
\label{tab:tnns-time}
\setlength{\extrarowheight}{5pt}
\resizebox{\textwidth}{!}{%
\begin{tabular}{lccccccc}
\multicolumn{1}{l|}{} &
  \multicolumn{4}{c|}{Graph-Level Tasks} &
  \multicolumn{3}{c}{Node-Level Tasks} \\
\multicolumn{1}{l|}{Model} &
  MUTAG &
  PROTEINS &
  NCI1 &
  \multicolumn{1}{c|}{NCI109} &
  Cora &
  Citeseer &
  PubMed \\ \hline
\multicolumn{8}{l}{\cellcolor[HTML]{EFEFEF}SCCN \cite{yang2022efficientrlsimplicial}} \\ \hline
\multicolumn{1}{l|}{\textbf{Benchmark results \cite{telyatnikov2024topobench}}} &
  11 ± 2 &
  \cellcolor[HTML]{D9E0F3}60 ± 18 &
  \cellcolor[HTML]{EFEFEF}\textbf{247 ± 65} &
  \multicolumn{1}{c|}{\cellcolor[HTML]{EFEFEF}\textbf{311 ± 83}} &
  102 ± 39 &
  101 ± 41 &
  \cellcolor[HTML]{EFEFEF}\textbf{143 ± 35} \\ \hline
\multicolumn{1}{l|}{GCCN, on ensemble of strictly aug. Hasse graphs *2, dig} &
  \cellcolor[HTML]{EFEFEF}\textbf{14 ± 1} &
  \cellcolor[HTML]{EFEFEF}\textbf{75 ± 8} &
  \cellcolor[HTML]{D9E0F3}413 ± 120 &
  \multicolumn{1}{c|}{298 ± 15} &
  \cellcolor[HTML]{EFEFEF}\textbf{121 ± 2} &
  \cellcolor[HTML]{D9E0F3}172 ± 6 &
  285 ± 20 \\ \hline
\multicolumn{1}{l|}{GCCN, on 1 aug. Hasse graph *2, dig} &
  5 ± 1 &
  \cellcolor[HTML]{D9E0F3}59 ± 10 &
  283 ± 90 &
  \multicolumn{1}{c|}{217 ± 100} &
  \cellcolor[HTML]{D9E0F3}110 ± 3 &
  \cellcolor[HTML]{EFEFEF}\textbf{166 ± 10} &
  376 ± 27 \\ \hline
\multicolumn{8}{l}{\cellcolor[HTML]{EFEFEF}CWN \cite{bodnar2021weisfeiler}} \\ \hline
\multicolumn{1}{l|}{\textbf{Benchmark results \cite{telyatnikov2024topobench}}} &
  \cellcolor[HTML]{D9E0F3}11 ± 2 &
  \cellcolor[HTML]{EFEFEF}\textbf{43 ± 5} &
  240 ± 50 &
  \multicolumn{1}{c|}{252 ± 92} &
  54 ± 25 &
  \cellcolor[HTML]{EFEFEF}\textbf{52 ± 5} &
  \cellcolor[HTML]{EFEFEF}\textbf{119 ± 14} \\ \hline
\multicolumn{1}{l|}{GCCN, on ensemble of strictly aug. Hasse graphs *2, dig} &
  \cellcolor[HTML]{EFEFEF}\textbf{12 ± 1} &
  \cellcolor[HTML]{D9E0F3}73 ± 10 &
  536 ± 38 &
  \multicolumn{1}{c|}{426 ± 90} &
  91 ± 17 &
  \cellcolor[HTML]{D9E0F3}49 ± 1 &
  \cellcolor[HTML]{D9E0F3}125 ± 19 \\ \hline
\multicolumn{1}{l|}{GCCN, on 1 aug. Hasse graph *2, dig} &
  \cellcolor[HTML]{D9E0F3}11 ± 1 &
  \cellcolor[HTML]{D9E0F3}62 ± 11 &
  \cellcolor[HTML]{EFEFEF}\textbf{573 ± 107} &
  \multicolumn{1}{c|}{\cellcolor[HTML]{EFEFEF}\textbf{410 ± 64}} &
  \cellcolor[HTML]{EFEFEF}\textbf{96 ± 2} &
  \cellcolor[HTML]{D9E0F3} 46 ± 1 &
  \cellcolor[HTML]{D9E0F3}130 ± 20 \\ \hline
\end{tabular}}
\end{table}

\clearpage
\section{Additional experiments}
\subsection{Larger node-level datasets}\label{app:larger_datasets}
Table \ref{tab:large_nodes} additionally presents the experimental results on 4 heterophilic datasets introduced in \cite{platonov2023critical} (Amazon Ratings, Roman Empire, Minesweeper, and Questions). These represent larger node-level classification tasks than those shown in the main Table \ref{tab:sota}, with up to 48,921 nodes and 153,540 edges in the case of the Questions graph. Except on this precise dataset, which was not considered in previous TDL literature, we compare the results against CCNNs and hypergraph models from \citet{telyatnikov2024topobench}. We observe that overall GCCNs achieve similar performance than regular CCNNs, and they outperform them by a significant margin on Minesweeper.

\begin{table}[ht]
\begin{tabular}{@{}lcccc@{}}
\toprule
\textbf{}             & \multicolumn{1}{l}{\textbf{Amazon Ratings}} & \multicolumn{1}{l}{\textbf{Roman Empire}} & \multicolumn{1}{l}{\textbf{Minesweeper}} & \textbf{Questions} \\ \midrule
Best GCCN Cell        & 50.17 ± 0.71                                & 84.48 ± 0.29                              & 94.02 ± 0.28                             & 78.04 ± 1.34       \\
Best CCNN Cell        & \textbf{51.90 ± 0.15}                       & 82.14 ± 0.00                              & 89.42 ± 0.00                             & -                  \\ \midrule
Best GCCN Simplicial  & 50.53 ± 0.64                                & 88.24 ± 0.51                              & \textbf{94.06 ± 0.32}                    & 77.43 ± 1.33       \\
Best CCNN Simplicial  & OOM                                         & \textbf{89.15 ± 0.32}                     & 90.32 ± 0.11                             & -                  \\ \midrule
Best Hypergraph Model & 50.50 ± 0.27                                & 81.01 ± 0.24                              & 84.52 ± 0.05                             & -                  \\ \bottomrule
\end{tabular}
\caption{Results on larger node level datasets, each experiment run with 5 seeds. We report accuracy for Amazon Ratings and Roman Empire, and AUC-ROC for Minesweeper and Questions. The values for the best CCNNs and hypergraph models are extracted from TopoBench \citep{telyatnikov2024topobench}.} \label{tab:large_nodes}
\end{table}

\subsection{More advanced GNNs}\label{app:fancy_gnns}
We include a subset of experiments with the same protocol as in Table \ref{tab:sota} using GATv2 \citep{brody2021attentive} and PNA \citep{corso2020pna} in the cellular domain. Results show how the GCCNs built with these models perform consistently well across node-level and graph-level tasks on the cell domain, often
 $< 1 \sigma$ of the best (standard-GNN) GCCN as reported in Table \ref{tab:sota}, but only outperform them on MUTAG.
 \begin{table*}[ht!]
\caption{Cross-domain, cross-task, cross-expansion, and cross-$\omega_\mathcal{N}$ comparison of GCCN architectures built with GATv2 \citep{brody2021attentive} and PNA \citep{corso2020pna} with top-performing GCCNs from Table \ref{tab:sota} and benchmarked on TopoBench \citep{telyatnikov2024topobench}. Best result is in \colorbox{bestgray}{\textbf{bold}} and results within 1 standard deviation are highlighted \colorbox{stdblue}{blue}. Experiments are run with 5 seeds. We report accuracy for classification tasks and MAE for regression. \vspace{0.3cm} }
\label{tab:sota-fancy}
\centering
\renewcommand{\arraystretch}{1.5} 
\setlength{\tabcolsep}{3.9pt}
\resizebox{\textwidth}{!}{
\begin{tabular}{llcccccc}
\multicolumn{1}{l|}{} &
  \multicolumn{4}{c}{Graph-Level Tasks} &
  \multicolumn{3}{c}{Node-Level Tasks} \\
\multicolumn{1}{l|}{Model} &
  \multicolumn{1}{c}{MUTAG ($\uparrow$)} &
  PROTEINS ($\uparrow$) &
  NCI1 ($\uparrow$) &
  \multicolumn{1}{c|}{NCI109 ($\uparrow$)} &
  Cora ($\uparrow$) &
  Citeseer ($\uparrow$) &
  PubMed ($\uparrow$) \\ \hline
\multicolumn{8}{l}{\cellcolor[HTML]{EFEFEF}Cellular} \\ \hline
\multicolumn{1}{l|}{CCNN (Best Model on TopoBench)} &
  \cellcolor[HTML]{D9E0F3}80.43 ± 1.78 &
  \cellcolor[HTML]{D9E0F3}76.13 ± 2.70 &
  76.67 ± 1.48 &
  \multicolumn{1}{c|}{75.35 ± 1.50} &
  87.44 ± 1.28 &
  \cellcolor[HTML]{D9E0F3}75.63 ± 1.58 &
  88.64 ± 0.36 \\ \hline
\multicolumn{1}{l|}{GCCN $\omega_\mathcal{N}$ = Best Standard GNN} &
  \multicolumn{1}{c}{\cellcolor[HTML]{EFEFEF}\textbf{86.38 ± 6.49}} &
  74.41 ± 1.77 &
  \cellcolor[HTML]{EFEFEF}\textbf{78.23 ± 1.47} &
  \multicolumn{1}{c|}{\cellcolor[HTML]{D9E0F3}77.10 ± 0.83} &
  \cellcolor[HTML]{D9E0F3}88.57 ± 0.58 &
  \cellcolor[HTML]{D9E0F3}75.89 ± 1.84 &
  \cellcolor[HTML]{D9E0F3}89.40 ± 0.57 \\
\multicolumn{1}{l|}{GCCN $\omega_\mathcal{N}$ = PNA} &
  83.83 ± 6.31 &
  \multicolumn{1}{l}{73.91 ± 2.63} &
  \multicolumn{1}{l}{77.24 ± 1.72} &
  \multicolumn{1}{l|}{76.5 ± 0.88} &
  \multicolumn{1}{l}{87.27 ± 0.64} &
  \multicolumn{1}{l}{74.63 ± 1.32} &
  \multicolumn{1}{l}{86.34 ± 0.38} \\
\multicolumn{1}{l|}{GCCN $\omega_\mathcal{N}$ = GATv2} &
  \cellcolor[HTML]{EFEFEF}\textbf{86.38 ± 4.15} &
  \multicolumn{1}{l}{72.54 ± 3.3} &
  \multicolumn{1}{l}{77.78 ± 0.94} &
  \multicolumn{1}{l|}{77.04 ± 0.63} &
  \multicolumn{1}{l}{85.11 ± 0.46} &
  \multicolumn{1}{l}{72.03 ± 2.54} &
  \multicolumn{1}{l}{88.32 ± 0.38} \\ \hline
\end{tabular}}
\end{table*}

 \section{Performance versus Size Complexity} \label{app:allparams}
In, Fig. \ref{fig:allparams}, we extend Fig. \ref{fig:perform-param} to all benchmark datasets. As before, we keep GCCN layers and GNN sublayers in each subplot constant, matching those of  the best model of that dataset.

\begin{figure}[ht]
    \centering
    \includegraphics[width=\linewidth]{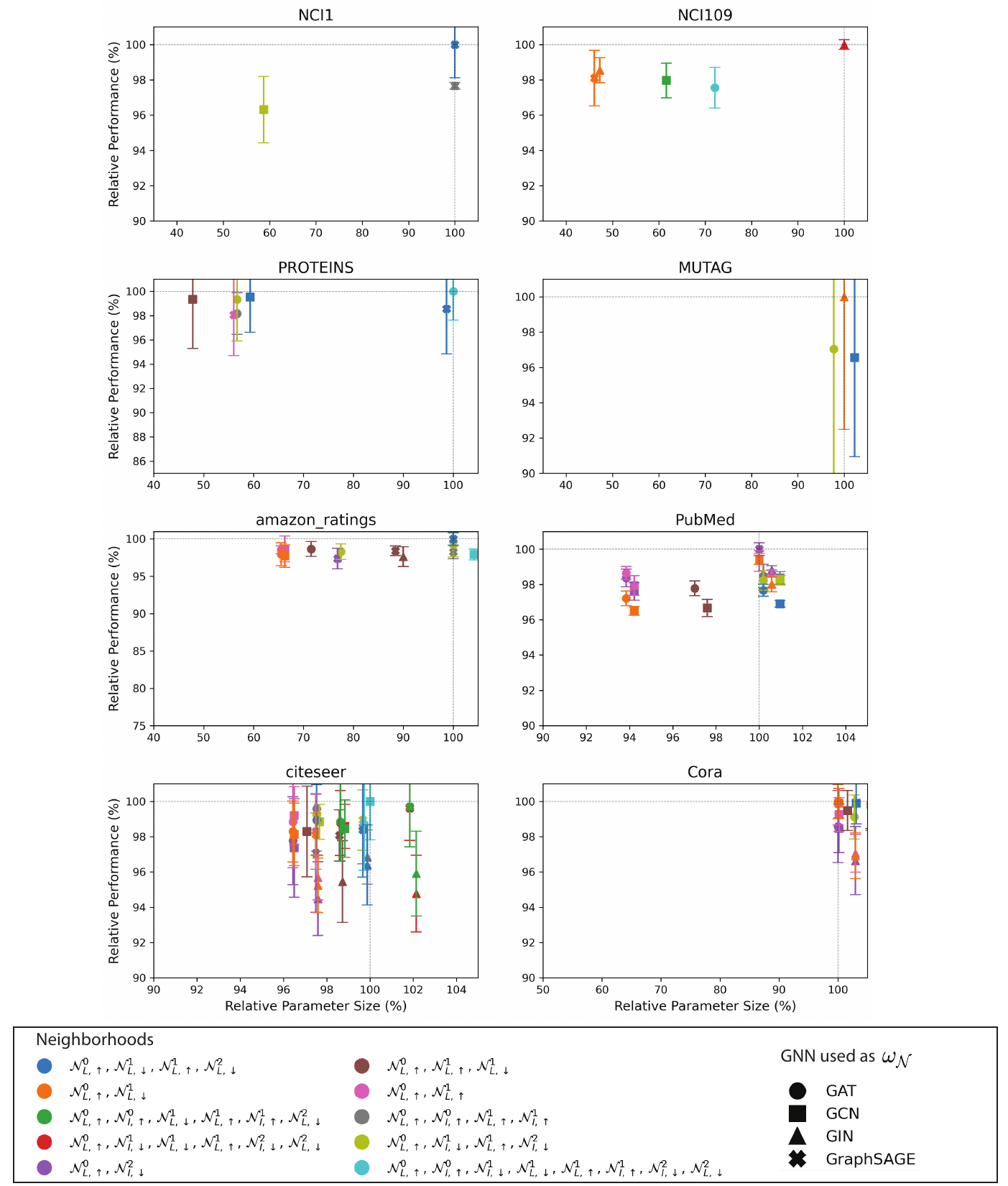}
    \caption{Performance versus size, scaled to best model. We consider models within 10\% of the best performance on each dataset. }
    \label{fig:allparams}
\end{figure}




\end{document}